\renewcommand{\algorithmicwhile}{\textbf{on client}}
\renewcommand{\algorithmicendwhile}{\textbf{end on client}}
\DeclareMathOperator{\Xt}{\widetilde{X}}
\DeclareMathOperator{\mut}{\widetilde{\mu}}
\DeclareMathOperator{\tr}{\mathrm{tr}}
\DeclareMathOperator{\EEE}{\mathbb{E}}
\DeclareMathOperator{\GG}{\mathcal{G}}
\DeclareMathOperator{\OO}{\mathcal{O}}
\DeclareMathOperator{\EE}{\mathcal{E}}
\DeclareMathOperator{\R}{\mathcal{R}}
\DeclareMathOperator{\RRR}{\mathbb{R}}
\DeclareMathOperator{\LL}{\mathcal{L}}
\DeclareMathOperator{\SSS}{\mathcal{S}}
\DeclareMathOperator{\AAA}{\mathcal{A}}
\DeclareMathOperator{\NN}{\mathcal{N}}
\DeclareMathOperator{\NNN}{\mathbb{N}}
\DeclareMathOperator{\diag}{\mathrm{diag}}
\newtheorem{theorem}{Theorem}
\newtheorem{assumption}{Assumption}
\newtheorem{lemma}{Lemma}
\newtheorem{remark}{Remark}
\newcommand{\OurAlg}{\ensuremath{\textsf{FedU}}\xspace}
\newcommand{\dFed}{\textsf{dFedU}\xspace}
\DeclarePairedDelimiterX{\norm}[1]{\lVert}{\rVert}{#1}
\DeclarePairedDelimiterX{\abs}[1]{\lvert}{\rvert}{#1}
\DeclarePairedDelimiterX{\innProd}[1]{\langle}{\rangle}{#1}
\def\BibTeX{{\rm B\kern-.05em{\sc i\kern-.025em b}\kern-.08em
		T\kern-.1667em\lower.7ex\hbox{E}\kern-.125emX}}
\begin{document}
	\bstctlcite{IEEEexample:BSTcontrol}

	\title{A New Look and Convergence Rate of Federated Multi-Task Learning with Laplacian Regularization}
	\author{
		Canh~T.~Dinh,
		Tung T. Vu\emph{, Member, IEEE}, 
		Nguyen~H.~Tran\emph{, Senior Member, IEEE}, 
		Minh N. Dao,
		\\ Hongyu Zhang\emph{, Senior Member, IEEE}
		\thanks{C.~T.~Dinh and N.~H.~Tran are with the School of Computer Science, The University of Sydney, Sydney, NSW 2006, Australia (email: \{canh.dinh,nguyen.tran\}@sydney.edu.au)}
		\thanks{T.~T.~Vu is with Institute of Electronics, Communications, and Information Technology (ECIT), Queen's University Belfast, Belfast BT3 9DT, UK (e-mail: t.vu@qub.ac.uk).}
		\thanks{M.~N.~Dao is with the School of Engineering, Information Technology and Physical Sciences, Federation University, Ballarat, VIC 3353, Australia (e-mail: m.dao@federation.edu.au).}
		\thanks{H.~Zhang is with the University of Newcastle, Callaghan, NSW 2308, Australia (e-mail: hongyu.zhang@newcastle.edu.au)}

		}
		

		\maketitle
		
		\begin{abstract}
			Non-Independent and Identically Distributed (non-IID) data distribution among clients is considered as the key factor that degrades the performance of federated learning (FL). Several approaches to handle non-IID data such as personalized FL and federated multi-task learning (FMTL) are of great interest to research communities. In this work, first, we formulate the FMTL problem using Laplacian regularization to explicitly leverage the relationships among the models of clients for multi-task learning. Then, we introduce a new view of the FMTL problem, which in the first time shows that the formulated FMTL problem can be used for conventional FL and personalized FL. We also propose two algorithms \OurAlg and \dFed to solve the formulated FMTL problem in communication-centralized and decentralized schemes, respectively. Theoretically, we prove that the convergence rates of both algorithms achieve linear speedup for strongly convex and sublinear speedup of order $1/2$ for nonconvex objectives. Experimentally, we show that our algorithms outperform the conventional algorithm FedAvg, {FedProx, SCAFFOLD, and AFL} in FL settings, MOCHA in FMTL settings, as well as pFedMe and Per-FedAvg in personalized FL settings.
		\end{abstract}
		\begin{IEEEkeywords}
			Federated multi-task learning, federated learning, personalized learning, Laplacian regularization.
		\end{IEEEkeywords}
		
		\vspace{-0mm}
		\section{Introduction}
\vspace{-0mm}
{Recently, federated learning (FL) has been considered as a promising distributed and privacy-preserving method for building a global model from a massive number of hand-held devices \cite{mcmahan17,kairouz21,Felix20,Felix21}. FL has a wide range of futuristic applications, such as detecting the symptoms of possible diseases (e.g., stroke, heart attack, diabetes) from wearable devices in health-care systems \cite{rieke20,xu20,brisimi18}, or predicting disaster risks from internet-of-things devices in smart cities \cite{jiang20,ahmed20}.
In FL, one of the key challenges is the naturally non-IID data distributions among clients \cite{sai20ICML,haddadpour19}. When the differences among clients' data distributions increase,
the generalization error of the FL global model on each client's local data significantly increases \cite{li19,deng20}.}

{Personalized FL \cite{canh20NIPS,fallah20arXiv} and federated multi-task learning (FMTL) \cite{smith17NIPS} have been proposed as  solutions to handle non-IID data distributions among clients. Personalized FL aims to build a global model that is
leveraged to find a ``personalized model'' for each client's local data.
Here, the global model is considered as an ``agreed point'' for each client to start personalizing its model based on its heterogeneous local data distribution. }
Different from personalized FL, FMTL aims to simultaneously learn separate models, which is motivated by multi-task learning frameworks \cite{kumar12,zhang10}. Each of these models fits the data distribution of  each client. Therefore, FMTL directly addresses the issue stemming from non-IID data distributions without building any global model as personalized FL.

{On the other hand, from the aspect of the local data at clients, it is observed that the clients with similar features (e.g, location, time, age, gender) are likely to share similar behaviors. Therefore, although the clients' models are separated, they are normally 
related to each other.}
In FMTL, the relationships among the clients' models are captured by a regularization term which is minimized to encourage the  clients' models to be mutually impacted.
Unfortunately, these relationships have not been clearly taken into consideration
in the FMTL problem. Moreover, communication-decentralized and non-convex FMTL algorithms with guaranteed convergence are generally less explored. 

The main contributions of this work are as follows:
\vspace{-0mm}
\begin{itemize}[noitemsep,nolistsep]
	\item We formulate a FMTL problem using Laplacian regularization to explicitly leverage the relationships among the models of clients. 
	We then  introduce a new view
	of the FMTL problem
	that the formulated FMTL problem can be used not only for the conventional FL but also personalized FL.
	\item We propose a communication-centralized FMTL algorithm \OurAlg, and its decentralized version \dFed to solve the formulated FMTL problem.  We also  analyze the convergence rate of FMTL algorithms with both convex and nonconvex objective functions. In particular, \OurAlg and \dFed are proved to achieve a linear speedup (resp. sublinear speedup of order $1/2$) for strongly convex (resp. nonconvex) objective cases.
	\item 
	We empirically evaluate the performance of \OurAlg and \dFed using real datasets that capture the non-IID data distribution among clients. We show that in terms of local accuracy, \OurAlg and \dFed outperform the  traditional algorithm FedAvg in FL settings, the conventional algorithm MOCHA in FMTL settings, as well as pFedMe and Per-FedAvg in personalized FL settings.
\end{itemize}


\vspace{-0mm}
\section{Related Work}
\vspace{-0mm}

\textbf{Federated Learning.}
One of the earliest work of FL is FedAvg \cite{mcmahan17}, which builds the global model based on averaging
the local Stochastic Gradient Descent (SGD) updates. Various methods \cite{li20, zhao_federated_2018, haddadpour19, li_convergence_2020, khaled20AIS} are introduced to improve the robustness of the global model under non-i.i.d settings. For example, FedProx \cite{li20} adds a proximal term to the local objective, therefore addressing the statistical heterogeneity of clients.

\textbf{Personalized Federated Learning.}
Several personalized FL approaches have been proposed to tackle the issues sterming form non-IID data in the conventional FL. Mixture methods \cite{hanzely_federated_2020,deng20}  attempted to combine a local model with the global model, while \cite{liangThinkLocallyAct2020} applied this mixing to jointly learns compact local representations on each client and a global model across all clients.  
Motivating by
creating a well-generalized global model to quickly adapt to client's data after few gradient descent steps, pFedMe \cite{canh20NIPS} used Moreau envelopes, while Per-FedAvg \cite{fallah20arXiv} took advances of meta learning approaches: model-agnostic meta-learning  \cite{finn17ICML}. \cite{jiang_improving_2019} proposed the combination of FedAvg and Reptile \cite{nichol_first-order_2018} to improve FL personalization. A different personalized FL approach to train deep neural networks is FedPer \cite{arivazhagan_federated_2019}. Clients share a set of base layers with a server and keep personalization layers that adapt quickly to the local data.

\textbf{Federated Multi-Task Learning.} 
Another approach to deal with the non-IID data distributions at clients is learning separate
models each of which fits
each local data distribution. 
In this sense, FMTL was first introduced in \cite{smith17NIPS} where a systems-aware optimization framework MOCHA for handling stragglers and fault tolerance in FL settings is proposed. 
Besides that, there are also several other works studying FMTL. 
\cite{yasmin21arXiv} proposed a framework for generalized total variation minimization, which is useful in FMTL networks.
\cite{li20arXiv} introduced a FMTL algorithm to deal with the issues of accuracy, fairness and robustness in FL.
By treating the FL network as a star-shaped Bayesian network, \cite{luca19arXiv} developed a FMTL algorithm using approximated variational inference.
\cite{rui19ICBD} focused on a FMTL algorithm for online applications.
However, in all these works, the convergence rate of FMTL with nonconvex objectives has not been studied. Moreover, the relations among the problems of FMTL, the standard FL, and personalized FL are not yet investigated in the literature.

		\label{Intro}
		
		\vspace{-0mm}
		\section{Federated Multi-Task Learning: A New View}
		\vspace{-0mm}
		\subsection{The Formulation of the FMTL Problem with Laplacian Regularization}
		\vspace{-0mm}

		In this work, the goal of FMTL is to fit separate models (i.e., $w_k\in\RRR^d, \forall k\in\NN$) to the local data of clients, taking into account the relationships among these models.
		For instance, smart-device clients in a mobile network are trying to learn their activities using their personal and private data (e.g., image, text, voice, and sensor data). In FL settings, their data may come from different environments, contexts, and applications, and thus, have non-IID distributions.
		Despite of this, these clients are likely to behave similarly under similar features or scenarios 
		(e.g., location, time, age). Therefore, there normally exist relationships 
		among the models of clients \cite{argyriou08,ando05,caruana97}.
		
		{To present the relationships among the models of clients, we consider a connected graph $\GG=\{\NN,\EE,A\}$, where $\NN:=\{1,\dots,N\}$ is the set of vertices representing federated learning clients, $\EE$ is the set of edges representing relationships among the models of clients, and $A\in\RRR^N$ is a symmetric, weighted adjacency matrix with $a_{k\ell} := [A]_{k\ell}$. The relationship between clients $k$ and $\ell$ is presented by $a_{k\ell}$ and reversible, i.e., $a_{k\ell} = a_{\ell k}, \forall k, \ell$. Here,} $a_{k\ell}=0$ means no relationship between the models of clients $k$ and $\ell$. The value of $a_{k\ell} > 0$ shows
		that client $k$ is a neighbor of client $\ell$ and also determines the strength of the relationship between
		these two clients' models.
		Let $D\in\RRR^N$ be a diagonal matrix in which $[D]_{kk}=\sum\nolimits_{\ell=1}^N a_{k\ell}$. The Laplacian matrix of the graph is thus $L=D-A$.
		
		
		Let $W=[w_1^T,...,w_N^T]^T\in\RRR^{dN}$ be a collective model vector and $\LL := L\otimes I_d$ be a Laplacian regularization matrix. Now, we formulate the following FMTL problem:
		\begin{align}\label{mainP:FMTL}
			\underset{W}{\min} \,\,
			J(W)\!=\!\underbrace{F(W)}_{\text{Global loss}}\! +\! \underbrace{\eta R(W)}_{\text{Laplacian regularization}},
		\end{align}
		where 
		\begin{align}
			&F(W)\!=\!
			\sum\nolimits_{k=1}^N\!F_k(w_k),
			\\
			\label{regularization}
			&\R(W) = W^T\LL W=\frac{1}{2}\sum\nolimits_{k=1}^N
			\sum\nolimits_{\ell\in\NN_k}a_{k\ell}||w_k-w_\ell||^2,
		\end{align}
		$\NN_k = \NN\setminus \{k\}$, and $\|\cdot\|$ is the Euclidean norm.
		$F_k(\cdot)$ represents the expected loss function at client $k$:
		\begin{align*}\label{Fk}
			F_k(w_k)=
				\EEE_{\zeta_{k}}[f_{k}(w_k;\zeta_{k})],
		\end{align*}
		where $\zeta_k$ is a random data sample drawn from
		the distribution of client $k$ and  $f_{k}(w_k;\zeta_{k})$ is the regularized loss function corresponding to this sample and $w_k$.
		The distribution of $\zeta_k$ and $\zeta_\ell$ can be distinct when $k\neq \ell$.
		
		{Note that in our work, we do not extract the similarity of the existing relationships between the clients by any visualization methods in order to develop our proposed method. Instead, we present the existing relationships among the models of the clients by a Laplacian regularization matrix $\LL$ and put it into the Laplacian regularization term in the objective function of the federated multitask-learning problem \eqref{mainP:FMTL}. 
		Theoretically, in \eqref{mainP:FMTL}, $\eta\geq 0$ is a regularization hyperparameter that controls the impact of the models of neighboring clients on each local model. If $\eta=0$, \eqref{mainP:FMTL} turns to an individual learning problem where each client learns its local model $w_k$ based on its own local data without collaboration with server or other clients. If $\eta > 0$, minimizing the Laplacian regularization term encourages the models of the neighboring clients to be close to each other. The impacts of the existing relationship between the models of the clients on the performance of our proposed algorithms will be shown in the later  section of experiment.}
		
		\begin{remark}
			There are other methods of regularization to encourage the models of the neighboring clients to be close to each other, e.g., using $||w_k\!-\!w_\ell||$ instead of $||w_k\!-\!w_\ell||^2$ in \eqref{regularization} as Network Lasso does \cite{jung21SPL,jung19SPL,david15},
			or using $\tr(\widehat{W}\Omega \widehat{W}^T)$ instead of \eqref{regularization} as MOCHA does \cite{smith17NIPS},
			where $\widehat{W}:=[w_1,\dots,w_N]\in\RRR^{d\times N}$. 
			On the other hand, problem \eqref{mainP:FMTL} is a generalization of the problem in \cite{hanzely20arXiv} where several algorithms are developed for strongly convex objectives. Problem \eqref{mainP:FMTL} is also similar to the generalized total variation minimization problem \cite{yasmin21arXiv} which is solved by a primal-dual method for convex objectives. 
			\cite{paul17} has a convex version of problem \eqref{mainP:FMTL} which is solved by a decentralized algorithm using Alternating Direction Method of Multipliers (ADMM). 
			In \eqref{mainP:FMTL}, we present the FTML problem using the Laplacian regularization matrix $\LL$. Utilizing the special properties of $\LL$, we
			successfully design FMTL algorithms using SGD. Importantly, our algorithms can work (i) in 
			both centralized and decentralized communication schemes, and (ii) with both strongly convex and nonconvex objective functions. 
		\end{remark}

		\begin{assumption}[Smoothness]
			\label{assump:smooth}
			For each $k\in\NN$, $F_k$ is $\beta$-smooth, i.e., for any $w, w'\in\RRR^d$,
			\begin{align*}
				\|\nabla F_k(w)-\nabla F_k(w')\|\leq \beta \|w-w'\|.
			\end{align*}
		\end{assumption}

		\begin{assumption}[Strong convexity]
			\label{assump:strongconvex}
			For each $k\in\NN$, $F_k$ is $\alpha$-strongly convex, i.e., for any $w, w'\in\RRR^d$,
			\begin{align*}
				F_k(w)\geq F_k(w')+\left\langle\nabla F_k(w'),w-w'\right\rangle
				+\frac{\alpha}{2}\|w-w'\|^2.
			\end{align*}
		\end{assumption}
		
		\begin{assumption}[Bounded variance]
			\label{assump:boundvariance}
			The set of $\nabla \widetilde{F}_k(w,\zeta_k)$, $k\in\NN$ is unbiased stochastic gradients of $\nabla F_k(w)$, $k\in\NN$, with total variance bounded by $\sigma_1^2$, i.e., for any $W\in\RRR^{dN}$,
			\begin{align*}
				\sum\nolimits_{k=1}^N\EEE_{\zeta_k}\|\nabla\widetilde{F}_k(w_k,\zeta_k)\!-\!\nabla F_k(w_k)\|^2\leq \sigma_1^2.
			\end{align*}
		\end{assumption}
		\vspace{-0mm}
		We note that Assumption~\ref{assump:boundvariance} is weaker than the assumption of individual bounded variance that is used at each client in FL and personalized FL problems \cite{sai20ICML,canh20NIPS,fallah20arXiv}.
		It should also be noted that \eqref{mainP:FMTL} shares some similarities to the multi-task learning problem of \cite{nassif20OJSP,nsassif20partII}. However, the latter requires that each $F_k(w_k)$ is twice differential with the Hessian $\nabla_{w_k}^2 F_k(w_k)$ uniformly bounded from below and above, which is more restrictive than our assumptions. Moreover, this problem does not take into account the issue of non-IID data distributions among clients, and thus it is not formulated for FL settings.

		\begin{figure}[t!]
			\centering
			\includegraphics[width=0.47\textwidth]{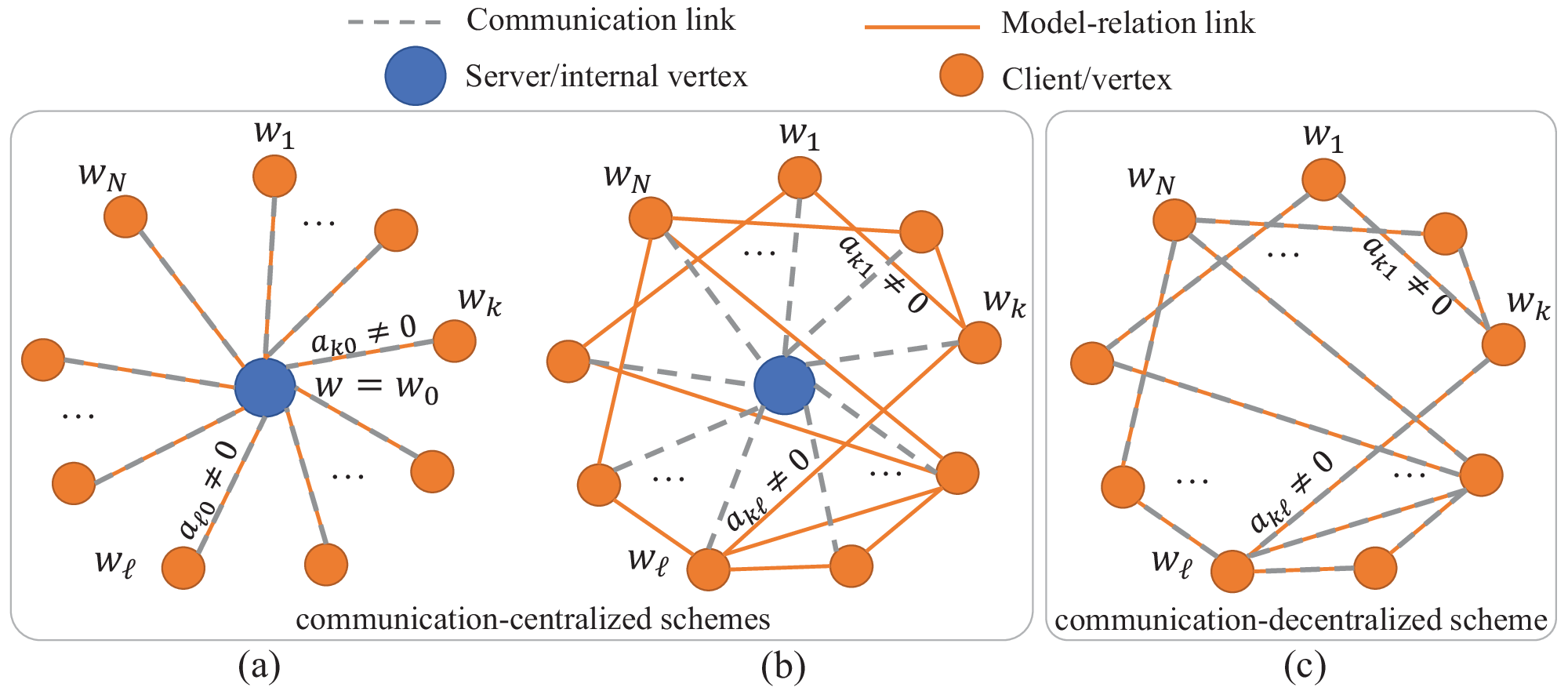}
			\vspace{-0mm}
			\caption{Illustrations of undirected weighted graphs in FL. (a): Star graph with a server for traditional FL and personalized FL; (b) and (c): Entity graph with and without server for FMTL}
			\label{fig:graph}
			\vspace{-5mm}
		\end{figure}
		
		\vspace{-0mm}
		\subsection{A New View of the FMTL Problem}
		\vspace{-0mm}
		
		\label{sec:unifiedproblem}
		We first observe that in conventional FL and personalized FL, all clients connect to a server under
		a communication-centralized scheme shown in Figure~\ref{fig:graph}(a). 
		The relationships among the models of the clients and the server are 
		presented by a star graph. 
		In this graph, a server is considered as a virtually internal vertex $0$ with its loss function $F_0=0$ and a model $w_0$. Here, all the models of clients are only related to the server model $w_0$, i.e., $a_{k0}> 0,\forall k$, but not with each other,  i.e., $a_{k\ell}=0,\forall k,\ell \neq 0$. In this work, we assume that the weights $a_{k\ell}$ are known and focus on the development of FMTL algorithms to solve problem \eqref{mainP:FMTL}. The finding of $a_{k\ell}$ in specific learning applications are referred to \cite{tuck19arXiv,tuck20arXiv}. In what follows, we show that the formulated FMTL problem \eqref{mainP:FMTL} can be used for the conventional FL and some types of personalized FL. For a more general optimization problem of personalized FL, we refer to LSGD-PFL \cite{hanzely21arXiv}.
		

		\textbf{Relation of FMTL to conventional FL:}
		The objective function of \eqref{mainP:FMTL} can be seen as a Lagrangian function of the following problem
		\begin{align}
			\min_{W} \sum\nolimits_{k=1}^N F_k(w_k) ,\,\text{s.t.}\,\, w_1 \!= w_2= \!\dots\! =\! w_N,
		\end{align}
		which is equivalent to the conventional FL problem (FedAvg) \cite{mcmahan17}. Therefore, the solution of the conventional FL problem can be obtained by solving \eqref{mainP:FMTL}.
		
		
		\textbf{Relation of FMTL to personalized FL with Moreau envelopes (pFedMe):} The problem of pFedMe \cite{canh20NIPS} is formulated as 
		\begin{align}
			\label{mainP:pFedMe}
			\underset{w}{\min} \,\,
			J(w)=\sum\nolimits_{k=1}^N\tilde{J}_k(w),
		\end{align}
		where $\tilde{J}_k(w)=\underset{z_k}{\min} \,\,  F_k(z_k)
		+ \frac{\eta}{2}||z_k-w||^2$. We observe that
		\begin{align}
			\nonumber
			J(w)
			&=\sum\nolimits_{k=1}^N \min_{z_k} \left(F_k(z_k) +\frac{\eta}{2}\|z_k-w\|^2\right)
			\\
			&=\min_{z_1,\dots,z_N} \sum\nolimits_{k=1}^N \left(F_k(z_k) +\frac{\eta}{2}\|z_k-w\|^2\right).\nonumber
		\end{align}
		Therefore, \eqref{mainP:pFedMe} is equivalent to the following problem with $z_0 =w$ and $F_0 \equiv 0$:
		\begin{equation*}
			\min_{z_0,z_1,\dots,z_N} \sum\nolimits_{k=0}^{N} F_k(z_k) +\frac{\eta}{2}\sum\nolimits_{k=0}^N \|z_k-z_0\|^2,
		\end{equation*}
		which is a special case of \eqref{mainP:FMTL} with the star graph topology and $a_{k0}=1,\forall k\in\NN$.
		
		\textbf{Relation of FMTL to meta-learning-based personalized FL (Per-FedAvg):} The problem of Per-FedAvg \cite{fallah20arXiv} is given by
		\begin{align}
			\label{mainP:PerFedAvg}
			\underset{w}{\min} \,\,
			& J(w)=\sum\nolimits_{k=1}^N\!F_k(w-\mu\nabla F_k(w)),
		\end{align}
		where $\mu>0$ and each $F_k$ is assumed to be $L_k$-Lipschitz continuous.
		Set $w_k =w-\mu\nabla F_k(w)$ and $\ell_k =\frac{L_k}{2}$, $k\in\NN$. Using Lemma~1.2.3 in \cite{nesterov18book} twice, we have that, for $\mu <\min_{k} \ell_k$ and for all $z_k\in \RRR^d$,
		\begin{align*}
			F_k(w_k) &\leq F_k(w) +\langle\nabla F_k(w),w_k-w\rangle +\ell_k\|w_k-w\|^2 \\
			&= F_k(w) -(\mu-\ell_k\mu^2)\|\nabla F_k(w)\|^2 \\
			&\leq F_k(z_k) +\langle\nabla F_k(w),z_k-w\rangle +\ell_k\|z_k-w\|^2 \\
			&\quad -(\mu-\ell_k\mu^2)\|\nabla F_k(w)\|^2 \\
			&= F_k(z_k) +a_{k0}\|z_k-w\|^2 \\
			&\quad -(\mu-\ell_k\mu^2)\left\|\nabla F_k(w) -\frac{z_k-w}{2(\mu-\ell_k\mu^2)}\right\|^2,
		\end{align*}
		where $a_{k0} :=\ell_k+\frac{1}{4(\mu-\ell_k\mu^2)}$. Hence,
		\begin{align*}
			F_k(w_k) \leq \min_{z_k}\left(F_k(z_k) +a_{k0}\|z_k-w\|^2\right),
		\end{align*}
		which implies that
		\begin{align*}
			J(w)
			&\leq \sum\nolimits_{k=1}^N
			\min_{z_k}\left(F_k(z_k) +a_{k0}\|z_k-w\|^2\right) \\
			&= \min_{z_1, \dots, z_N} \sum\nolimits_{k=1}^N \left(F_k(z_k) +a_{k0}\|z_k-w\|^2\right).
		\end{align*}
		Now, \eqref{mainP:PerFedAvg} can be solved through its following epigraph problem with $z_0 = w$ and $F_0 = 0$:
		\begin{equation*}
			\min_{z_0,z_1,\dots,z_N} \sum\nolimits_{k=0}^{N} F_k(z_k)
			+\frac{\eta}{2}
			\sum\nolimits_{k=0}^N a_{k0}\|z_k-z_0\|^2,
		\end{equation*}
		which is also a special case of \eqref{mainP:FMTL} with the star graph topology and $a_{k0}=1,\forall k\in\NN$.

		\section{Federated Multi-Task Learning: Algorithms}
		\vspace{-0mm}
		\label{sec:alg}
		\begin{algorithm}[!t]
			\caption{\OurAlg }
			\begin{algorithmic}[1]
				\label{alg:2SSGD}
				\STATE \textbf{client $k$'s input}: local step-size $\mu$
				\STATE \textbf{server's input}: graph information $\{a_{k\ell}\}$, initial $w_{k}^{(0)}, \forall k\in\NN$, and global step-size $\mut=\mu R$
				\FOR{each round $t = 0,\dots,T-1$}
				\STATE server uniformly samples a subset of clients $\SSS^{(t)}$ of size $S$ and sends $w_k^{(t)}$ to client $k,\forall k\in\SSS^{(t)}$
				\renewcommand{\algorithmicdo}{\textbf{in parallel do}}
				\WHILE{$k\in\SSS^{(t)}$}
				\renewcommand{\algorithmicdo}{\textbf{do}}
				\STATE initialize local model $w_{k,0}^{(t)}\leftarrow w_{k}^{(t)}$
				\FOR{$r=0,\dots,R-1$}
				\STATE compute mini-batch gradient $\nabla \widetilde{F}_k(w_{k,r}^{(t)})$
				\STATE $w_{k,r+1}^{(t)}\leftarrow w_{k,r}^{(t)} - \mu \nabla \widetilde{F}_k(w_{k,r}^{(t)})$
				\ENDFOR
				\STATE send $w_{k,R}^{(t)}$ to the server
				\ENDWHILE
				\renewcommand{\algorithmicwhile}{\textbf{on server}}
				\renewcommand{\algorithmicendwhile}{\textbf{end on server}}
				\renewcommand{\algorithmicdo}{\textbf{do}}
				\WHILE{$\!\!$}
				\STATE $w_{k,R}^{(t)} \leftarrow  w_{k}^{(t)}$, $\forall k\notin\SSS^{(t)}$
				\STATE $w_{k}^{(t+1)}\!\leftarrow\!w_{k,R}^{(t)}-\mut\eta\sum\nolimits_{\ell\in\NN_k
				} a_{k\ell}(w_{k,R}^{(t)}\!-\!w_{\ell,R}^{(t)})$,  $\forall k\in\SSS^{(t)}$
				\STATE $w_{k}^{(t+1)} \leftarrow  w_{k}^{(t)}$, $\forall k\notin\SSS^{(t)}$
				\ENDWHILE
				\ENDFOR
			\end{algorithmic}
		\end{algorithm}

		\subsection{\OurAlg: Communication-Centralized Algorithm}
		\vspace{-0mm}
		In this section, we propose an algorithm \OurAlg, which is presented in Algorithm~\ref{alg:2SSGD}, to solve the formulated FL problem \eqref{mainP:FMTL} under
		the
		communication-centralized scheme. Here, we use an entity graph to capture the relationships among the models of clients as shown in Figure~\ref{fig:graph}(b).\footnote{In an entity graph, each vertex is a value of an entity (e.g., a person) and an edge (e.g., friendship) between two entities exists if these entities are perceived to be similar \cite{tuck19arXiv}.}
		First, the server uniformly samples a subset of clients $\SSS^{(t)}$ and sends the latest update of local model $w_k$ to each client $k, \forall k\in\SSS^{(t)}$. Then, after $R$ local update steps are performed, the server receives the latest local update from the sampled clients to perform model regularization
		for each local model.
		
		Note that in the entity graph, the models of clients are only related to other models but not to
		any server model, 
		as in the star graph of the conventional FL and personalized FL.
		Therefore, \OurAlg has a key difference compared to the conventional FL algorithms (e.g., FedAvg \cite{mcmahan17}) and the personalized FL algorithms (e.g., pFedMe \cite{canh20NIPS}, and Per-FedAvg \cite{fallah20arXiv}). Instead of updating the personalized models only at the clients using a global model from the server, \OurAlg directly updates each local model at both client and server sides without building a global model.

		Specifically, as shown in Figure~\ref{fig:updatesteps}, in each communication round, each client $k\in\SSS^{(t)}$ copies its current local model received from the server: $w_{k,0}^{(t)}=w_k^{(t)}$, and perform $R$ local updates of the form:
		\begin{align*}
			w_{k,r+1}^{(t)}\leftarrow w_{k,r}^{(t)} - \mu \nabla \widetilde{F}_k(w_{k,r}^{(t)}),
		\end{align*}
		where $\mu$ is the local step-size.
		Then server receives $\{w_{k,R}^{(t)}\}$ from sampled clients $k\in\SSS^{(t)}$, and updates
		\begin{align*}
			w_{k,R}^{(t)} \leftarrow  w_{k}^{(t)},
		\end{align*}
		for any non-sampled client $k\notin\SSS^{(t)}$. Finally, the server performs its regularization update for any sampled client $k\in\SSS^{(t)}$ as
		\begin{align*}
			\!\! w_{k}^{(t+1)}&\leftarrow w_{k,R}^{(t)}-\mut\eta\sum\nolimits_{\ell\in\NN_k\cap \SSS^{(t)}}
			a_{k\ell}(w_{k,R}^{(t)}-w_{\ell,R}^{(t)}),
		\end{align*}
		and for any non-sampled client $k\notin\SSS^{(t)}$ as
		\begin{align*}
			w_{k}^{(t+1)} \leftarrow  w_{k}^{(t)},
		\end{align*}
		where $\mut=\mu R$ is a global step-size. This step finishes one round of communication.

		{The mechanism of \OurAlg is explained with $N=2$ example clients as seen in Figure~\ref{fig:updatesteps}. The two clients are the neighbors of each other and share a certain similarity model. Let $(w_1^*, w_2^*)$ be the global solution (true optimum or true opt.) to problem \eqref{mainP:FMTL}, which is presented by orange squares. Denote by $(\widehat{w}_1^*, \widehat{w}_2^*)$ be the local solution (client optimum or client opt.) that obtains the minimum of the local lost function $F_k(w_k)$, which is presented by blue squares. In the case of non i.i.d data, $\widehat{w}_1^*$ and $\widehat{w}_2^*$ are far away from each other, and $(\widehat{w}_1^*, \widehat{w}_2^*)$ is also far away from $(w_1^*, w_2^*)$. At round $t$, after making $R=3$ local updates, the updated models $(w_{1,R}^{(t)}, w_{2,R}^{(t)})$ (blue circles) are moved closer to $(\widehat{w}_1^*, \widehat{w}_2^*)$. Then, we make a further step of regularization update in order to move $w_{1,R}^{(t)}$ toward $\widehat{w}_2^*$ and also move $w_{2,R}^{(t)}$ toward $\widehat{w}_1^*$, which finally makes the updated model after round $t$, i.e., $(w_{1}^{(t+1)}, w_{2}^{(t+1)})$, closer to $(w_1^*, w_2^*)$. By doing local and regularization updates in each round, the converged solution of \OurAlg will be $(w_1^*, w_2^*)$.}
			
		\begin{figure}[!t]
			\centering
			\includegraphics[width=0.47\textwidth]{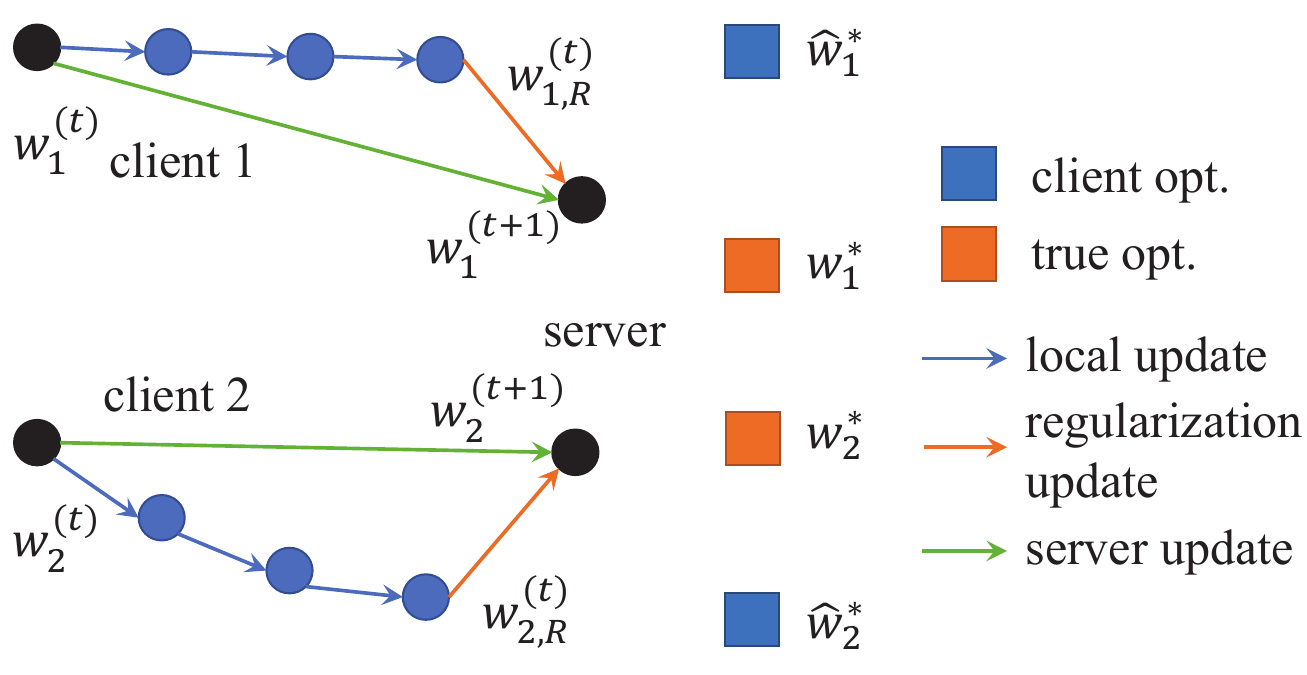}
			\vspace{-0mm}
			\caption{
				{The update
		steps of \OurAlg at both client and server sides are illustrated for $2$ related tasks (clients) with $3$ local steps $(N=2,R=3)$ at round $t$. The local updates $w_{k,r}^{(t)}$ (blue circles) move towards the client optima $\widehat{w}_{k}^*$ (blue square). The regularization updates (in orange) ensures the server update (in green) moves towards the true optimum $w_{k}^*,\forall k\in\NN$ (orange square).} }
			\label{fig:updatesteps}
			\vspace{-0mm}
		\end{figure}
		
		\vspace{-0mm}
		\subsection{\dFed: Decentralized Version of \OurAlg}
		\vspace{-0mm}
		We note that the server in \OurAlg needs to known all the graph information $\{a_{k\ell}\}$. This requirement can be achieved by letting all the clients 
		send the information of their neighbors to the server at the beginning of the learning process. However, in a network of massive clients (e.g., thousands), it might be impractical to maintain all the information 
		of the graph (e.g., vertices, weighted edge) as well as storage for all model updates at the server. This motivates us to propose $\dFed$, which is a decentralized version of \OurAlg, and 
		presented in Algorithm~\ref{alg:dFed}.

		Specifically, in each communication round,
		each client of an entity graph (as shown in Figure~\ref{fig:graph}(c))
		performs $R$ local updates, and sends its updated model to their neighboring clients to perform the model regularization. {Here, each client does not need to communicate with the rest of the large number of clients in the whole network. Each client only needs to communicate with its neighbor clients. A client $\ell$ is a neighbor of client $k$ if and only if it has a communication link (i.e., $a_{k\ell} \neq 0$) and share a certain model similarity with client $k$ (i.e., $a_{k\ell} > 0$). The set of neighboring clients of client $k$ is defined as $\widetilde{\NN}_k = \{\ell \,\,|\,\, a_{k\ell} > 0\}$. 
		}
	    Note that because there is no server 
		for
		coordinating the learning, there is no client sampling in \dFed. Compared to the non-FL decentralized scheme \cite{nassif20OJSP,nsassif20partII}, \dFed uses $R$ local updates, which are typical in FL algorithm designs.

		\vspace{-0mm}
		\section{Federated Multi-Task Learning:\\ Convergence Rate}
		\vspace{-0mm}
		\label{rateofconv}
		In this section, we present the convergence rate of \OurAlg and \dFed.
		Let $W^*=[w_1^*,\dots,w_N^*]$ be the optimal solution to \eqref{mainP:FMTL}.
		\vspace{-0pt}
		\begin{lemma}
			\label{lemma:Boundedgradient}
			Suppose that Assumption~\ref{assump:smooth} holds and $\eta\rho > 2\beta$, where $\rho := \|\LL\|$. Then there exists $\sigma_2\geq 0$, e.g., $\sigma_2=\|\nabla F(0)\|\sqrt{\frac{\eta\rho}{\eta\rho-2\beta}}$ such that, for any $W\in\RRR^{dN}$,
			\begin{align}\label{sigma3main}
				\sum\nolimits_{k=1}^N \|\nabla F_k(w_k)\|^2 \!\leq\! \sigma_2^2 \!+\! \sum\nolimits_{k=1}^N\|\nabla_{w_k} J(W)\|^2,
			\end{align}
			where $\nabla_{w_k} J(W)$ is the gradient of $J$ with respect to $w_k$.
			Consequently, if every $F_k$ is convex, then
			\begin{align}\label{sigma2main}
				\sum\nolimits_{k=1}^N \|\nabla F_k(w_k^*)\|^2 \!\leq\! \sigma_2^2.
			\end{align}
		\end{lemma}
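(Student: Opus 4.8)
The plan is to establish the general bound \eqref{sigma3main} first and then read off \eqref{sigma2main} by evaluating at $W^\ast$. Write $\nabla F(W)=[\nabla F_1(w_1)^T,\dots,\nabla F_N(w_N)^T]^T$, so that $\sum_k\|\nabla F_k(w_k)\|^2=\|\nabla F(W)\|^2$ and $\sum_k\|\nabla_{w_k}J(W)\|^2=\|\nabla J(W)\|^2$. Since $R(W)=W^T\LL W$ with $\LL$ symmetric, $\nabla R(W)=2\LL W$ and hence $\nabla J(W)=\nabla F(W)+2\eta\LL W$. The first move I would make is purely algebraic: expanding the square gives $\|\nabla F(W)\|^2-\|\nabla J(W)\|^2=-4\eta\langle\nabla F(W),\LL W\rangle-4\eta^2\|\LL W\|^2$, so \eqref{sigma3main} is equivalent to the one-sided estimate $4\eta\langle\nabla F(W),\LL W\rangle+4\eta^2\|\LL W\|^2\ge-\sigma_2^2$ for all $W$. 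The quadratic term is nonnegative, so the entire content of the lemma is a lower bound on the cross term $\langle\nabla F(W),\LL W\rangle$.

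To produce that lower bound I would use the two structural ingredients at hand. First, $\LL=L\otimes I_d$ is positive semidefinite with $\|\LL\|=\rho$, which yields $\0\preceq\LL\preceq\rho I$ and $\LL^2\preceq\rho\LL$, i.e.\ $\|\LL W\|^2\le\rho\,W^T\LL W$. Second, Assumption~\ref{assump:smooth} makes $W\mapsto\nabla F(W)$ a $\beta$-Lipschitz map, $\|\nabla F(W)-\nabla F(W')\|\le\beta\|W-W'\|$. Splitting $\nabla F(W)=\nabla F(\0)+\bigl(\nabla F(W)-\nabla F(\0)\bigr)$ and pairing each summand against $\LL W$, the idea is to absorb the increment term into the positive quadratic $4\eta^2\|\LL W\|^2$ via Young's inequality with a tuned weight, while controlling the residual by $\langle\nabla F(\0),\LL W\rangle\ge-\|\nabla F(\0)\|\,\|\LL W\|$. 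The hypothesis $\eta\rho>2\beta$ is exactly what keeps the net coefficient of the quadratic term positive after this absorption, so the expression becomes coercive in $\|\LL W\|$ and is bounded below. Optimizing the resulting scalar quadratic (completing the square) produces a self-consistent inequality of the shape $\sigma_2^2\le\|\nabla F(\0)\|^2+\tfrac{2\beta}{\eta\rho}\,\sigma_2^2$, whose solution is precisely $\sigma_2^2=\|\nabla F(\0)\|^2\,\tfrac{\eta\rho}{\eta\rho-2\beta}$, matching the stated value; here $\tfrac{2\beta}{\eta\rho}<1$ plays the role of a contraction factor.

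The step I expect to be the main obstacle is exactly this lower bound on $\langle\nabla F(W),\LL W\rangle$, and the difficulty is the degeneracy of $\LL$. On the kernel of $\LL$ (the consensus directions $w_1=\dots=w_N$) one has $\LL W=\0$ while $\nabla F(W)$ need not vanish, so a naive Cauchy--Schwarz bound $|\langle\nabla F(W)-\nabla F(\0),\LL W\rangle|\le\beta\|W\|\,\|\LL W\|$ is useless because $\|W\|$ is not controlled by $\|\LL W\|$ along the kernel. Smoothness alone therefore cannot prevent the cross term from drifting negative, and I expect convexity of the $F_k$ to be essential — used through monotonicity/co-coercivity of $\nabla F$ to sign-control the increment term in these directions rather than merely majorizing it. This is consistent with \eqref{sigma2main} being stated under convexity, and it is the place where the argument must be made carefully rather than by routine inequalities.

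Granting \eqref{sigma3main}, the corollary \eqref{sigma2main} is immediate. When every $F_k$ is convex, $J=F+\eta R$ is convex, so its minimizer $W^\ast$ is a stationary point and $\nabla J(W^\ast)=\0$, giving $\sum_k\|\nabla_{w_k}J(W^\ast)\|^2=0$. Substituting $W=W^\ast$ into \eqref{sigma3main} collapses the right-hand side to $\sigma_2^2$ and yields $\sum_k\|\nabla F_k(w_k^\ast)\|^2\le\sigma_2^2$, as claimed.
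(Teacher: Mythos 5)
Your reduction of \eqref{sigma3main} to a lower bound on the cross term $\langle\nabla F(W),\LL W\rangle$ is the same first move the paper makes (the paper works with the convention $\nabla J(W)=\nabla F(W)+\eta\LL W$, so its target is $-2\eta\langle\nabla F(W),\LL W\rangle-\eta^2\|\LL W\|^2\le\sigma_2^2$; the extra factor of $2$ in your $\nabla R$ only rescales constants). But your proposal stops exactly at the step that carries all the content: you never prove the lower bound, you only explain why the routine estimates fail along $\ker\LL$-adjacent directions and conjecture that convexity must rescue the argument. That is a genuine gap, and your suspicion that it cannot be closed from Assumption~\ref{assump:smooth} alone is in fact correct: take $N=2$, $d=1$, $a_{12}=1$ (so $\rho=2$), $F_1(w)=-\tfrac{\beta}{2}w^2$, $F_2(w)=\tfrac{\beta}{2}w^2$. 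Both are $\beta$-smooth, $\nabla F(0)=0$ so $\sigma_2=0$, yet for $W=(M+1,M)$ one computes $\|\nabla F(W)\|^2-\|\nabla J(W)\|^2=2\eta\beta(2M+1)-2\eta^2\to\infty$ as $M\to\infty$. So \eqref{sigma3main} fails under the stated hypotheses, and some additional structure (convexity of each $F_k$, or an explicit bounded-dissimilarity assumption) is genuinely needed to make the cross term controllable.

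For comparison, the paper's own proof takes a shortcut at precisely the point you flagged: it declares \eqref{sigma3main} ``equivalent'' to $-\eta^2\rho^2\|W\|^2-2\eta\langle\nabla F(W),\LL W\rangle\le\sigma_2^2$, i.e.\ it replaces $\|\LL W\|^2$ by $\rho^2\|W\|^2$. Since $\|\LL W\|^2\le\rho^2\|W\|^2$, this substitution makes the quantity to be bounded \emph{smaller}, so the implication runs in the wrong direction. After that substitution the rest is routine: Cauchy--Schwarz and $\|\nabla F(W)\|\le\beta\|W\|+\|\nabla F(0)\|$ give $-\langle\nabla F(W),\LL W\rangle\le\beta\rho\|W\|^2+\psi\rho\|W\|$ with $\psi=\|\nabla F(0)\|$, and completing the square in $\|W\|$ under $\eta\rho>2\beta$ yields $\sigma_2^2=\psi^2\eta\rho/(\eta\rho-2\beta)$. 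The coercivity the paper exploits is coercivity in $\|W\|$, which is exactly the control you correctly observed is unavailable for the true quantity (coercive only in $\|\LL W\|$). Your final step, deducing \eqref{sigma2main} from \eqref{sigma3main} via $\nabla J(W^*)=0$ under convexity, is fine.
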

		\begin{proof}
			See Appendix B. 
		\end{proof}
		
		For any given value of $\rho$, the condition $\eta\rho > 2\beta$ in Lemma~\ref{lemma:Boundedgradient} can be always achieved by tuning $\eta\in\RRR$.
		Therefore, the impact of the relationships among the models of clients (or the graph Laplacian structure encoded by $\rho$) on the convergence of \OurAlg and \dFed can be controlled by $\eta$. One can choose a large $\eta$ if $\rho$ is small and vice versa to satisfy this condition.
		
		Note that in the conventional FL setting, i.e., $w_k=w,\forall k\in\NN$, \eqref{sigma3main} is rewritten as
		\begin{align*}
			\!\frac{1}{N}\sum\nolimits_{k=1}^N \|\nabla F_k(w)\|^2 \!\leq\! \frac{\sigma_2^2}{N} \!+\! \gamma^2\|\nabla_w J(W)\|^2\,\,\text{with}\,\,\gamma\!=\! 1,
		\end{align*}
		which is exactly the assumptions of $(\sigma_2/\sqrt{N},\gamma)$-bounded gradient dissimilarity in \cite{sai20ICML,khaled20AIS}, and the $\gamma$-local dissimilarity in \cite{li20} with $\sigma_2=0$.
		Here, $\sigma_2 = 0$ and $\gamma= 1$ are for the i.i.d cases, while $\sigma_2 \geq 0$ and $\gamma\geq 1$ for non-IID cases.
		
		\begin{algorithm}[!t]
			\caption{\dFed  --Decentralized \OurAlg}
			\begin{algorithmic}[1]
				\label{alg:dFed}
							\STATE \textbf{client $k$'s input}: $\{a_{k\ell}\}$, {$\widetilde{\NN}_k$}, initial $w_{k}^{(0)},  \forall k\in\NN$, local step-size $\mu$, and global step-size $\mut=\mu R$
				\FOR{each round $t = 0,\dots,T-1$}
				\renewcommand{\algorithmicdo}{\textbf{in parallel do}}
				\WHILE{$k\in\NN$}
				\renewcommand{\algorithmicdo}{\textbf{do}}
				\STATE initialize local model $w_{k,0}^{(t)}\leftarrow w_{k}^{(t)}$
				\FOR{$r=0,\dots,R-1$}
				\STATE compute mini-batch gradient $\nabla \widetilde{F}_k(w_{k,r}^{(t)})$
				\STATE $w_{k,r+1}^{(t)}\leftarrow w_{k,r}^{(t)} - \mu \nabla \widetilde{F}_k(w_{k,r}^{(t)})$
				\ENDFOR
					\STATE send $w_{k,R}^{(t)}$ to {its neighboring clients in $\widetilde{\NN}_k$}
				\ENDWHILE
				\WHILE{$k\in\NN$}
						\STATE \!\!\!$w_{k}^{(t+1)}\leftarrow w_{k,R}^{(t)}-\mut\eta\sum\nolimits_{\ell\in{\widetilde{\NN}_k}}
			a_{k\ell}(w_{k,R}^{(t)}-w_{\ell,R}^{(t)})$
				\ENDWHILE
				\ENDFOR
			\end{algorithmic}
			\vspace{-0mm}
		\end{algorithm}

		From now on, let $\sigma_2$ and $\rho$ be defined as in Lemma~\ref{lemma:Boundedgradient}, and $W^{(t)}=[w_1^{(t)},\dots,w_N^{(t)}]$ be the collective vector generated by \OurAlg (with client sampling) or \dFed  (without client sampling, i.e., $S=N$) at round $t$. Node that 
		the convergence rate of \dFed is obtained directly from the convergence rate of \OurAlg when $S=N$. In the following theorems, we show that
		\OurAlg admits linear speedup for strongly convex and sublinear speedup of order $1/2$ for nonconvex objective functions.
		
		\begin{theorem}[Convergence in strongly convex cases]
			\label{theorem1}
			Suppose that Assumptions~\ref{assump:smooth},~\ref{assump:strongconvex}, and~\ref{assump:boundvariance} hold, and $\eta > \frac{2\beta}{\rho}$. Then there exists $\mu\leq \frac{\mut_1}{R}$ such that, for any $T\geq\frac{4N}{\mut_1\alpha S}$,
			\begin{align}
				\label{theorem1a}
				\nonumber
				&\!\EEE[J(\widetilde{W}^{(T)})-J(W^*)]
				\!\leq\! \widetilde{\OO}\bigg(\alpha\Delta^{(0)} e^{-\frac{\mut_1\alpha S T}{4N}}
				\!\!+\! \frac{\sigma_1^2}{(\alpha T)^2RS}
				\\
				&\qquad\qquad\qquad\quad\,\,
				+ \frac{\sigma_2^2}{(\alpha T)^2S}
				+ \frac{\sigma_1^2}{\alpha TRS}
				+ \frac{\sigma_2^2}{\alpha TS}
				\bigg),
			\end{align}
			where $\mut_1:=\min\left\{\frac{1}{q},\frac{2}{\eta\rho}\right\}$,
			$q=\frac{128\beta ^2\eta\rho}{\alpha^2}
			+ 12(\beta+\eta\rho)
			+ \frac{96\beta ^2}{\alpha}
			+ \frac{32p\beta ^2}{\alpha\eta\rho}$,
			$p=2(\beta+\eta\rho)
			+ \frac{8\eta^2\rho^2}{\alpha}
			+ \frac{64\beta ^2}{\alpha}
			+ \frac{12(\beta+\eta\rho)^2}{\eta \rho}
			+ 6\eta\rho
			+ \frac{48\beta ^2}{\eta \rho}$,
			$\Delta^{(0)}:= \|W^{(0)}-W^*\|^2$, $\widetilde{W}^{(T)}:=\sum\nolimits_{t=0}^{T-1}\frac{\theta^{(t)}W^{(t)}}{\Theta_T}$, $\Theta_T = \sum\nolimits_{t=0}^{T-1}\theta^{(t)}$, $\theta^{(t)}=\left(1-\mu R S\alpha/(4N)\right)^{-(t+1)}$, and $\widetilde{\OO}$ hides both constants and polylogarithmic factors. Consequently, the output of \OurAlg has expected error smaller than $\varepsilon$ when 
			\begin{align}
				\label{theorem1b}
				T \!=\! \widetilde{\OO}\!\left(\frac{1}{\alpha S}
				\!\!+\! \frac{\sigma_1}{\alpha \sqrt{\varepsilon RS}}
				\!\!+\! \frac{\sigma_2}{\alpha \sqrt{\varepsilon S}}
				\!\!+\! \frac{\sigma_1^2}{\alpha RS \varepsilon}
				\!\!+\! \frac{\sigma_2^2}{\alpha S \varepsilon}\right).
			\end{align}
		\end{theorem}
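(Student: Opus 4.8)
The plan is to recognize that one full round of \OurAlg---the $R$ local SGD steps followed by the server's Laplacian regularization step---is, in expectation, a single inexact gradient-descent step on the composite objective $J$ with effective step-size $\mut=\mu R$. Since $\nabla_{w_k}J(W)=\nabla F_k(w_k)+\eta\,(\text{Laplacian component})$, and the inner loop moves $w_k$ noisily along $\nabla F_k$ while the server step moves it along exactly the Laplacian component of $\nabla_{w_k}J$, the round update reads $w_k^{(t+1)}\approx w_k^{(t)}-\mut\,\nabla_{w_k}J(W^{(t)})$ plus perturbations arising from (i) stochastic-gradient noise, (ii) client drift accumulated over the $R$ inner steps, and (iii) partial participation (only $S$ of $N$ clients, each sampled client regularizing solely against its sampled neighbors $\ell\in\NN_k\cap\SSS^{(t)}$). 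Because $F$ is $\alpha$-strongly convex and the Laplacian term is convex, $J$ is $\alpha$-strongly convex and $(\beta+\eta\rho)$-smooth; I would therefore adopt the Lyapunov function $\Delta^{(t)}:=\EEE\|W^{(t)}-W^*\|^2$ and aim for a one-round contraction of the form $\Delta^{(t+1)}\leq(1-\mut\alpha S/(4N))\Delta^{(t)}-c\,\mut\frac{S}{N}\EEE[J(W^{(t)})-J(W^*)]+(\text{noise})$.

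The core of the argument is this one-round descent inequality, which I would assemble from two auxiliary bounds. First, a \textbf{client-drift} lemma controlling $\sum_k\EEE\|w_{k,r}^{(t)}-w_k^{(t)}\|^2$ across the inner loop; expanding the SGD recursion under Assumption~\ref{assump:smooth} yields a bound of order $\mu^2R^2\sum_k\|\nabla F_k(w_k^{(t)})\|^2+\mu^2R\,\sigma_1^2$, and here Lemma~\ref{lemma:Boundedgradient} is essential, replacing the uncontrolled $\sum_k\|\nabla F_k\|^2$ by $\sigma_2^2+\sum_k\|\nabla_{w_k}J(W^{(t)})\|^2$ so that the drift is expressed through quantities already present in the recursion. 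Second, a \textbf{sampling-variance} lemma: conditioning on the inner iterates, uniform sampling makes the aggregated update an unbiased estimate of the full gradient step, with second moment inflated by the factor $N/S$ and carrying the $\sigma_1^2/(RS)$ and $\sigma_2^2$ contributions. Combining these with Assumptions~\ref{assump:strongconvex} and~\ref{assump:boundvariance} and absorbing higher-order terms produces the target recursion; this is exactly where the step-size restriction $\mut\leq\mut_1=\min\{1/q,\,2/(\eta\rho)\}$ is forced---$1/q$ keeps the coefficient of $\EEE[J(W^{(t)})-J(W^*)]$ negative after the drift and variance amplification (hence the explicit $p,q$ built from $\beta,\eta\rho,\alpha$), while $2/(\eta\rho)$ guarantees the regularization half-step is non-expansive for the $\eta\rho$-smooth Laplacian term.

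With the recursion in hand, I would unroll it using the geometric weights $\theta^{(t)}=(1-\mut S\alpha/(4N))^{-(t+1)}$: multiplying by $\theta^{(t)}$ and telescoping cancels the $\Delta^{(t)}$ terms, the initial condition contracts as $\alpha\Delta^{(0)}e^{-\mut_1\alpha ST/(4N)}$, and the noise terms accumulate into the $\sigma_1^2,\sigma_2^2$ floor; Jensen's inequality applied to the convex $J$ then transfers the weighted average to $\EEE[J(\widetilde{W}^{(T)})-J(W^*)]$, giving \eqref{theorem1a}. Finally, tuning $\mut$ (equivalently $\mu=\mut/R$) as a function of $T$ to balance the exponentially decaying term against the polynomial noise floor---a standard step-size-optimization lemma---and then inverting the rate to solve for the number of rounds achieving error $\varepsilon$ yields the complexity \eqref{theorem1b}, whose $1/S$ scaling in the dominant $\sigma_1^2/(RS)$ and $\sigma_2^2/S$ terms is precisely the claimed linear speedup.

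The step I expect to be the main obstacle is the one-round descent lemma itself, specifically the interaction between the two-stage update and partial neighbor participation: a sampled client $k$ regularizes only against $\ell\in\NN_k\cap\SSS^{(t)}$, so I must verify that, after taking expectation over $\SSS^{(t)}$, the regularization step remains an unbiased (up to the $N/S$ scaling) estimate of the full Laplacian gradient and that the induced sampling variance can be bounded through Lemma~\ref{lemma:Boundedgradient} by a $\sigma_2^2$-type term. Coupling this bias/variance control with the drift and stochastic noise, while tracking all constants so that the $p,q,\mut_1$ expressions emerge exactly as stated, is the delicate bookkeeping on which the whole theorem rests.
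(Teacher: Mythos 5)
Your proposal follows essentially the same route as the paper's proof: one communication round is rewritten as an inexact gradient step $W^{(t+1)}=W^{(t)}-\mut Z^{(t)}$ on $J$, a one-round contraction for $\EEE\|W^{(t)}-W^*\|^2$ with factor $1-\mut\alpha S/(4N)$ is assembled from a bounded-drift lemma, a sampling-variance lemma, and Lemma~\ref{lemma:Boundedgradient} (via $\|\nabla F(W)\|^2\leq 4\frac{\beta^2}{\alpha}[J(W)-J(W^*)]+2\sigma_2^2$), and the result is telescoped with the weights $\theta^{(t)}$, passed through Jensen's inequality, and finished by the standard two-case step-size tuning. The partial-participation issue you flag as the main obstacle is handled in the paper exactly as you anticipate: the server step is written as $\mut\eta\widetilde{S}^{(t)}\LL W_R^{(t)}$ with the sampling mask applied on the left, so $\EEE\widetilde{S}^{(t)}=\tau I$ makes the regularization unbiased up to the factor $\tau=S/N$.
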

		\begin{proof}
			See Appendix D. 
		\end{proof}
		
		\begin{theorem}[Convergence in nonconvex cases]
			\label{theorem2}
			Suppose that Assumptions~\ref{assump:smooth} and ~\ref{assump:boundvariance} hold, and $\eta > \frac{2\beta}{\rho}$. Then there exists $\mu\leq\frac{\mut_2}{R}$ such that, for any $T >0$,
			\begin{align}
				\label{theorem2a}
				\!\!\!\!\EEE\|\nabla J(W^{(t^*)})\|^2
				\!\leq\! \OO\!\left(\frac{\Delta_J}{TS}
				\!+\! \frac{\Delta_J^{\frac{2}{3}} M^{\frac{2}{3}}}{T^{\frac{2}{3}}(RS)^{\frac{1}{3}}}
				\!+\! \frac{\Delta_J^{\frac{1}{2}}M^2}{\sqrt{TRS}}\right)\!,
			\end{align}
			where $\mut_2\!:=\!\min\left\{\frac{1}{v},\frac{2}{\eta\rho}\right\}$,
			$v=8\big(8\eta\rho\!+\!3(\beta+\eta\rho)\!+\!12(\beta+\eta\rho)\!+\!\frac{8u}{\eta\rho}\big)$,
			$u\!=\!\frac{(\beta+\eta\rho)^2}{2} + 2\eta^2\rho^2 + 16\eta\rho\beta ^2
			\!+\! \frac{6(\beta+\eta\rho)^3}{\eta\rho} \!+\! 3\eta\rho(\beta+\eta\rho) \!+\! \frac{24(\beta+\eta\rho)\beta ^2}{\eta\rho}$; $\Delta_J := J(W^{(0)})-J(W^*)$, $M^2 = R\sigma_2^2+\sigma_1^2$, and $t^*$ uniformly sampled from $\{0,\dots,T-1\}$.
			Consequently, the output of \OurAlg has expected error smaller than $\varepsilon$ when 
			\begin{align}
				\label{theorem2b}
				T \!=\! \OO\left(\frac{1}{S\varepsilon}
				\!\!+\! \frac{\sigma_1}{\varepsilon^{\frac{3}{2}}\sqrt{RS}}
				\!\!+\! \frac{\sigma_2}{\varepsilon^{\frac{3}{2}}\sqrt{S}}
				\!\!+\! \frac{\sigma_1^2}{\varepsilon^2 RS}
				\!\!+\! \frac{\sigma_2^2}{\varepsilon^2 S}\right).
			\end{align}
		\end{theorem}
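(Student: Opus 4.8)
The plan is to treat one communication round as a single inexact, mini-batch stochastic gradient step on $J$ with effective step-size $\mut=\mu R$, and then run the canonical nonconvex-SGD descent argument. Since $F$ is $\beta$-smooth and the quadratic $\eta R(W)=\eta W^T\LL W$ has smoothness governed by $\rho=\|\LL\|$, the objective $J$ is $L_J$-smooth with $L_J=\OO(\beta+\eta\rho)$. I would begin from the descent inequality
\[
\EEE[J(W^{(t+1)})]\le \EEE[J(W^{(t)})]+\EEE\langle\nabla J(W^{(t)}),W^{(t+1)}-W^{(t)}\rangle+\tfrac{L_J}{2}\EEE\|W^{(t+1)}-W^{(t)}\|^2,
\]
and the whole difficulty lies in controlling the inner-product and second-moment terms, which deviate from an exact gradient step because of three error sources: (i) stochastic-gradient noise, bounded by $\sigma_1^2$ through Assumption~\ref{assump:boundvariance}; (ii) client drift, i.e.\ the $R$ local steps evaluate $\nabla\widetilde{F}_k$ at $w_{k,r}^{(t)}$ rather than at $w_k^{(t)}$; and (iii) partial participation through $\SSS^{(t)}$, which enters both the set of updated models and the Laplacian neighbor sum $\NN_k\cap\SSS^{(t)}$. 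Setting $S=N$ removes source (iii) and yields the \dFed{} guarantee as a special case.

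The analysis then rests on two lemmas. First, a client-drift (consistency) lemma bounding $\sum_{k}\sum_{r=0}^{R-1}\EEE\|w_{k,r}^{(t)}-w_k^{(t)}\|^2$ by $\OO(\mu^2R^2)$ times $\big(\sigma_1^2+\sum_k\|\nabla F_k(w_k^{(t)})\|^2\big)$; I would then invoke Lemma~\ref{lemma:Boundedgradient} to replace $\sum_k\|\nabla F_k(w_k^{(t)})\|^2$ by $\sigma_2^2+\|\nabla J(W^{(t)})\|^2$. This substitution is exactly what produces the composite noise scale $M^2=R\sigma_2^2+\sigma_1^2$: the drift-induced bias, summed over $R$ local gradient evaluations, contributes the $R\sigma_2^2$ piece, while accumulated SGD noise contributes $\sigma_1^2$. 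Second, a bounded-variance lemma for the aggregate round update, computing its conditional mean (proportional to $\nabla J(W^{(t)})$ up to drift error) and its conditional second moment, where sampling $S$ of $N$ clients supplies the $1/S$ variance-reduction factor responsible for the $S$ appearing in every denominator of \eqref{theorem2a}.

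Combining these bounds and imposing the step-size ceiling $\mu\le\mut_2/R$ — so the positive $L_J$ curvature term and the residual drift term are dominated by the negative gradient term — I would obtain a one-step recursion of the shape $\EEE[J(W^{(t+1)})]\le\EEE[J(W^{(t)})]-c_0\tfrac{\mut S}{N}\EEE\|\nabla J(W^{(t)})\|^2+c_1\mut^2(\cdot)+c_2\mut^3(\cdot)$, with the bracketed factors built from $M^2$ and $1/S$. Telescoping over $t=0,\dots,T-1$ and using that $t^*$ is drawn uniformly from $\{0,\dots,T-1\}$ turns the running average into $\EEE\|\nabla J(W^{(t^*)})\|^2$. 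The three terms of \eqref{theorem2a} then emerge by optimizing the free step-size $\mut$ to balance the $\Delta_J/(\mut T)$, linear-in-$\mut$, and quadratic-in-$\mut$ contributions, via the standard step-size-tuning lemma that converts $a/\mut+b\mut+c\mut^2$ into $\OO\big(a/T+\sqrt{ab/T}+(a^2c/T^2)^{1/3}\big)$; the consequence \eqref{theorem2b} follows by inverting this bound in $T$.

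I expect the main obstacle to be the client sampling inside the coupling term. Unlike vanilla FedAvg, where sampling perturbs only an average, here $\SSS^{(t)}$ appears in the pairwise Laplacian sum $\NN_k\cap\SSS^{(t)}$, producing second-order inclusion probabilities $\Pr(k,\ell\in\SSS^{(t)})$ that do not factor like the first-order ones; keeping the expected update aligned with $\nabla J(W^{(t)})$ while retaining the $1/S$ speedup, and simultaneously controlling the cross terms between drift error and sampling variance, is the delicate bookkeeping. A secondary difficulty is that, with no uniform gradient-boundedness assumption available, the drift bound must pass through Lemma~\ref{lemma:Boundedgradient}, which re-injects $\|\nabla J(W^{(t)})\|^2$ on the right-hand side; absorbing this back into the negative descent term is precisely what forces the threshold $\mut_2$ and the intricate constants $u$ and $v$.
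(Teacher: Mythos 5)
Your proposal follows essentially the same route as the paper's proof: a descent inequality from the $(\beta+\eta\rho)$-smoothness of $J$, a client-drift lemma combined with Lemma~\ref{lemma:Boundedgradient} to trade $\|\nabla F\|^2$ for $\sigma_2^2+\|\nabla J\|^2$ (which is indeed where $M^2=R\sigma_2^2+\sigma_1^2$ originates), a second-moment bound on the aggregate round update supplying the $\tau=S/N$ factor, a step-size ceiling $\mut_2$ to absorb the re-injected $\|\nabla J(W^{(t)})\|^2$, and telescoping plus the standard step-size balancing. The one difficulty you flag that does not actually arise is the second-order inclusion probabilities: the paper writes the server update as $W^{(t+1)}=(I-\mut\eta\widetilde{S}^{(t)}\LL)W_R^{(t)}$ with stale models substituted for non-sampled clients, so the sampling matrix enters only linearly and $\EEE\widetilde{S}^{(t)}=\tau I$ suffices.
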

		\begin{proof}
			See Appendix E. 
		\end{proof}
		
		For illustrative purposes, we compare our rates with those of FL and personalized FL algorithms in i.i.d cases (i.e., $\sigma_2=0$ and $\gamma=1$). The strongly-convex rate of \OurAlg becomes
		$\frac{\sigma_1^2}{\alpha RS\varepsilon}+\frac{1}{\alpha S}$, which matches the lower-bound for the identical case \cite{wood19NIPS}, compared to the latest $\frac{\sigma_1^2}{\alpha RS\varepsilon}+\frac{1}{\alpha}$ by SCAFFOLD \cite{sai20ICML} and $\frac{\sigma_1^2}{\alpha RS\varepsilon}+\frac{\delta}{\alpha}$ by LSGD-PFL \cite{hanzely21arXiv} with $\delta \geq 0$. Our rate improvement comes from the advantage of additional information about the structure of the models of clients that is captured by Laplacian regularization.
		Also, when no variance ($\sigma_1^2=0$) and no client sampling, the nonconvex rate of \OurAlg is $\frac{\sigma_2^2}{\varepsilon^2S}\!+\! \frac{\sigma_2}{\varepsilon^{3/2}} \!+\!\frac{1}{\varepsilon}$, which is tighter (without $\gamma$) than the rate of SCAFFOLD, and less dependent on $\sigma_2$ than that of \cite{yu19AAAI}.

		
		\section{Experiments}
\label{simulation}

In this section, we evaluate the performance of \OurAlg when the data are heterogeneous and non-i.i.d in both strongly convex and nonconvex settings. We show vital show the advances of \OurAlg with Laplacian regularization in federated multi-task and personalized settings by comparing \OurAlg with cutting-edge learning algorithms including MOCHA, pFedMe, Per-FedAvg, {FedProx \cite{li20}, SCAFFOLD \cite{sai20ICML}, AFL \cite{mohri_agnostic_2019}}, and the vanilla FedAvg. The experimental results show that  \OurAlg achieves appreciable performance improvement over others in terms of test accuracy.

\subsection{Experimental Settings}
We consider classification problems using real datasets generated in federated settings, including  Human Activity Recognition, Vehicle sensor, MNIST, and CIFAR-10. 
\begin{itemize}[noitemsep,nolistsep]
	\item  \textbf{Human Activity Recognition}: The set of data gathered from accelerometers and gyroscopes of cell phones from 30 individuals performing six different activities including lying-down, standing, walking, sitting, walking-upstairs, and walking-downstairs \cite{anguitaPublicDomainDataset2013}. Each individual is considered as a task (client) classifying 6 different activities.
	\item  \textbf{Vehicle Sensor}: Data is collected from a distributed wireless sensor network of 23 sensors including acoustic (microphone), seismic (geophone), and infrared (polarized IR sensor) \cite{duarteVehicleClassificationDistributed2004}. It aims to classify types of moving vehicles. We consider each sensor as a separate task (client) performing the binary classification to predict 2 vehicle types: Assault Amphibian Vehicle (AAV)  and Dragon Wagon (DW).
	\item \textbf{MNIST}:  A handwritten digit dataset \cite{lecunGradientbasedLearningApplied1998} includes 10 labels and 70,000 instances. The whole dataset is distributed  to $N = 100$ clients. Each client has a different local data size and consists of 2 over 10 labels.
	\item \textbf{CIFAR-10}:  An object recognition dataset \cite{krizhevskyLearningMultipleLayers} includes 60,000 colour images belonging to 10 classes. We partition the dataset to $N = 20$ clients and 3 labels per client.
\end{itemize}

\vspace{-0mm}
In practical FL networks, some clients have significantly limited data sizes and need collaborative learning with others. For each dataset, we hence down-sample 80\% data belonged to a half of the total clients to observe behaviour of all algorithms. We provide all details about  datasets and  results without down-sampling in the Appendix F. All datasets are split randomly with 75\% and 25\% for training and testing, respectively. 

We use a multinomial logistic regression model (MLR) with cross-entropy loss functions and $L_2$-regularization term as the strongly convex model for Human Activity Recognition, Vehicle Sensor, and MNIST.  
For nonconvex setting, we use  a simple deep neural network (DNN) with one hidden layer, a ReLU activation function, and a softmax layer at the end of the network for Human Activity and Vehicle Sensor datasets. The size of hidden layer is 100 for Human Activity and 20 for Vehicle Sensor. In the case of MNIST, we use DNN with 2 hidden layers and both layers have the same size of 100.  For CIFAR-10, we follow the CNN structure of \cite{mcmahan17}. 

The structural dependence matrix $\Omega$ of MOCHA is chosen  as $\Omega = (\textbf{I}_{N \times N} - \frac{1}{N}\textbf{11}^T)^2$ following settings of \cite{smith17NIPS,liangThinkLocallyAct2020}, where $\textbf{I}_{N \times N}$ is the identity matrix with size $N \times N$ and $\textbf{1}$ is a vector of all ones size $N$. Here, $\Omega$ is exactly the Laplacian matrix $L$ in problem \eqref{mainP:FMTL} when all the weights $a_{k\ell}=1, \forall k,\ell$. 
As both \OurAlg and \dFed have the same performance when there is no client sampling, in our experiments, we only evaluate the performance of \OurAlg.  When comparing \OurAlg with other algorithms, we conduct 5-fold cross-validation to figure out the combination of hyperparameters allowing each algorithm to achieve the highest test accuracy.
All experiments are implemented using PyTorch \cite{paszkePyTorchImperativeStyle2019} version 1.6. We follow the implementations  of \cite{canh20NIPS} for pFedMe, FedAvg, and Per-FedAvg, \cite{liangThinkLocallyAct2020} for MOCHA. All  experiments are run on \textbf{NVIDIA Tesla T4} GPU. 
All code and data are published at \footnote{\url{https://github.com/dual-grp/FedU_FMTL}}. The accuracy is reported with mean and standard deviation over 10 runs. 
\subsection{Performance of \OurAlg  in Federated Multi-Task Learning}
\label{sim:MTL}
\begin{figure*}[ht]
	\centering
	\begin{subfigure}{1\textwidth} 
		\centering
		{\includegraphics[scale=0.355]{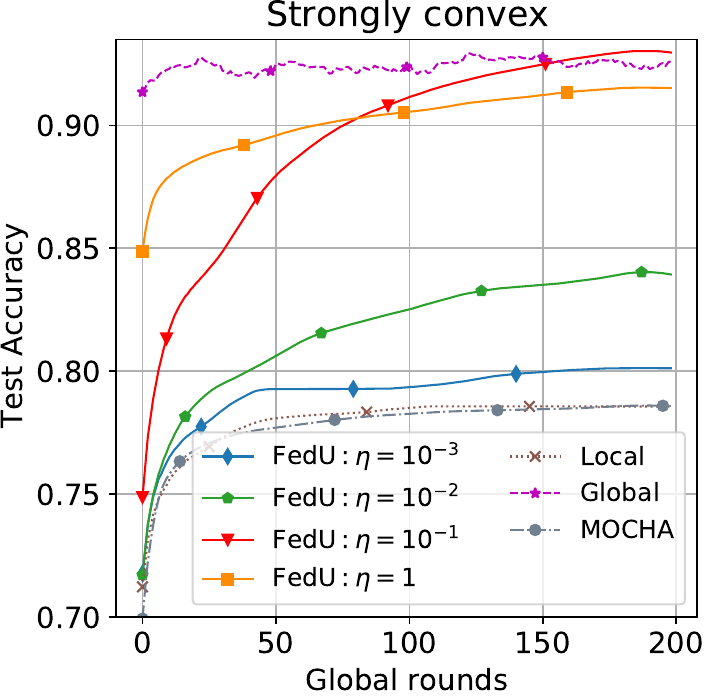}}\quad
		{\includegraphics[scale=0.355]{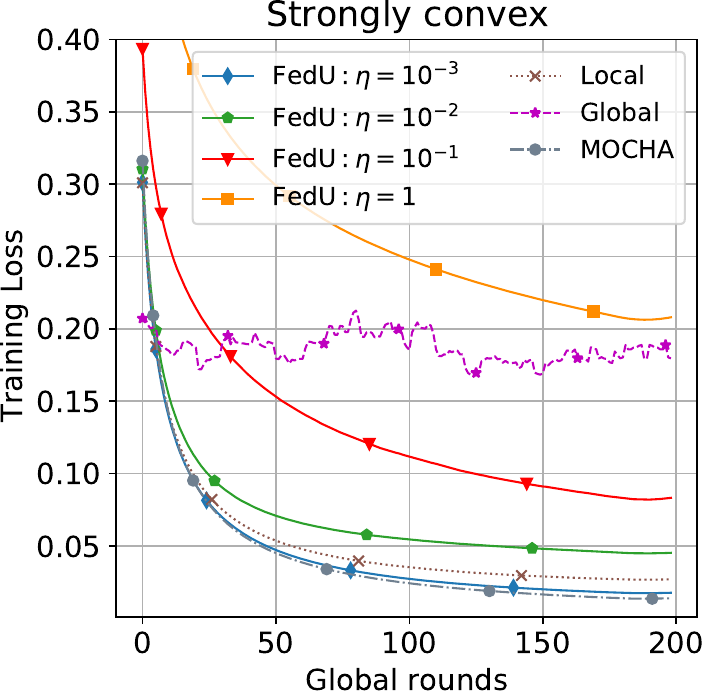}}\quad
		{\includegraphics[scale=0.355]{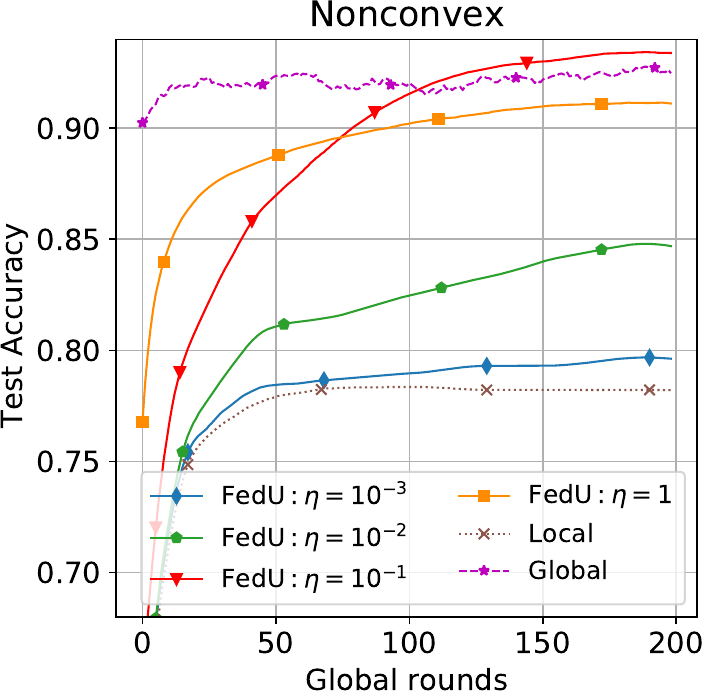}}\quad
		{\includegraphics[scale=0.355]{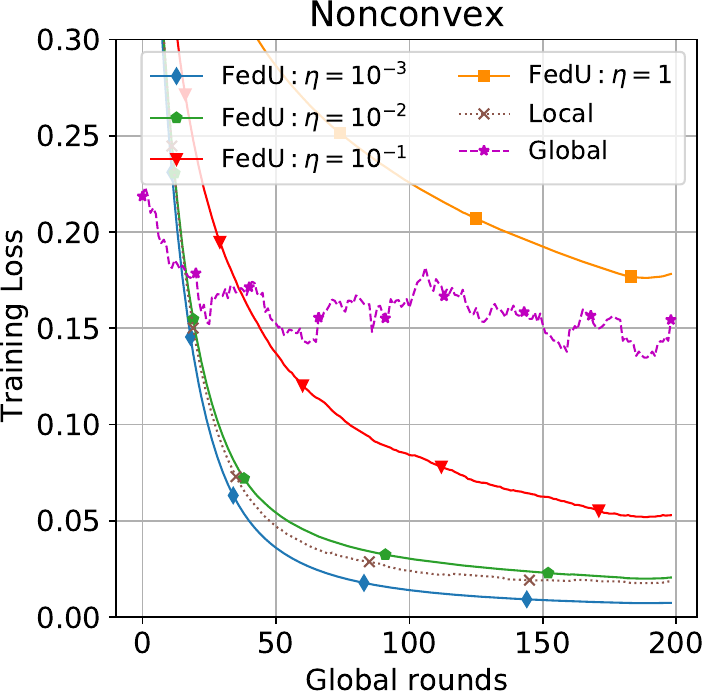}}
		\caption{Human Activity.}
		\label{F:Subeta_a}
	\end{subfigure}
	\begin{subfigure}{1\textwidth} 
		\centering
		{\includegraphics[scale=0.355]{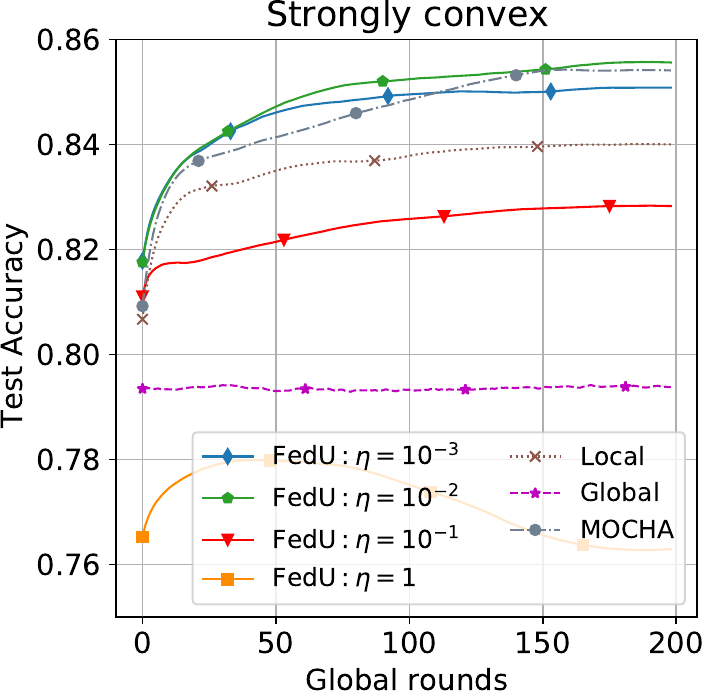}}\quad
		{\includegraphics[scale=0.355]{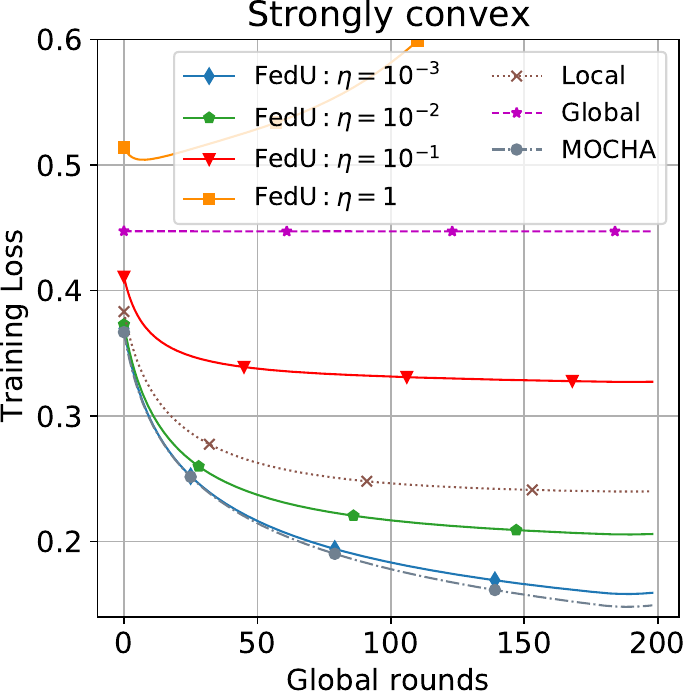}}\quad
		{\includegraphics[scale=0.355]{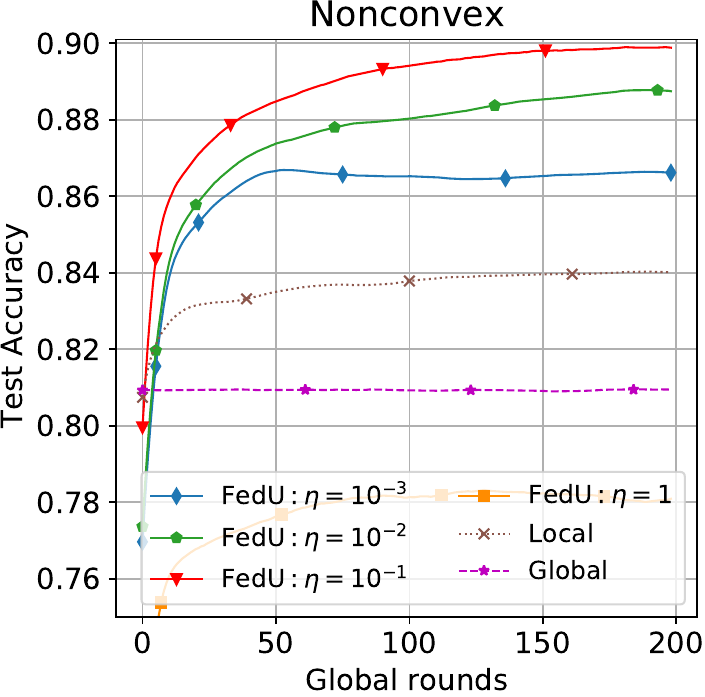}}\quad
		{\includegraphics[scale=0.355]{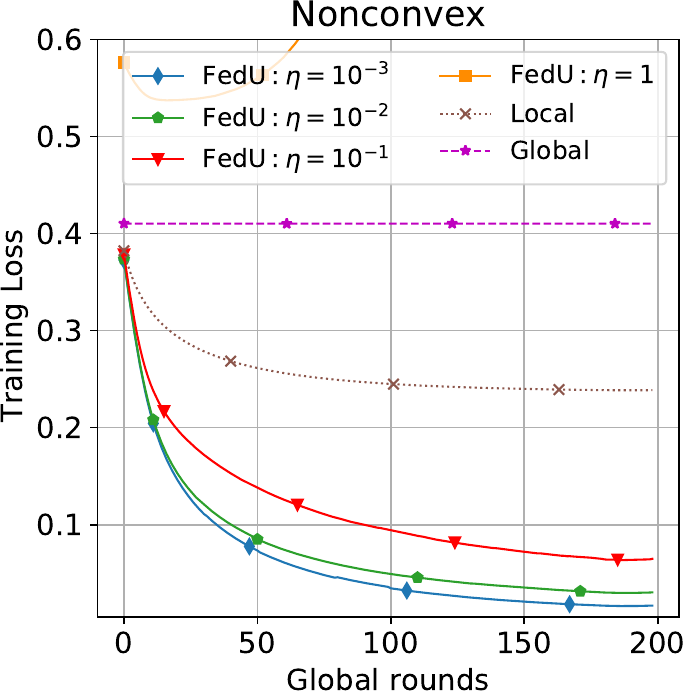}}
		\caption{Vehicle Sensor.}
		\label{F:Subeta_b}
	\end{subfigure}  
	\begin{subfigure}{1\textwidth} 
		\centering
		{\includegraphics[scale=0.355]{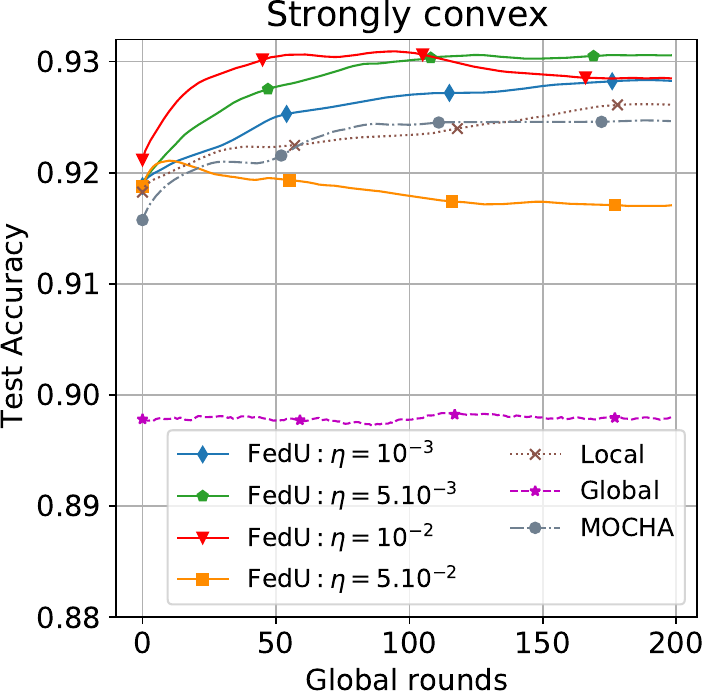}}\quad
		{\includegraphics[scale=0.355]{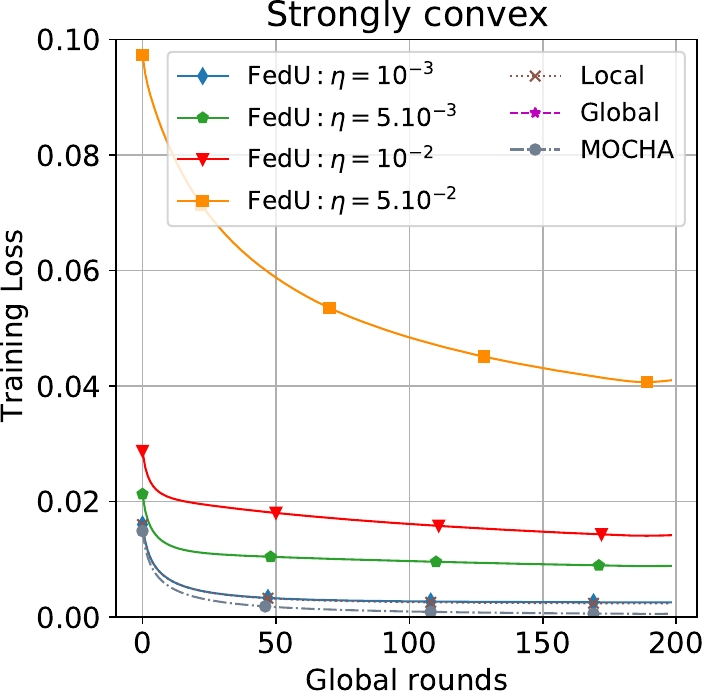}}\quad
		{\includegraphics[scale=0.355]{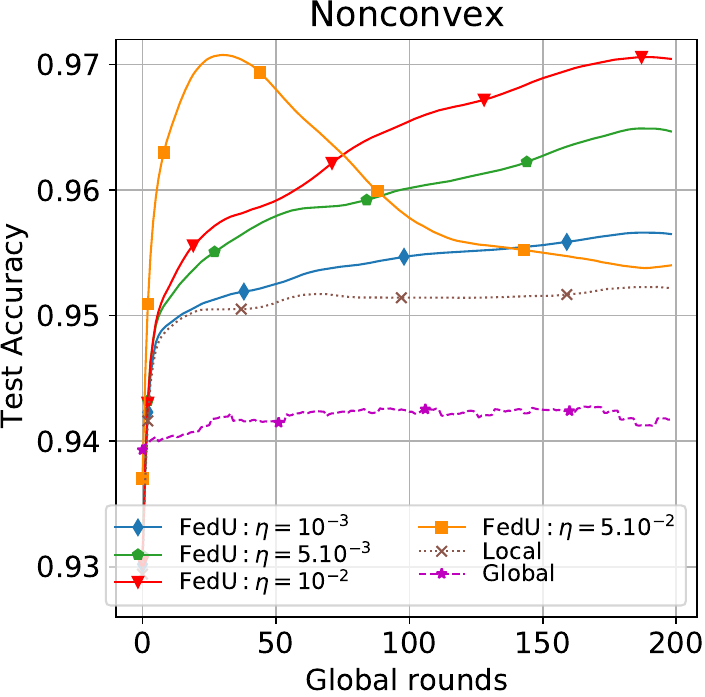}}\quad
		{\includegraphics[scale=0.355]{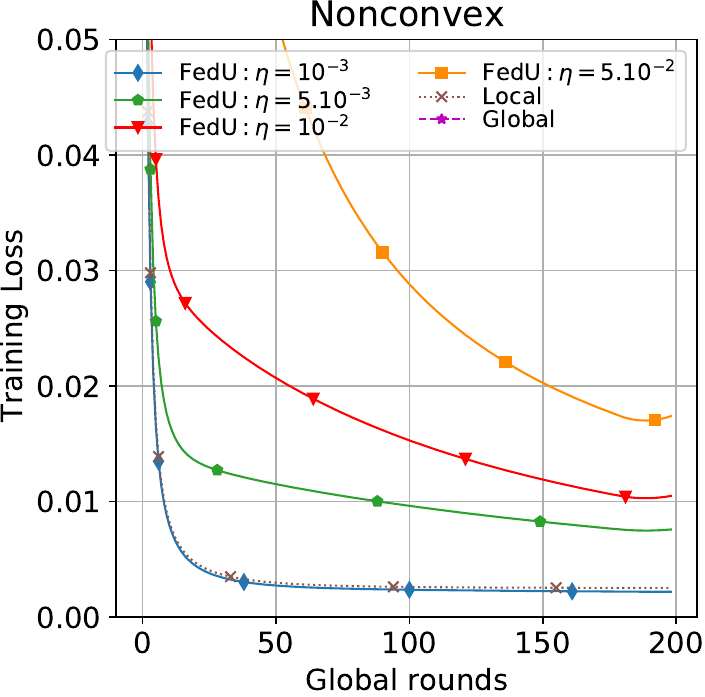}}
		\caption{MNIST.}
		\label{F:Subeta_c}
	\end{subfigure}
	\caption{Performance comparison between MOCHA, local model, global model, and \OurAlg  with the various sets of $\eta$ in both strongly convex and nonconvex settings.} \label{F:SubEta}
\end{figure*}
\begin{figure*}[!t]
	\centering
	\begin{subfigure}{1\textwidth} 
		\centering
		{\includegraphics[scale=0.355]{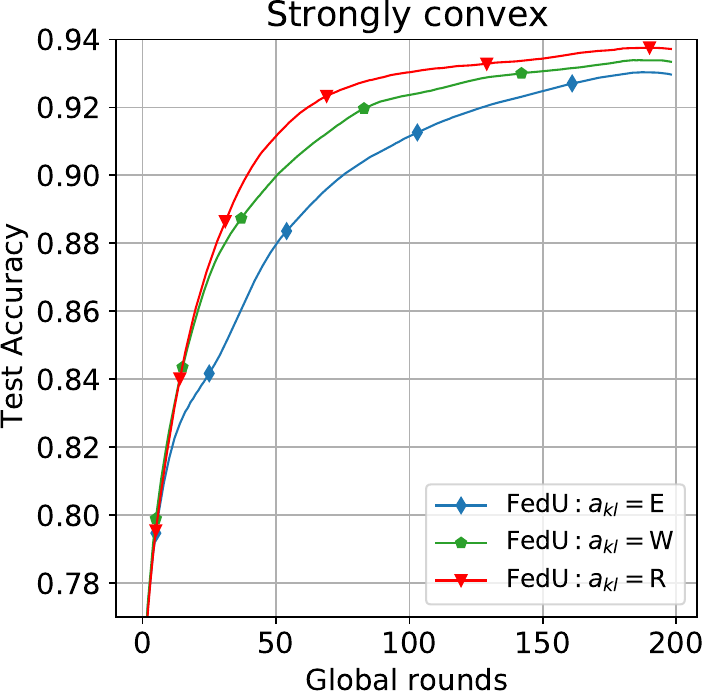}}\quad
		{\includegraphics[scale=0.355]{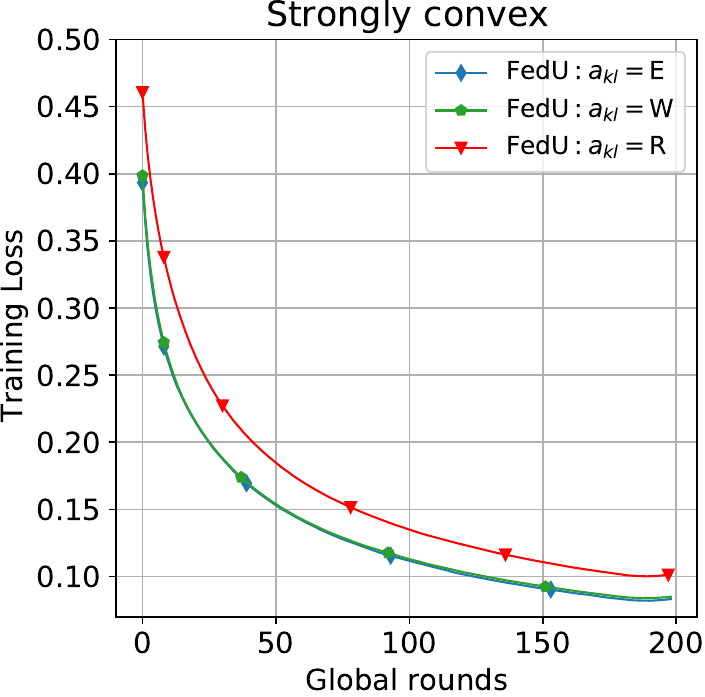}}\quad
		{\includegraphics[scale=0.355]{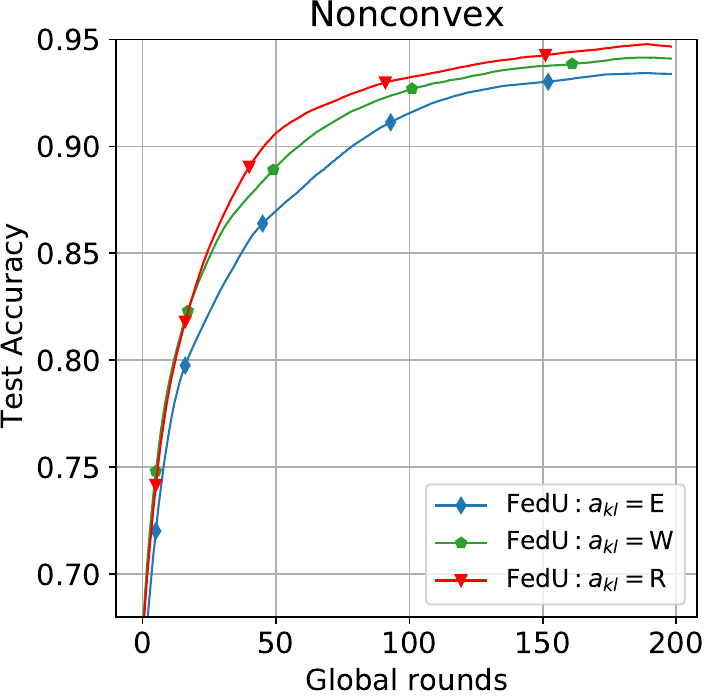}}\quad
		{\includegraphics[scale=0.355]{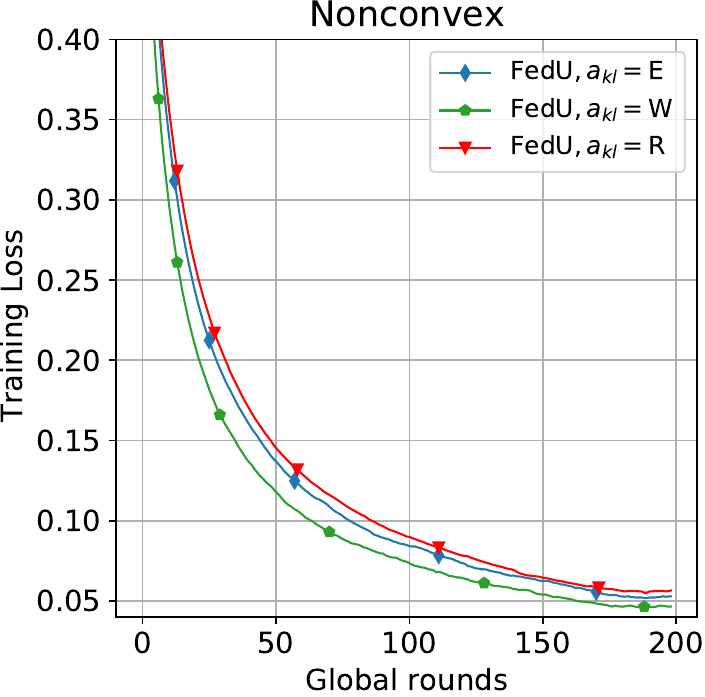}}
		\caption{Human Activity.}
		\label{F:Subakl_a}
	\end{subfigure}
	\begin{subfigure}{1\textwidth} 
		\centering
		{\includegraphics[scale=0.355]{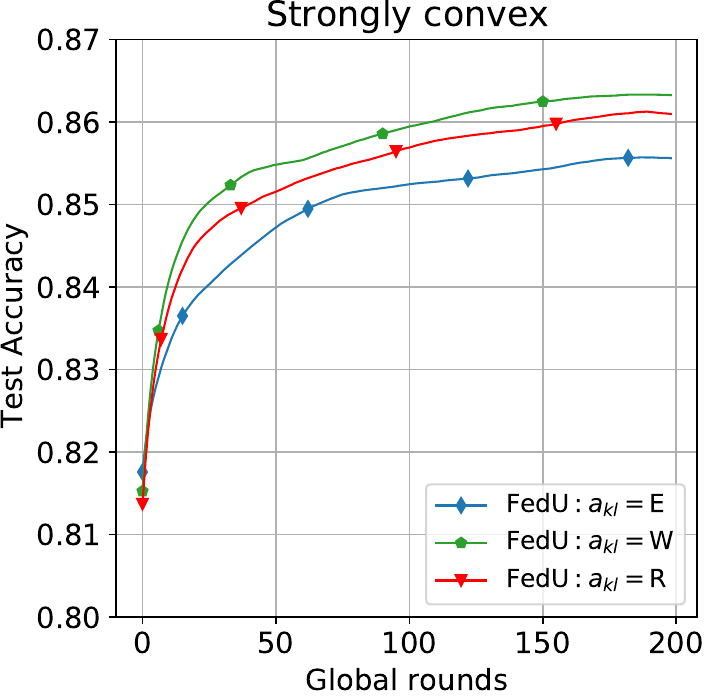}}\quad
		{\includegraphics[scale=0.355]{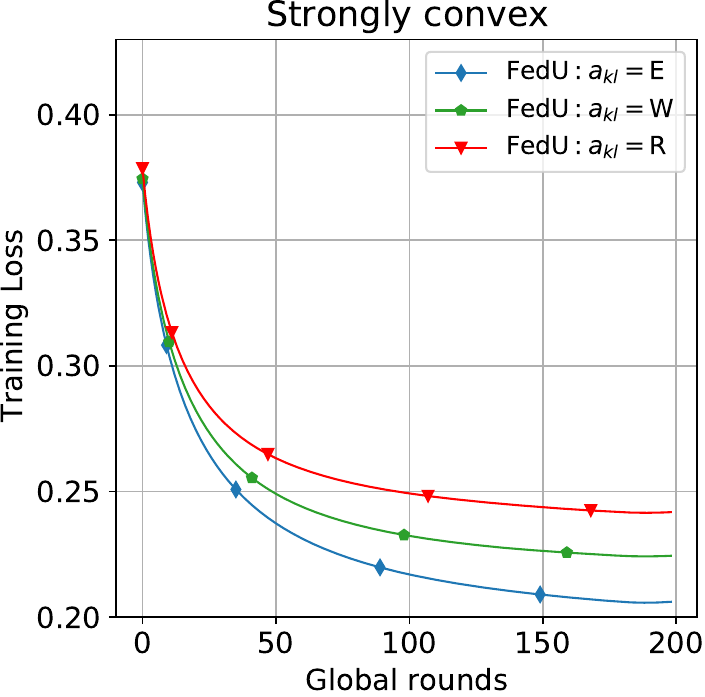}}\quad
		{\includegraphics[scale=0.355]{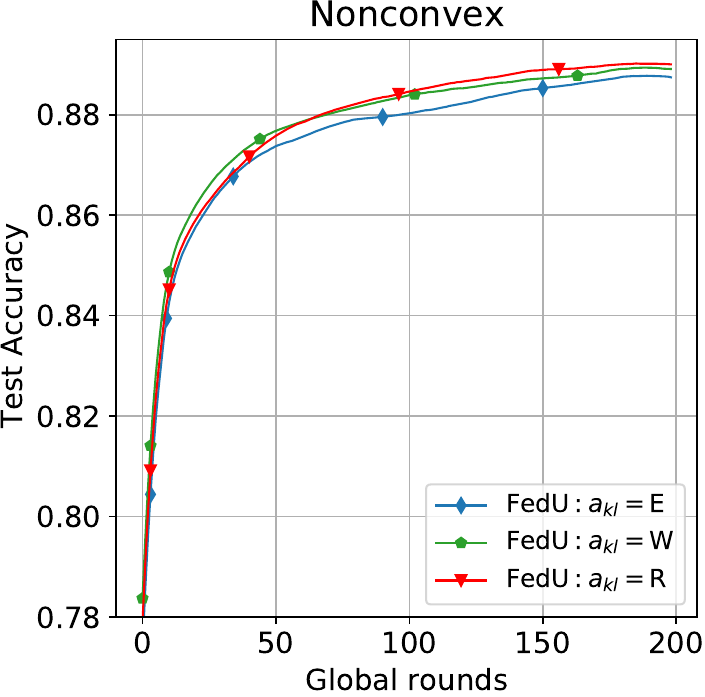}}\quad
		{\includegraphics[scale=0.355]{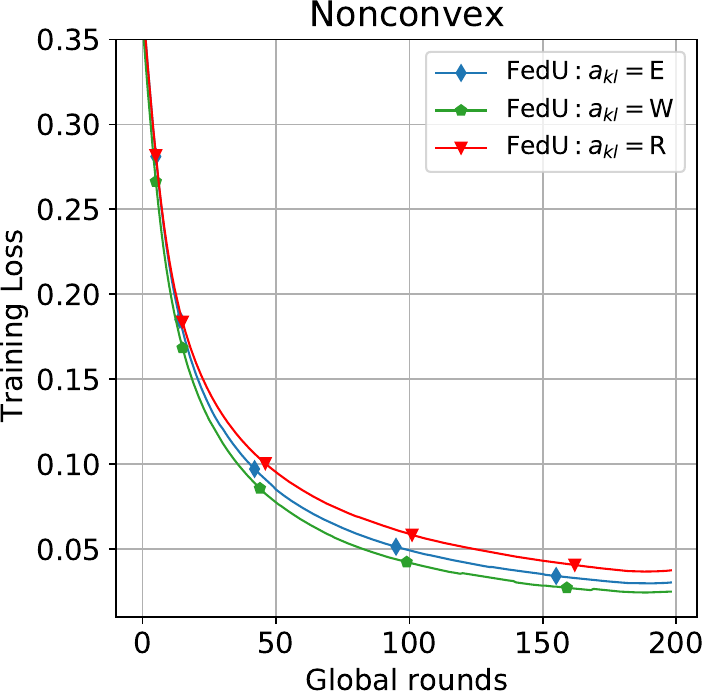}}
		\caption{Vehicle Sensor.}
		\label{F:Subakl_b}
	\end{subfigure}  
	\begin{subfigure}{1\textwidth} 
		\centering
		{\includegraphics[scale=0.355]{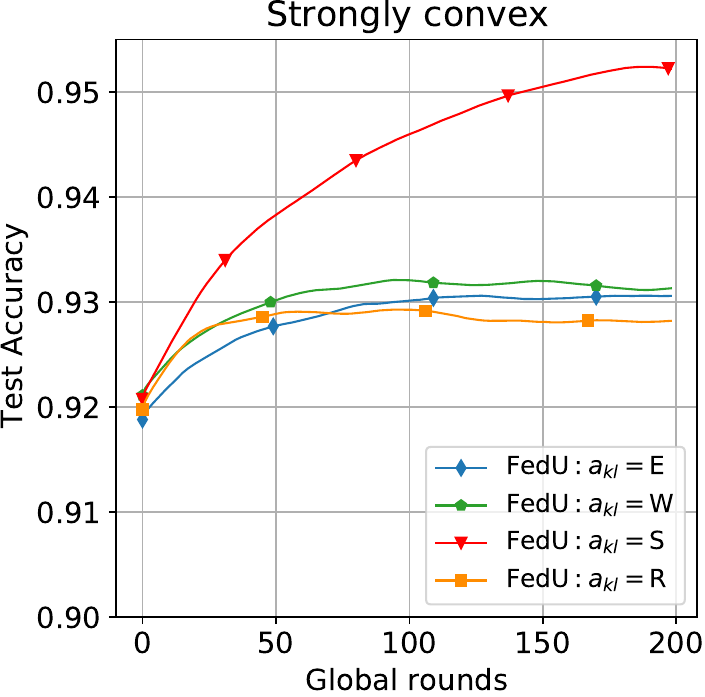}}\quad
		{\includegraphics[scale=0.355]{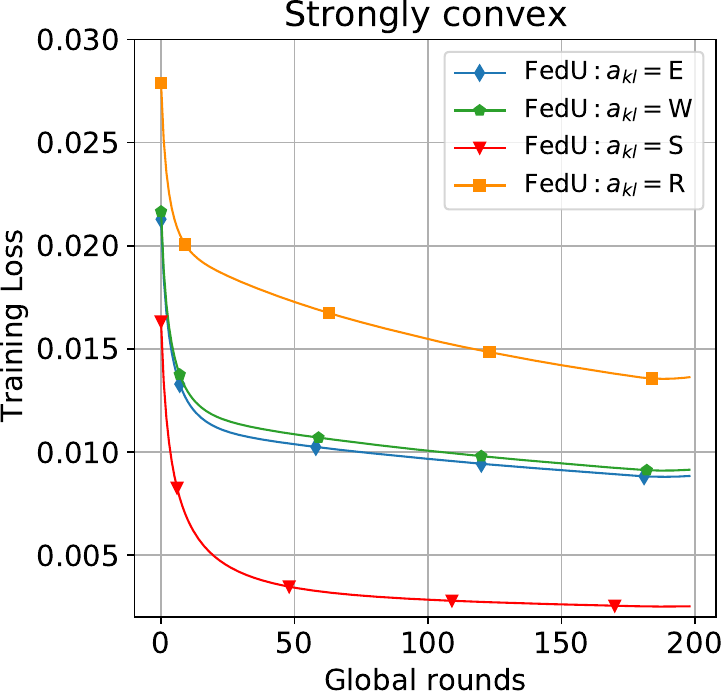}}\quad
		{\includegraphics[scale=0.355]{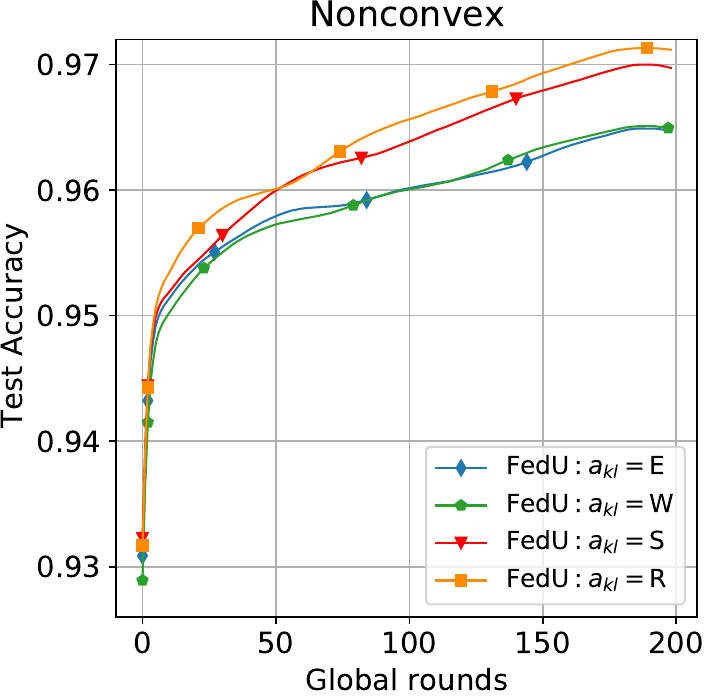}}\quad
		{\includegraphics[scale=0.355]{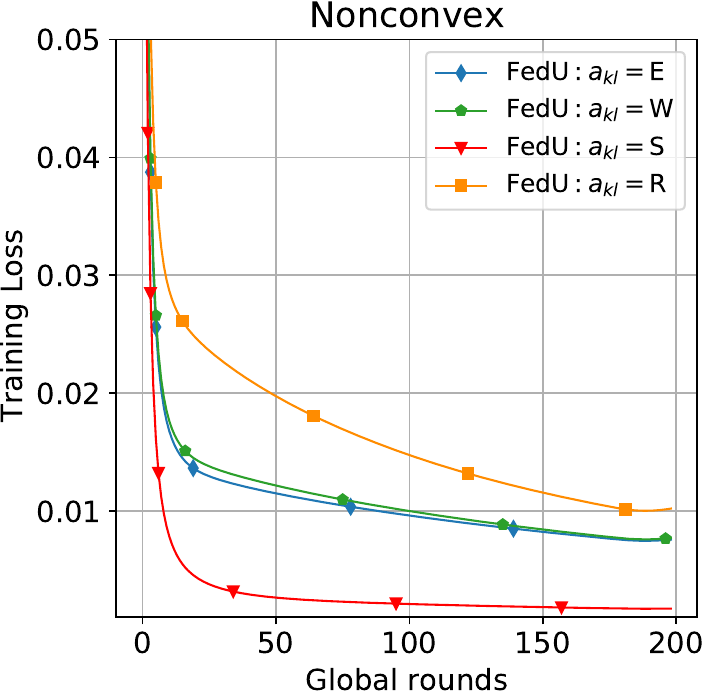}}
		\caption{MNIST.}
		\label{F:Subakl_c}
	\end{subfigure}
	\caption{Effects of graph information $\{a_{kl}\}$ on the convergence of \OurAlg in both convex and nonconvex settings.} 	\label{F:Subakl}
\end{figure*}

We first show benefits of  \OurAlg in FMTL setting by comparing \OurAlg with  Local model (training one separate model per client), Global model (training one single model on centralized data), and MOCHA, the conventional FMTL algorithm \cite{smith17NIPS}. 
Note that the performance results of the FMTL algorithm in \cite{yasmin21arXiv} and MOCHA are reported similar.
We evaluate  \OurAlg on a wide range of values of $\eta \in \{5.10^{-3},10^{-3},5.10^{-2},10^{-2},10^{-1}, 1\}$ and compare with others using their best fine-tuned parameters. In FMTL, each client represents a separate task. All clients have the same weight connection $\{a_{kl}\}$ with others and no client sampling in order to make fair comparisons with Local, Global models, and MOCHA. We also provide details on how to choose the different values of $\{a_{kl}\}$ in supplementary material. We only report the convex setting for MOCHA according to its assumption as stated in Section 3.1 \cite{smith17NIPS}.

The results in Fig.~\ref{F:SubEta} show that, \OurAlg\ achieves the highest performance, followed by MOCHA, Local model, and Global model. While the Local model at individual client learns only its own data without any contribution from the model of other clients, the Global model only does a single task that is not well generalized on highly non-i.i.d  data. We also recognize that Local model suffers overfitting when the data size at clients is small. By contrast, MOCHA and \OurAlg\ have the ability to learn models for multiple related tasks simultaneously and capture relationships amongst clients. Especially in the case of \OurAlg, using Laplacian regularization allows utilizing additional information about the structures of clients' models to increase the learning performance, and the contribution from clients having the large data size to those having smaller ones becomes more significant.

Observing different values of $\eta$, we found that the larger $\eta$ is, the more the coordination from other clients are, then \OurAlg performs better when $\eta$ is increased. However, when $\eta$ reaches a certain threshold, it slows down the convergence of \OurAlg, for example, $\eta = 5.10^{-2}$ in Fig.~\ref{F:SubEta}. $\eta$ then should be chosen carefully depends on the dataset. 

\subsection{Effect of the Graph Information $\{a_{kl}\}$ } 

For the above experiments, we assume that all relationships among a client and its neighbors are equal. However, in practice, the connection weights may have different values and they need to be known in advance. We then evaluate the effect of graph information shown in Fig.~\ref{F:Subakl} by normalizing the values of $\{a_{kl}\}$ in the range of  $[0, 1]$ and simulate 4 different scenarios of $\{a_{kl}\}$ as below: 
\begin{itemize}
	\item Random (R): All values of $\{a_{kl}\}$ are generated radomly $\{a_{kl}\} \thicksim \mathcal{N}(0, 1)$.
	\item Equal (E): When all clients have the same value for $\{a_{kl}\}$, we can choose any value of $\{a_{kl}\}$ in the range of  $[0, 1]$. However, there will be one value of $\eta* \{a_{kl} \}$ allows \OurAlg to achieve the highest accuracy. So, whenever  $\{a_{kl}\}$  is large, we can choose a small $\eta$, and vice versa. In this experiment, we fix $\{a_{kl}\} = 0.5$ and adjust $\eta$ accordingly. 
	
	\item Weighted (W): As there are various clients having significantly small data sizes, we set $\{a_{kl}\} = 0$ on the connection between these clients. We then set $\{a_{kl}\} = 0.5$ on the connection among clients having small data sizes and those having large data sizes, and  $\{a_{kl}\} = 1$ for all other connections.
	
	\item Similar (S): This scenario is only for MNIST. When distributing data to all clients, each client has 2 labels over 10.  Hence, clients may share only one, two similar labels or none of them. We set $\{a_{kl}\} = 0$, $\{a_{kl}\} = 0.5$, and $\{a_{kl}\} = 1$  for the connections among clients having no similar label, one similar label, and two similar labels, respectively.		
\end{itemize}

In most of the cases, the performance of \OurAlg with random $\{a_{kl}\}$ is better than that with equal $\{a_{kl}\}$.  When $\{a_{kl}\}$ are weighted, \OurAlg performs better than when all $\{a_{kl}\}$ are equal. Especially for MNIST, when $\{a_{kl}\}$ are weighted based on the similarity of clients, \OurAlg achieves the highest performance compared to other scenarios. Therefore, given knowing the relationship between client's data distribution, for example, in a weather forecasts application, clients in the same geographical location may have similar or close weather data, we can set higher values of weight connection for those clients than clients are in different locations to takes advantages of \OurAlg.

\subsection{Comparison with Personalized FL algorithms}

{Finally, we compare \OurAlg with the conventional FL algorithms FedAvg, FedProx, SCAFFOLD, AFL,  MOCHA, and with the state-of-the-art personalized FL algorithms pFedMe and Per-FedAvg.} The results are shown  in Table.~\ref{T:compare_performance}. We fix the subset of clients $S = 0.1N$ and perform the comparison on all four real datasets.
Overall, \OurAlg almost maintains the top performance in all scenarios. 

	\begin{table}[t!]
		\centering
		\caption{ Performance comparison of centralized setting ($R = 5$, $ S = 0.1 N$, $B = 20$, $T = 200$). There is no convex model for CIFAR-10, we then only report the non-convex case.}
			\begin{tabular}{l|c|ll}
				\multirow{3}{*}{Dataset} & \multirow{3}{*}{Algorithm} & \multicolumn{2}{c}{Test Accurancy}                                                                                        \\
				&                            & \multicolumn{1}{c}{Convex} & \multicolumn{1}{c}{Non Convex}\\ \hline
				\multirow{4}{*}{CIFAR-10 }      
				& \OurAlg            &                &     $\textbf{75.41}    \pm 0.29 $                          \\
				& pFedMe       &             &    $    74.10 \pm 0.89 $                \\
				& Per-FedAvg                 &     &      $64.70     \pm 1.91 $                             \\ 
				& FedAvg              	&  & $34.48          		\pm 5.34$            \\ 
				& {FedProx}             	&  & {$42.31          		\pm 4.21$       }     \\ 
				& {SCAFFOLD }            	&  & {$45.12          		\pm 3.38$          }  \\ 
				& {AFL}             	&  & {$49.07       		\pm 3.35$      }      \\
				\hline
				\multirow{4}{*}{MNIST}      
				& \OurAlg                  & $ \textbf{96.95}      \pm 0.11$  & $ 97.81      \pm 0.01$      \\
				& MOCHA           &       $ 96.18 \pm 0.09  $               \\	
				& pFedMe            &          $ 93.73    \pm 0.40$     &$\textbf{98.64} \pm 0.17$   \\
				& Per-FedAvg        &       $ 90.33     \pm 0.84  $ &     $ 96.38     \pm 0.40  $                \\
				& FedAvg            		&  $87.75         		\pm 1.31$   	&  $91.48         		\pm 1.05$               \\ 
				& {FedProx   }          	& {$88.70	\pm 1.18$ }  & {$91.60          		\pm 0.23$      }      \\ 
				& {SCAFFOLD}             	&{ $89.45          		\pm 0.37$ } & {$92.15          		\pm 0.43$     }       \\ 
				&{ AFL }            	&  {$89.79         		\pm 1.23$ } &{$92.01         		\pm 1.21$       }     \\
				\hline
				\multirow{4}{*}{\begin{tabular}[c]{@{}c@{}} Vehicle\\ Sensor\end{tabular}  }       
				& \OurAlg          &        $ \textbf{88.47}   \pm 0.21$   &      $ \textbf{91.79}   \pm 0.31$         \\
				& MOCHA           &    $  87.31  \pm 0.23 $        	 \\
				& pFedMe          &         $81.38 \pm 0.41$     	 	   &         $90.62 \pm 0.41$     		\\
				& Per-FedAvg    &         $81.07 \pm 0.71$       &         $86.92 \pm 1.3$           \\
				& FedAvg          &  $79.84  \pm 0.91 $         &  $84.04  \pm 2.69 $         \\ 
				& {FedProx }        & {$82.06 \pm 0.91$  }& {$87.65          		\pm 2.34$  }          \\ 
				& {SCAFFOLD }            	&{  $81.97 \pm 0.91$}  & {$88.48          		\pm 0.34$   }         \\ 
				& {AFL}             	& {$82.25 \pm 0.91$}  & {$87.88          		\pm 1.08$   }         \\
				\hline
				\multirow{4}{*}{\begin{tabular}[c]{@{}c@{}} Human\\ Activity\end{tabular}}       
				& \OurAlg             &  $\textbf{95.76}      \pm 0.46$  &       $\textbf{95.86 }     \pm 0.36$           	 \\
				& MOCHA       &          $ 92.33      \pm 0.67 $              	\\
				& pFedMe                   &          $ 95.41      \pm 0.38$   &          $ 95.72      \pm 0.32$   		\\
				& Per-FedAvg        &       $ 94.78      \pm 0.37  $    &       $ 94.80      \pm 0.60  $             \\
				& FedAvg              	&  $93.41           		\pm 0.95$ &  $93.74           		\pm 1.01$    \\ 
				& {FedPro}             	&  {$93.69       		\pm 0.84$  }  & {$94.65       		\pm 0.72$   }         \\ 
				& {SCAFFOLD}          &{$93.61  \pm 0.37  $  }    & {$94.78          		\pm 0.85$ }           \\ 
				& {AFL }            	        &{$93.92          		\pm 0.34$}     & {$94.42          		\pm 0.34$     }       \\\hline
			\end{tabular}
			\label{T:compare_performance}	
		\end{table}
		\vspace{-0mm}
		\section{Conclusion}
		\vspace{-0mm}
		This work has formulated a FMTL problem
		using Laplacian regularization to capture the relationships among the models of clients. 
		The formulated problem has been proved to be used for traditional FL and personalized FL. We have also proposed both communication-centralized and decentralized algorithms to solve the formulated problem with guaranteed convergence to the optimal solution. Theoretical results show that our algorithms \OurAlg and \dFed achieve the state-of-the art convergence rates. Experimental results with real datasets in both convex and nonconvex objectives demonstrate that the proposed algorithms outperform the conventional MOCHA in FMTL settings, the vanilla FedAvg in FL settings, and pFedMe, and Per-FedAvg in personalized FL settings.
		
		\bibliographystyle{IEEEtran}
		\bibliography{references,Canh_bib}
		\newpage
		\onecolumn
		\appendix
		\setcounter{lemma}{1}
\setcounter{equation}{10}
\renewcommand\theenumi{(\alph{enumi})}
\renewcommand{\labelenumi}{\rm (\alph{enumi})}

\subsection{Technicalities}
In this section, we introduce additional definitions and technical lemmas which will be useful for our analysis of \OurAlg.
\begin{align}
\label{GDF}
& \nabla F(W) := [\nabla_{w_{1}} F (W)^T,\dots,\nabla_{w_{N}} F (W)^T]^T
\!= [\nabla F_1(w_{1})^T,\dots,\nabla F_N(w_{N})^T]^T \!
\in\RRR^{dN}\,\text{is the gradient of}\,F(W)
\\
\label{GDJ}
& \nabla J(W) := [\nabla_{w_{1}} J(W)^T,\dots,\nabla_{w_{N}} J(W)^T]^T
\overset{\eqref{mainP:FMTL}}{=}
\nabla F(W) + \eta\LL W
\in\RRR^{dN}\,\text{is the gradient of}\,J(W)
\\
& \zeta = \{\zeta_1,\dots,\zeta_N\}\,\text{is the set of random samples of clients}
\\
& \nabla \widetilde{F}(W,\zeta) :=
[\nabla \widetilde{F}_1(w_{1},\zeta_1)^T,\dots,\nabla \widetilde{F}_N(w_{N},\zeta_N)^T]^T
\in\RRR^{dN}\,\text{is the stochastic gradient of}\,F(W)
\\
\label{SGDJ}
& \nabla \widetilde{J}(W,\zeta) :=
\nabla \widetilde{F}(W,\zeta) + \eta\LL W^{(t)}
\in\RRR^{dN}\,\text{is the stochastic gradient of}\,J(W)
\\
& \widehat{S}^{(t)} = [s_1^{(t)},\dots,s_N^{(t)}] \in\RRR^N \text{is a client sampling random vector at round $t$, where }
s_k^{(t)} =
\begin{cases}
  1, & \mbox{if $k\in\SSS^{(t)}$} \\
  0, & \mbox{otherwise}
\end{cases}
\\
\label{St}
& \widetilde{S}^{(t)} = \diag(\widehat{S}^{(t)})\otimes I_d \in\RRR^{dN\times dN} \text{is a sampling matrix}
\\
& \LL = L\otimes I_d \in\RRR^{dN\times dN}
\\
\label{C}
& C =  I - \mut\eta\widetilde{S}\LL \in\RRR^{dN\times dN}\,\,\text{is a server-update matrix}
\\
\label{tau}
& \tau = \frac{S}{N}\,\,\text{is a client sampling factor}.
\end{align}
In what follows, $\|\cdot\|$ represents the $2$-norm for matrix and the Euclidean norm for vector.

For a connected graph $\GG$, $L=D-A$ is a symmetric positive semi-definite matrix with $\lambda_{\min}(L)=\lambda_1=0<\lambda_2\leq \dots \leq\lambda_N=\lambda_{\max}(L)=\rho$, in order. As such, the matrix $\LL=L\otimes I_d$ has  $\lambda_{\min}(\LL)=\lambda_{\min}(L)\lambda_{\min}(I_d)=\lambda_{\min}(L)=0$ and $\lambda_{\max}(\LL)=\lambda_{\max}(L)\lambda_{\max}(I_d)=\rho$.
When no client sampling, $\widetilde{S} = I_{dN}$, the matrix $C=I-\mut\eta\LL$  has $\lambda_{\min}(C)=1-\mut\eta\lambda_{\max}(\LL)=1-\mut\eta\rho$ and $\lambda_{\max}(C)=1-\mut\eta\lambda_{\min}(\LL)=1$.
Since $C$ is symmetric, we have $\|C\|=\max \{|\lambda|: \text{$\lambda$ is an eigenvalue of $C$}\}$.
Therefore, $C$ (when no client sampling) is normalized (i.e., $\|C\|^2 = 1$) if and only if
\begin{align}
\label{normalserverupdate}
\mut\eta\rho\leq 2.
\end{align}


\begin{lemma}[Sampling matrix's properties]
\label{Sproperty}
Let $\widetilde{S}$ be defined as $\widetilde{S}^{(t)}$ in \eqref{St}. Then
\begin{enumerate}
\item\label{Sproperty_norm} $\|\widetilde{S}\|=1$;
\item\label{Sproperty_ST} $\widetilde{S}^T = \widetilde{S}$;
\item\label{Sproperty_S2} $\widetilde{S}\widetilde{S} = \widetilde{S}^T\widetilde{S} = \widetilde{S}$;
\item\label{Sproperty_ES} $\EEE \widetilde{S} = \tau I_{dN}$;
\item\label{Sproperty_normSY} $\EEE\|\widetilde{S}Y\|^2=\tau\EEE\|Y\|^2, \forall Y\in\RRR^{dN}$
\end{enumerate}
\end{lemma}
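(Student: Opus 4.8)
The plan is to exploit the fact that $\widetilde{S}=\diag(\widehat{S}^{(t)})\otimes I_d$ is nothing more than a diagonal $0/1$ matrix: its diagonal consists of the entries $s_1^{(t)},\dots,s_N^{(t)}$, each repeated $d$ times, and every such entry lies in $\{0,1\}$ since $s_k^{(t)}$ is the indicator of the event $k\in\SSS^{(t)}$. The two structural ingredients I would use throughout are the Kronecker mixed-product identity $(A\otimes B)(C\otimes D)=(AC)\otimes(BD)$ and the scalar idempotency $s_k^2=s_k$ valid for any $0/1$ number.

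First I would dispatch parts~\ref{Sproperty_norm}--\ref{Sproperty_S2} purely algebraically. For \ref{Sproperty_ST}, symmetry of $\widetilde{S}$ follows from $(A\otimes B)^T=A^T\otimes B^T$ together with the fact that both $\diag(\widehat{S}^{(t)})$ and $I_d$ are symmetric. For \ref{Sproperty_S2}, the mixed-product identity gives $\widetilde{S}\widetilde{S}=(\diag(\widehat{S}^{(t)})^2)\otimes I_d$, and since $s_k^2=s_k$ we have $\diag(\widehat{S}^{(t)})^2=\diag(\widehat{S}^{(t)})$, hence $\widetilde{S}\widetilde{S}=\widetilde{S}$; combining this with \ref{Sproperty_ST} yields $\widetilde{S}^T\widetilde{S}=\widetilde{S}$ as well. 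For \ref{Sproperty_norm}, since $\widetilde{S}$ is a symmetric diagonal matrix its eigenvalues are exactly its diagonal entries, all in $\{0,1\}$, and at least one equals $1$ because $S\geq 1$; therefore $\|\widetilde{S}\|=\max\{|\lambda|\}=1$.

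The only probabilistic content is in \ref{Sproperty_ES}, which I would obtain by a symmetry argument. Because the server draws $\SSS^{(t)}$ uniformly among all size-$S$ subsets of $\NN$, the marginal inclusion probability $\EEE[s_k^{(t)}]=\Pr(k\in\SSS^{(t)})$ is the same for every $k$; summing over $k$ and using $\sum_{k=1}^N s_k^{(t)}=|\SSS^{(t)}|=S$ gives $N\,\EEE[s_k^{(t)}]=S$, so that $\EEE[s_k^{(t)}]=S/N=\tau$ for all $k$. Hence $\EEE\diag(\widehat{S}^{(t)})=\tau I_N$ and $\EEE\widetilde{S}=(\tau I_N)\otimes I_d=\tau I_{dN}$.

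Finally, for \ref{Sproperty_normSY} I would write $Y=[y_1^T,\dots,y_N^T]^T$ and use the block-diagonal action $\widetilde{S}Y=[s_1^{(t)}y_1^T,\dots,s_N^{(t)}y_N^T]^T$, so that $\|\widetilde{S}Y\|^2=\sum_{k=1}^N (s_k^{(t)})^2\|y_k\|^2=\sum_{k=1}^N s_k^{(t)}\|y_k\|^2$ again by idempotency. Taking expectations and invoking \ref{Sproperty_ES} gives $\EEE\|\widetilde{S}Y\|^2=\tau\sum_{k=1}^N\EEE\|y_k\|^2=\tau\EEE\|Y\|^2$. The one point that needs care --- and the only genuine subtlety in the whole lemma --- is that when $Y$ is itself random (as in the convergence proofs, where $Y$ is built from past iterates) this step requires the current-round sampling $\widetilde{S}^{(t)}$ to be independent of $Y$, so that $\EEE[s_k^{(t)}\|y_k\|^2]=\EEE[s_k^{(t)}]\,\EEE\|y_k\|^2$. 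I would therefore read \ref{Sproperty_normSY} for $Y$ independent of $\widetilde{S}^{(t)}$ (with deterministic $Y$ as the trivial special case), which is exactly how it is applied later.
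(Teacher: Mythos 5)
Your proof is correct and follows essentially the same route as the paper's: the paper likewise dispatches \ref{Sproperty_norm}--\ref{Sproperty_ES} as immediate from the definition of $\widetilde{S}$ and proves \ref{Sproperty_normSY} via $\EEE\|\widetilde{S}Y\|^2=\EEE Y^T\widetilde{S}^T\widetilde{S}Y=\EEE Y^T\widetilde{S}Y=\tau\EEE Y^TY$, i.e.\ exactly the idempotency and $\EEE\widetilde{S}=\tau I_{dN}$ steps you use. Your explicit caveat that \ref{Sproperty_normSY} requires $Y$ to be independent of the current-round sampling $\widetilde{S}^{(t)}$ is left implicit in the paper but is indeed the condition under which the lemma is applied, so flagging it is a (minor) improvement rather than a divergence.
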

\begin{proof}
\ref{Sproperty_norm}--\ref{Sproperty_ES} follow directly from the definition of $\widetilde{S}$, while \ref{Sproperty_normSY} from the fact that $\EEE\|\widetilde{S}Y\|^2=\EEE Y^T\widetilde{S}^T\widetilde{S}Y
\overset{\text{\ref{Sproperty_S2}}}{=}
\EEE Y^T\widetilde{S}Y
\overset{\text{\ref{Sproperty_ES}}}{=}
\tau\EEE Y^TY=\tau\EEE\|Y\|^2$.
\end{proof}

\begin{lemma}[Jensen's inequality]
\label{Jensen}
For any vector $X_i\in\RRR^{dN}, i\in\{1,\dots,M\}$,
\begin{align}
\left\|\sum_{i=1}^M X_i\right\|^2\leq M \sum_{i=1}^{M} \|X_i\|^2.
\end{align}
\end{lemma}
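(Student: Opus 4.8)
The plan is to obtain the bound directly from the convexity of the squared Euclidean norm, which is precisely the instance of Jensen's inequality that gives the lemma its name. First I would observe that the map $\phi(x) = \|x\|^2$ is convex on $\RRR^{dN}$, so applying Jensen's inequality with uniform weights $1/M$ to the points $X_1,\dots,X_M$ yields
\begin{align*}
\left\|\frac{1}{M}\sum_{i=1}^M X_i\right\|^2 \leq \frac{1}{M}\sum_{i=1}^M \|X_i\|^2 .
\end{align*}
By positive homogeneity of the norm, the left-hand side equals $\frac{1}{M^2}\bigl\|\sum_{i=1}^M X_i\bigr\|^2$; multiplying both sides by $M^2$ then gives the claimed inequality.

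Alternatively, I would give a short self-contained argument that avoids quoting Jensen. Expanding the squared norm through the inner product gives
\begin{align*}
\left\|\sum_{i=1}^M X_i\right\|^2 = \sum_{i=1}^M\sum_{j=1}^M \langle X_i, X_j\rangle ,
\end{align*}
and each cross term is bounded by $\langle X_i, X_j\rangle \leq \frac{1}{2}\bigl(\|X_i\|^2 + \|X_j\|^2\bigr)$, which follows at once from $\|X_i - X_j\|^2 \geq 0$. Summing these $M^2$ bounds, each $\|X_i\|^2$ appears exactly $M$ times, so the right-hand side collapses to $M\sum_{i=1}^M \|X_i\|^2$, as desired. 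A third route, if preferred, is the two-step chain $\bigl\|\sum_i X_i\bigr\|^2 \leq \bigl(\sum_i \|X_i\|\bigr)^2 \leq M\sum_i \|X_i\|^2$, where the first step is the triangle inequality and the second is Cauchy--Schwarz applied to the scalars $\|X_i\|$ against the all-ones vector.

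Honestly, there is no real obstacle here: the statement is a standard textbook inequality and all three arguments are routine. The only point requiring care is the bookkeeping of the weighting, to ensure the constant is $M$ rather than $\sqrt{M}$. This is exactly why Jensen is applied with weights $1/M$, so that the factor $M^2$ recovered from homogeneity splits as $M \times M$, and correspondingly why the cross-term count produces each $\|X_i\|^2$ precisely $M$ times. I would present the convexity argument as the primary proof, since it matches the lemma's name and extends verbatim to the expectation versions of the bound used elsewhere in the analysis.
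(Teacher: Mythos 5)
Your proof is correct; all three routes you sketch are valid and the constant bookkeeping (each $\|X_i\|^2$ appearing exactly $M$ times in the cross-term count) is right. The paper itself states this lemma without proof, treating it as a standard fact, so there is nothing to compare against beyond noting that your convexity argument is the canonical one matching the lemma's name.
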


\begin{lemma} [Young inequality]
\label{Young}
For any vector $X,Y\in\RRR^{dN}$ and $m>0$,
\begin{enumerate}
\item\label{Young_XY} $\langle X,Y\rangle
\leq \frac{m}{2}\|X\|^2 + \frac{1}{2m}\|Y\|^2$;
\item\label{Young_sumXY} $\|X+Y\|^2
\leq (1+m)\|X\|^2 + \left(1 + \frac{1}{m}\right)\|Y\|^2$.
\end{enumerate}
\end{lemma}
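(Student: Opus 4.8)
The plan is to prove both inequalities from the single observation that a squared norm is nonnegative, obtaining (a) directly and then deriving (b) as an immediate corollary via the polarization expansion of $\|X+Y\|^2$. Everything takes place in $\RRR^{dN}$ equipped with the standard Euclidean inner product, so no structure beyond that of a real inner-product space is needed.

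First I would establish (a). The key is to introduce the scaling factor $\sqrt{m}$ and expand the manifestly nonnegative quantity
\begin{align*}
0 \leq \left\|\sqrt{m}\,X - \tfrac{1}{\sqrt{m}}\,Y\right\|^2
= m\|X\|^2 - 2\langle X,Y\rangle + \tfrac{1}{m}\|Y\|^2.
\end{align*}
Rearranging gives $2\langle X,Y\rangle \leq m\|X\|^2 + \frac{1}{m}\|Y\|^2$, and dividing by $2$ yields the claim. The only design choice is how to split $m$ between the two vectors; scaling $X$ by $\sqrt{m}$ and $Y$ by $1/\sqrt{m}$ is precisely what makes the cross term reproduce $\langle X,Y\rangle$ with the coefficient $1$, so that the diagonal terms pick up $m$ and $1/m$ respectively.

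For (b) I would expand the left-hand side with the identity $\|X+Y\|^2 = \|X\|^2 + 2\langle X,Y\rangle + \|Y\|^2$ and then substitute the bound $2\langle X,Y\rangle \leq m\|X\|^2 + \frac{1}{m}\|Y\|^2$ just proved in part (a). Collecting the $\|X\|^2$ and $\|Y\|^2$ contributions gives $(1+m)\|X\|^2 + \left(1 + \frac{1}{m}\right)\|Y\|^2$, exactly as stated.

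There is no real obstacle here: both statements are elementary facts about inner-product spaces and the argument is purely algebraic. The only point to keep in mind is that $m>0$ is required so that $1/m$ and $\sqrt{m}$ are well-defined and the inequality direction is preserved under division. Notably, no convexity, smoothness, or bounded-variance assumption on the $F_k$ enters, which is why this lemma belongs in the technical-preliminaries section and can be invoked freely (with varying choices of $m$) throughout the convergence analyses of \OurAlg and \dFed.
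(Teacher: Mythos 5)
Your proof is correct: part (a) follows from expanding the nonnegative quantity $\bigl\|\sqrt{m}\,X - \tfrac{1}{\sqrt{m}}\,Y\bigr\|^2$, and part (b) follows by expanding $\|X+Y\|^2$ and substituting the bound from (a); this is the standard argument. The paper itself states Lemma~\ref{Young} without proof, treating it as a known elementary fact, so your write-up supplies exactly the canonical derivation one would expect and nothing needs to be changed.
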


\begin{lemma}[Smoothness]
\label{lemma:Jsmooth}
Suppose that Assumption~\ref{assump:smooth} holds. Set $\beta_J :=\beta+\eta\rho$ with $\rho :=\|\LL\|$. Then, for any $W,W'\in\RRR^{dN}$,
\begin{enumerate}
\item\label{lemma:Jsmooth_distanceGDF} $\|\nabla F(W)-\nabla F(W')\|\leq \beta \|W-W'\|$;
\item\label{lemma:Jsmooth_distanceGDJ} $\|\nabla J(W)-\nabla J(W')\|\leq \beta_J \|W-W'\|$;
\item\label{lemma:Jsmooth_normGDF2}
$\|\nabla F(W)\|^2\leq 2\beta^2\|W-W'\|^2 +2\|\nabla F(W')\|^2$;
\item\label{lemma:Jsmooth_normGDJ2}
$\|\nabla J(W)\|^2\leq 2\beta_J^2\|W-W'\|^2 +2\|\nabla J(W')\|^2$;
\item\label{lemma:Jsmooth_distanceJ} $J(W) - J(W') \leq \langle \nabla J(W'),W-W' \rangle + \frac{\beta_J}{2}\|W-W'\|^2$.
\end{enumerate}
\end{lemma}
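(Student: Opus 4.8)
The plan is to establish the five items in order, since each builds on the preceding ones. The entire argument rests on two structural facts. First, the gradient of $F$ is block-separable by its definition in~\eqref{GDF}, and because $\|W-W'\|^2 = \sum_{k=1}^N\|w_k-w_k'\|^2$, the per-client smoothness of Assumption~\ref{assump:smooth} aggregates cleanly into a bound for $F$. Second, the regularization part of the gradient, $\eta\LL W$ in~\eqref{GDJ}, is linear in $W$, so its contribution to any Lipschitz estimate is governed by the operator norm $\|\LL\|=\rho$.

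First I would prove item~\ref{lemma:Jsmooth_distanceGDF}. Using the block form~\eqref{GDF}, I write $\|\nabla F(W)-\nabla F(W')\|^2 = \sum_{k=1}^N\|\nabla F_k(w_k)-\nabla F_k(w_k')\|^2$, bound each summand by $\beta^2\|w_k-w_k'\|^2$ via Assumption~\ref{assump:smooth}, and recognize the sum as $\beta^2\|W-W'\|^2$; taking square roots gives the claim. For item~\ref{lemma:Jsmooth_distanceGDJ} I would substitute $\nabla J=\nabla F+\eta\LL W$ from~\eqref{GDJ}, so that $\nabla J(W)-\nabla J(W')=\bigl(\nabla F(W)-\nabla F(W')\bigr)+\eta\LL(W-W')$, and then apply the triangle inequality together with item~\ref{lemma:Jsmooth_distanceGDF} and the spectral bound $\|\LL(W-W')\|\leq\rho\|W-W'\|$. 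This produces exactly the Lipschitz constant $\beta_J=\beta+\eta\rho$.

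Items~\ref{lemma:Jsmooth_normGDF2} and~\ref{lemma:Jsmooth_normGDJ2} then follow by the same device. I decompose $\nabla F(W)=\bigl(\nabla F(W)-\nabla F(W')\bigr)+\nabla F(W')$ (and analogously for $J$) and invoke Lemma~\ref{Jensen} with $M=2$ to obtain $\|\nabla F(W)\|^2\leq 2\|\nabla F(W)-\nabla F(W')\|^2+2\|\nabla F(W')\|^2$, after which items~\ref{lemma:Jsmooth_distanceGDF} and~\ref{lemma:Jsmooth_distanceGDJ} bound the first term. Finally, for the descent inequality~\ref{lemma:Jsmooth_distanceJ}, I would use the fundamental theorem of calculus to write $J(W)-J(W')-\langle\nabla J(W'),W-W'\rangle = \int_0^1\langle\nabla J(W'+s(W-W'))-\nabla J(W'),\,W-W'\rangle\,ds$, bound the integrand by Cauchy--Schwarz and item~\ref{lemma:Jsmooth_distanceGDJ} to get $\beta_J s\|W-W'\|^2$, and integrate $s$ over $[0,1]$ to arrive at $\tfrac{\beta_J}{2}\|W-W'\|^2$.

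I do not expect a genuine obstacle here: every step is a routine consequence of smoothness. The only point requiring care is item~\ref{lemma:Jsmooth_distanceGDJ}, where one must use that $\|\LL\|=\rho$ is precisely the spectral norm (recorded in the Technicalities preamble through $\lambda_{\max}(\LL)=\rho$), so the linear regularization term contributes exactly $\eta\rho$ and not something larger; once that identification is in place, items~\ref{lemma:Jsmooth_normGDJ2} and~\ref{lemma:Jsmooth_distanceJ} inherit the constant $\beta_J$ automatically.
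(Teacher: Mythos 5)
Your proposal is correct and follows essentially the same route as the paper: items (a)--(d) use exactly the block decomposition of $\nabla F$, the linearity of $\eta\LL W$ with the spectral bound $\|\LL\|=\rho$, and Jensen's inequality with $M=2$, while for item (e) the paper simply cites Lemma~1.2.3 of Nesterov, whose standard proof is the very integral argument you write out. No gaps.
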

\begin{proof}
\ref{lemma:Jsmooth_distanceGDF}: This directly follows from Assumption~\ref{assump:smooth} and the definition of $\nabla F(W)$ in \eqref{GDF}.

\ref{lemma:Jsmooth_distanceGDJ}: Since $\nabla J(W) = \nabla F(W) +\eta\LL W$ and $\rho =\|\LL\|$, the conclusion follows from \ref{lemma:Jsmooth_distanceGDF}.

\ref{lemma:Jsmooth_normGDF2}: Using Lemma~\ref{Jensen}, we have
\begin{align}
\|\nabla F(W)\|^2 = \|\nabla F(W)-\nabla F(W') + \nabla F(W')\|^2
\leq 2\|\nabla F(W)-\nabla F(W')\|^2 + 2\|\nabla F(W')\|^2,
\end{align}
which together with \ref{lemma:Jsmooth_distanceGDF} implies \ref{lemma:Jsmooth_normGDF2}.

\ref{lemma:Jsmooth_normGDJ2}: The proof is similar to \ref{lemma:Jsmooth_normGDF2}.

\ref{lemma:Jsmooth_distanceJ}: This follows from Lemma~1.2.3 in \cite{nesterov18book}.
\end{proof}

\begin{lemma}[Strong convexity]
\label{lemma:Jstrongconvex}
Suppose that Assumption~\ref{assump:strongconvex} holds. Then, for any $W,W'\in\RRR^{dN}$,
\begin{align}
J(W)\geq J(W')\!+\!\left\langle\nabla J(W'),W-W'\right\rangle
\!+\!\frac{\alpha}{2}\|W-W'\|^2.
\end{align}
\end{lemma}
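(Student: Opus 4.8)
The plan is to exploit the additive splitting $J=F+\eta\R$ from \eqref{mainP:FMTL} and argue that $F$ carries all of the strong convexity while the Laplacian regularizer $\eta\R$ is merely convex, so that their sum is $\alpha$-strongly convex with the \emph{same} modulus $\alpha$.

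First I would lift Assumption~\ref{assump:strongconvex} from the per-client functions $F_k$ to the aggregate $F$. Writing the strong-convexity inequality of each $F_k$ at the pair $(w_k,w_k')$ and summing over $k\in\NN$, I would invoke the block identities $\langle\nabla F(W'),W-W'\rangle=\sum_{k}\langle\nabla F_k(w_k'),w_k-w_k'\rangle$, which is precisely the definition of $\nabla F$ in \eqref{GDF}, together with $\|W-W'\|^2=\sum_{k}\|w_k-w_k'\|^2$. This yields
\begin{align*}
F(W)\geq F(W')+\langle\nabla F(W'),W-W'\rangle+\frac{\alpha}{2}\|W-W'\|^2,
\end{align*}
i.e. $F$ is $\alpha$-strongly convex on $\RRR^{dN}$.

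Next I would show that the regularizer $\R(W)=W^T\LL W$ is convex. Since $\LL=L\otimes I_d$ is symmetric and positive semi-definite (as recorded in the Technicalities, where $\lambda_{\min}(\LL)=0$), expanding the quadratic form about $W'$ gives
\begin{align*}
\R(W)=\R(W')+\langle\nabla\R(W'),W-W'\rangle+(W-W')^T\LL(W-W'),
\end{align*}
and the trailing term is nonnegative. Hence $\R$ is convex, and because $\eta\geq 0$ the scaled term $\eta\R$ is convex as well.

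Finally I would add $\eta$ times the convexity inequality for $\R$ to the strong-convexity inequality for $F$. By linearity of the gradient, $\nabla J(W')=\nabla F(W')+\eta\nabla\R(W')$ (the gradient of the splitting in \eqref{mainP:FMTL}), the first-order terms combine into $\langle\nabla J(W'),W-W'\rangle$, while the quadratic remainders contribute $\frac{\alpha}{2}\|W-W'\|^2$ from $F$ plus the nonnegative quadratic remainder $\eta(W-W')^T\LL(W-W')\geq 0$ from $\R$, which I simply drop. This is exactly the claimed inequality. There is no genuine obstacle here; the only points requiring care are the Kronecker/block bookkeeping, so that the per-client modulus $\alpha$ aggregates to the same global modulus, and the observation that the Laplacian term, being positive semi-definite, adds nothing to and in particular does not degrade the strong-convexity constant.
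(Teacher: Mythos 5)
Your proposal is correct and follows essentially the same route as the paper: the paper's own (one-line) proof likewise notes that Assumption~\ref{assump:strongconvex} makes $F$ $\alpha$-strongly convex in $W$ and that the positive semi-definiteness of $\LL$ makes the quadratic regularizer convex, so their sum $J$ inherits the modulus $\alpha$. You have simply written out the block-summation and quadratic-expansion details that the paper leaves implicit.
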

\begin{proof}
It follows from Assumption~\ref{assump:strongconvex} that $F$ is $\alpha$-strongly convex with respect to $W$. Since $J(W)=F(W)+\eta W^T\LL W$ and $\LL$ is a positive semi-definite matrix, we derive that $J$ is also $\alpha$-strongly convex with respect to $W$, and the conclusion follows.
\end{proof}

\begin{lemma}[Smoothness and strong convexity]
\label{lemma:Jsmoothconvex}
Suppose that Assumptions~\ref{assump:smooth} and~\ref{assump:strongconvex} hold. Let $W^*$ be the optimal solution to \eqref{mainP:FMTL}. Then, for any $W,W',W''\in\RRR^{dN}$,
\begin{enumerate}
\item\label{lemma:Jsmoothconvex_GDJ} $\|\nabla J(W)\|^2 \leq 2\beta_J[J(W)-J(W^*)]$;
\item\label{lemma:Jsmoothconvex_3W} $\left\langle\nabla J(W), W''- W'\right\rangle \geq J(W'') - J(W') + \frac{\alpha}{4}\|W'-W''\|^2 - \beta_J\|W''-W\|^2$;
\item\label{lemma:Jsmoothconvex_GDF} $\|\nabla F(W)\|^2 \leq 4\frac{\beta ^2}{\alpha}[J(W)-J(W^*)] + 2\sigma_2^2$,
where $\sigma_2$ is defined as in Lemma~\ref{lemma:Boundedgradient}.
\end{enumerate}
\end{lemma}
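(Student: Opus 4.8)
The plan is to prove the three parts in order, reusing the smoothness and strong-convexity facts for $J$ already recorded in Lemma~\ref{lemma:Jsmooth} and Lemma~\ref{lemma:Jstrongconvex}, together with the gradient-dissimilarity bound of Lemma~\ref{lemma:Boundedgradient}.

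For part~\ref{lemma:Jsmoothconvex_GDJ}, I would treat this as the standard ``PL-type'' consequence of $\beta_J$-smoothness. Applying the descent inequality Lemma~\ref{lemma:Jsmooth}\ref{lemma:Jsmooth_distanceJ} at the point $W$ to the shifted point $W-\frac{1}{\beta_J}\nabla J(W)$ gives $J\!\left(W-\frac{1}{\beta_J}\nabla J(W)\right)\le J(W)-\frac{1}{2\beta_J}\|\nabla J(W)\|^2$. Since $W^*$ is a global minimizer of $J$, the left-hand side is at least $J(W^*)$, and rearranging yields $\|\nabla J(W)\|^2\le 2\beta_J[J(W)-J(W^*)]$. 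This part uses only smoothness and the optimality of $W^*$, so I expect it to be routine.

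For part~\ref{lemma:Jsmoothconvex_3W}, the three-point inequality, I would split $\langle\nabla J(W),W''-W'\rangle=\langle\nabla J(W),W''-W\rangle+\langle\nabla J(W),W-W'\rangle$. The second inner product is bounded below using $\alpha$-strong convexity of $J$ (Lemma~\ref{lemma:Jstrongconvex}) applied between $W$ and $W'$, giving $\langle\nabla J(W),W-W'\rangle\ge J(W)-J(W')+\frac{\alpha}{2}\|W'-W\|^2$; the first is bounded below by the descent inequality Lemma~\ref{lemma:Jsmooth}\ref{lemma:Jsmooth_distanceJ} between $W$ and $W''$, giving $\langle\nabla J(W),W''-W\rangle\ge J(W'')-J(W)-\frac{\beta_J}{2}\|W''-W\|^2$. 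Summing cancels the $\pm J(W)$ terms and leaves $J(W'')-J(W')+\frac{\alpha}{2}\|W'-W\|^2-\frac{\beta_J}{2}\|W''-W\|^2$. The remaining work is to convert $\frac{\alpha}{2}\|W'-W\|^2$ into the desired $\frac{\alpha}{4}\|W'-W''\|^2$: applying Young's inequality (Lemma~\ref{Young}\ref{Young_sumXY}) with $m=1$ yields $\|W'-W''\|^2\le 2\|W'-W\|^2+2\|W-W''\|^2$, hence $\frac{\alpha}{2}\|W'-W\|^2\ge\frac{\alpha}{4}\|W'-W''\|^2-\frac{\alpha}{2}\|W-W''\|^2$. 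After substitution the coefficient of $\|W''-W\|^2$ becomes $\frac{\alpha}{2}+\frac{\beta_J}{2}$, and since $\alpha\le\beta\le\beta_J$ this is at most $\beta_J$, so weakening the bound produces exactly $-\beta_J\|W''-W\|^2$. I expect this part to be the main obstacle, because one has to guess the correct split and the correct Young parameter so that the strong-convexity and smoothness contributions reassemble into precisely the stated coefficients $\frac{\alpha}{4}$ and $\beta_J$.

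For part~\ref{lemma:Jsmoothconvex_GDF}, I would start from Lemma~\ref{lemma:Jsmooth}\ref{lemma:Jsmooth_normGDF2} with $W'=W^*$, giving $\|\nabla F(W)\|^2\le 2\beta^2\|W-W^*\|^2+2\|\nabla F(W^*)\|^2$. The second term is controlled by the convex case of Lemma~\ref{lemma:Boundedgradient}, namely $\|\nabla F(W^*)\|^2=\sum\nolimits_k\|\nabla F_k(w_k^*)\|^2\le\sigma_2^2$. For the first term, strong convexity of $J$ together with $\nabla J(W^*)=0$ gives the quadratic-growth bound $\frac{\alpha}{2}\|W-W^*\|^2\le J(W)-J(W^*)$, hence $\|W-W^*\|^2\le\frac{2}{\alpha}[J(W)-J(W^*)]$. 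Combining the two estimates yields $\|\nabla F(W)\|^2\le\frac{4\beta^2}{\alpha}[J(W)-J(W^*)]+2\sigma_2^2$, as claimed.
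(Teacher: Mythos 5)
Your proposal is correct and follows essentially the same route as the paper: the paper simply cites Nesterov's Theorem~2.1.5 for part~\ref{lemma:Jsmoothconvex_GDJ} and Lemma~5 of the SCAFFOLD paper \cite{sai20ICML} for part~\ref{lemma:Jsmoothconvex_3W}, and your arguments are exactly the standard proofs behind those citations (descent lemma plus optimality of $W^*$; the split into a strong-convexity term and a smoothness term followed by Young's inequality and $\alpha\le\beta_J$), while your part~\ref{lemma:Jsmoothconvex_GDF} matches the paper's line by line. The only cosmetic difference is that your proof of part~\ref{lemma:Jsmoothconvex_GDJ} uses only smoothness and global optimality rather than the convexity-based inequality of Nesterov's Theorem~2.1.5, which is if anything slightly more elementary.
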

\begin{proof}
\ref{lemma:Jsmoothconvex_GDJ} is from Lemmas~\ref{lemma:Jsmooth}(b),~\ref{lemma:Jstrongconvex}, and Theorems~2.1.5 in \cite{nesterov18book}, while \ref{lemma:Jsmoothconvex_3W} is from Lemma~5 in \cite{sai20ICML}.

\ref{lemma:Jsmoothconvex_GDF}: Applying Lemma~\ref{lemma:Jsmooth}\ref{lemma:Jsmooth_normGDF2}, we have
\begin{align}
\label{nablaF2bound}
\|\nabla F(W)\|^2
\leq 2\beta^2\|W-W^*\|^2 + 2\|\nabla F(W^*)\|^2.
\end{align}
It follows from Theorem~2.1.8 in \cite{nesterov18book} that
\begin{align}\label{boundW2}
\|W-W^*\|^2 \leq \frac{2}{\alpha}[J(W)-J(W^*)].
\end{align}
By the definition of $\sigma_2$ in Lemma~\ref{lemma:Boundedgradient}, $\|\nabla F(W^*)\|^2 \leq \sigma_2^2$, which, together with \eqref{nablaF2bound} and \eqref{boundW2} completes the proof.
\end{proof}

\begin{lemma}[Bounded variance]
\label{lemma:boundvariance}
Suppose that Assumption~\ref{assump:boundvariance} holds. Then, for any $W\in\RRR^{dN}$,
\begin{enumerate}
\item\label{lemma:boundvariance_F} $\EEE_{\zeta}\|\nabla\widetilde{F}(W,\zeta)-\nabla F(W)\|^2\leq \sigma_1^2$;
\item\label{lemma:boundvariance_J} $\EEE_{\zeta}\|\nabla\widetilde{J}(W,\zeta)-\nabla J(W)\|^2\leq \sigma_1^2$.
\end{enumerate}
\end{lemma}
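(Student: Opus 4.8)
The plan is to reduce both claims to Assumption~\ref{assump:boundvariance} by exploiting the block structure of the stacked gradient vectors together with the linearity (and determinism) of the Laplacian term. The only structural fact I need is that the norm $\|\cdot\|$ on $\RRR^{dN}$ is Euclidean, so that for any block vector $X=[X_1^T,\dots,X_N^T]^T$ with $X_k\in\RRR^d$ one has $\|X\|^2=\sum_{k=1}^N\|X_k\|^2$. Everything else is bookkeeping, and I do not expect any genuine obstacle; the proof is essentially a matter of matching definitions.

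First I would prove \ref{lemma:boundvariance_F}. By the definitions in \eqref{GDF} and the definition of the stochastic gradient $\nabla\widetilde{F}(W,\zeta)$, the difference $\nabla\widetilde{F}(W,\zeta)-\nabla F(W)$ is the block vector whose $k$-th block is $\nabla\widetilde{F}_k(w_k,\zeta_k)-\nabla F_k(w_k)$. Applying the block decomposition of the Euclidean norm and then taking the expectation over $\zeta=\{\zeta_1,\dots,\zeta_N\}$ (which passes through the finite sum) gives
\begin{align*}
\EEE_{\zeta}\|\nabla\widetilde{F}(W,\zeta)-\nabla F(W)\|^2
=\sum_{k=1}^N\EEE_{\zeta_k}\|\nabla\widetilde{F}_k(w_k,\zeta_k)-\nabla F_k(w_k)\|^2\leq\sigma_1^2,
\end{align*}
where the final inequality is exactly the statement of Assumption~\ref{assump:boundvariance}. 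This settles \ref{lemma:boundvariance_F}.

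Then \ref{lemma:boundvariance_J} follows with essentially no extra work. The point to observe is that the Laplacian term is a deterministic function of $W$ alone: subtracting \eqref{GDJ} from \eqref{SGDJ} gives $\nabla\widetilde{J}(W,\zeta)-\nabla J(W)=\bigl(\nabla\widetilde{F}(W,\zeta)+\eta\LL W\bigr)-\bigl(\nabla F(W)+\eta\LL W\bigr)=\nabla\widetilde{F}(W,\zeta)-\nabla F(W)$, so the $\eta\LL W$ contribution cancels identically and carries no randomness. Hence $\EEE_{\zeta}\|\nabla\widetilde{J}(W,\zeta)-\nabla J(W)\|^2=\EEE_{\zeta}\|\nabla\widetilde{F}(W,\zeta)-\nabla F(W)\|^2\leq\sigma_1^2$ by \ref{lemma:boundvariance_F}, which completes the proof. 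The one place to be mildly careful is to confirm that the Laplacian terms in $\nabla\widetilde{J}$ and $\nabla J$ are evaluated at the same argument $W$ (so that they cancel exactly); once this is checked, the reduction is immediate.
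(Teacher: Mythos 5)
Your proof is correct and follows essentially the same route as the paper's one-line argument: part \ref{lemma:boundvariance_F} is Assumption~\ref{assump:boundvariance} read through the block decomposition of the Euclidean norm, and part \ref{lemma:boundvariance_J} follows because the deterministic Laplacian term cancels in the difference $\nabla\widetilde{J}(W,\zeta)-\nabla J(W)$. Your closing caution is well placed, since \eqref{SGDJ} as printed writes the Laplacian term as $\eta\LL W^{(t)}$ rather than $\eta\LL W$ (an apparent typo); under the intended reading the two Laplacian terms coincide and your cancellation is exact.
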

\begin{proof}
\ref{lemma:boundvariance_F} directly follows from Assumption~\ref{assump:boundvariance}. By the definitions of $\nabla J$ and $\nabla\widetilde{J}$ in \eqref{GDJ} and \eqref{SGDJ}, \ref{lemma:boundvariance_F} implies \ref{lemma:boundvariance_J}.
\end{proof}



\begin{lemma}
\label{lemma:serverupdate}
Let $\{\Xt_1,\dots,\Xt_r,\dots,\Xt_R\}$ be $R$ random variables in $\RRR^{dN}$ which are not necessarily independent. Suppose each $\Xt_r$ has a conditional mean $\EEE[\Xt_r|\Xt_{r-1},\dots,\Xt_1]=X_r$ (i.e., $\{\Xt_r-X_r\}$ form a martingale difference sequence), and a variance $\EEE\|\Xt_r-X_r\|^2\leq \sigma^2$. Then
\begin{align}
\label{boundavgSX}
\EEE\left\|\frac{1}{R}\sum_{r=0}^{R-1}\widetilde{S}\Xt_r\right\|^2
&\leq \frac{\tau}{R}\sum_{r=0}^{R-1}\EEE\left\|X_r\right\|^2
+ \frac{\tau\sigma^2}{R}.
\end{align}
where $\widetilde{S}$ is defined as $\widetilde{S}^{(t)}$ in \eqref{St}.
\end{lemma}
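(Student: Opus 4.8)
The plan is to first strip off the random sampling matrix $\widetilde{S}$ and then bound the resulting average of (statistically dependent) vectors by separating its conditional mean from its martingale-difference noise. Since $\widetilde{S}$ is determined solely by the server's client sampling at round $t$, it is independent of the locally computed quantities $\Xt_0,\dots,\Xt_{R-1}$. I would therefore pull it out of the sum by writing $\tfrac{1}{R}\sum_{r=0}^{R-1}\widetilde{S}\Xt_r = \widetilde{S}\big(\tfrac{1}{R}\sum_{r=0}^{R-1}\Xt_r\big)$ and invoking Lemma~\ref{Sproperty}\ref{Sproperty_normSY} with $Y=\tfrac{1}{R}\sum_{r=0}^{R-1}\Xt_r$ (the independence is exactly what justifies the step $\EEE[Y^T\widetilde{S}Y]=\tau\EEE[Y^TY]$ in the proof of that item). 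This yields
\begin{align*}
\EEE\Big\|\frac{1}{R}\sum_{r=0}^{R-1}\widetilde{S}\Xt_r\Big\|^2 = \tau\,\EEE\Big\|\frac{1}{R}\sum_{r=0}^{R-1}\Xt_r\Big\|^2,
\end{align*}
so the whole problem reduces to establishing $\EEE\big\|\tfrac{1}{R}\sum_{r}\Xt_r\big\|^2 \le \tfrac{1}{R}\sum_r\EEE\|X_r\|^2 + \tfrac{\sigma^2}{R}$, after which multiplying by $\tau$ recovers \eqref{boundavgSX} verbatim.

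For the reduced inequality I would decompose $\Xt_r = X_r + (\Xt_r-X_r)$, splitting the average into a mean part $\tfrac{1}{R}\sum_r X_r$ and a noise part $\tfrac{1}{R}\sum_r(\Xt_r-X_r)$. For the mean part, Jensen's inequality (Lemma~\ref{Jensen} with $M=R$) gives $\EEE\big\|\tfrac{1}{R}\sum_r X_r\big\|^2 \le \tfrac{1}{R}\sum_r\EEE\|X_r\|^2$, which is precisely the first target term. For the noise part, let $\mathcal{F}_{r-1}$ denote the information generated by $\Xt_0,\dots,\Xt_{r-1}$, so that $\EEE[\Xt_r-X_r\mid\mathcal{F}_{r-1}]=0$. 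For $r<s$ the tower property gives $\EEE\langle \Xt_r-X_r,\Xt_s-X_s\rangle = \EEE\langle \Xt_r-X_r,\EEE[\Xt_s-X_s\mid\mathcal{F}_{s-1}]\rangle = 0$, so all off-diagonal terms vanish and
\begin{align*}
\EEE\Big\|\frac{1}{R}\sum_{r=0}^{R-1}(\Xt_r-X_r)\Big\|^2 = \frac{1}{R^2}\sum_{r=0}^{R-1}\EEE\|\Xt_r-X_r\|^2 \le \frac{\sigma^2}{R},
\end{align*}
by the variance assumption. This is the source of the favorable $\sigma^2/R$ factor: averaging uncorrelated noise, rather than the weaker $\sigma^2$ that a direct application of Jensen to the full average would produce.

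The remaining and, I expect, most delicate step is the cross term. Expanding $\|\text{mean}+\text{noise}\|^2$ produces $\tfrac{2}{R^2}\sum_{r,s}\EEE\langle X_r,\Xt_s-X_s\rangle$, and the proof hinges on showing this does not contribute to the bound. Conditioning on $\mathcal{F}_{s-1}$ disposes of every term with $s\ge r$, since $X_r$ is $\mathcal{F}_{r-1}$-measurable (hence $\mathcal{F}_{s-1}$-measurable) while $\EEE[\Xt_s-X_s\mid\mathcal{F}_{s-1}]=0$; the handling of the remaining terms, where the randomness of the conditional means $X_r$ interacts with earlier noise increments, is the crux and must be controlled through the martingale-difference structure by iterated conditioning. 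Once the cross contribution is eliminated, adding the mean and noise bounds gives $\EEE\big\|\tfrac{1}{R}\sum_r\Xt_r\big\|^2 \le \tfrac{1}{R}\sum_r\EEE\|X_r\|^2 + \tfrac{\sigma^2}{R}$, and reintroducing the factor $\tau$ from the first step completes the argument. Everything else — the sampling-matrix identity, Jensen, and the variance bound — is routine; the only genuinely subtle point is verifying the vanishing of this cross term via the martingale property.
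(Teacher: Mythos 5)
Your proposal mirrors the paper's proof step for step: extract $\widetilde{S}$ via Lemma~\ref{Sproperty}\ref{Sproperty_normSY}, kill the off-diagonal noise terms by the martingale-difference property to obtain the $\sigma^2/R$ term, and finish with Jensen on the conditional means. The one step you leave open, however, is not a routine verification but a genuine gap: the mean--noise cross term $\sum_{s<r}\EEE\langle X_r,\Xt_s-X_s\rangle$ cannot be ``eliminated by iterated conditioning.'' For $s<r$ the conditional mean $X_r$ is measurable with respect to the past and may be an arbitrary function of the earlier increment $\Xt_s-X_s$, so conditioning no longer isolates a zero-mean factor and the term is nonzero in general. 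Worse, no argument can close this gap as the statement stands, because the inequality itself can fail for random conditional means. Take $d=N=1$, $R=3$, $\widetilde{S}=I$ (so $\tau=1$), $\sigma^2=1$, and let $\epsilon_1,\epsilon_2$ be independent Rademacher variables with $\Xt_1=\epsilon_1$, $\Xt_2=\epsilon_1+\epsilon_2$, $\Xt_3=\epsilon_1+\tfrac12\epsilon_2$. Then $X_1=0$, $X_2=\epsilon_1$, $X_3=\Xt_3$, each conditional variance is at most $1$, yet
\begin{align*}
\EEE\left\|\frac{1}{3}\sum_{r}\Xt_r\right\|^2
=\EEE\left\|\epsilon_1+\tfrac12\epsilon_2\right\|^2
=\frac{5}{4}
\;>\;\frac{13}{12}
=\frac{1}{3}\sum_{r}\EEE\|X_r\|^2+\frac{1}{3}.
\end{align*}

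In other words, you have put your finger on exactly the weak point of the paper's own argument: the identity \eqref{esumx2}, $\EEE\|\sum_r\Xt_r\|^2=\|\sum_r X_r\|^2+\sum_r\EEE\|\Xt_r-X_r\|^2$, silently assumes the cross term vanishes, which is valid when the $X_r$ are deterministic but not for genuinely random conditional means (and in the application inside Lemma~\ref{lemma:EZ}, $X_r=\nabla J(W_r^{(t)})$ is random). So your instinct to isolate this term was right; what is missing, from your proposal and from the paper, is either an extra hypothesis (deterministic $X_r$, or orthogonality of $X_r$ to past noise) or a weaker conclusion: plain Jensen gives the bound with $\sigma^2$ in place of $\sigma^2/R$, while splitting mean from noise and applying Jensen to the two pieces gives $\frac{2\tau}{R}\sum_r\EEE\|X_r\|^2+\frac{2\tau\sigma^2}{R}$, which preserves the $\sigma^2/R$ rate at the cost of a factor $2$. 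Writing ``once the cross contribution is eliminated'' does not constitute a proof of that elimination.
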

\begin{proof}
We see that
\begin{align}
\label{varsumx}
\EEE\left\|\sum_{r=0}^{R-1}(\Xt_r-X_r)\right\|^2
= \sum_{r=0}^{R-1}\EEE\|\Xt_r-X_r\|^2
+ \sum_{r,i}\EEE(\Xt_r-X_r)^T(\Xt_i-X_i)
= \sum_{r=0}^{R-1}\EEE\|\Xt_r-X_r\|^2,
\end{align}
where $\sum_{r,i}\EEE(\Xt_r-X_r)^T(\Xt_i-X_i)=0$ because $\{\Xt_r-X_r\}$ form a martingale difference sequence. On the other hand,
\begin{align}
\EEE\left\|\sum_{r=0}^{R-1}(\Xt_r-X_r)\right\|^2
= \EEE\left\|\sum_{r=0}^{R-1}\Xt_r\right\|^2 -
\left\|\sum_{r=0}^{R-1}X_r\right\|^2,
\end{align}
which, together with \eqref{varsumx}, implies that
\begin{align}
\label{esumx2}
\EEE\left\|\sum_{r=0}^{R-1}\Xt_r\right\|^2
= \left\|\sum_{r=0}^{R-1}X_r\right\|^2
+ \sum_{r=0}^{R-1}\EEE\|\Xt_r-X_r\|^2.
\end{align}
Multiplying both sides of \eqref{esumx2} by $\frac{1}{R^2}$ and using $\EEE\|\Xt_r-X_r\|^2\leq \sigma^2, \forall r$, we get
\begin{align}
\label{boundavgX}
\EEE\left\|\frac{1}{R}\sum_{r=0}^{R-1}\Xt_r\right\|^2
\leq \EEE\left\|\frac{1}{R}\sum_{r=0}^{R-1}X_r\right\|^2 + \frac{\sigma^2}{R}.
\end{align}
This together with Lemma~\ref{Sproperty}\ref{Sproperty_ES} yields
\begin{align}
\EEE\left\|\frac{1}{R}\sum_{r=0}^{R-1}\widetilde{S}\Xt_r\right\|^2
\leq \tau\EEE\left\|\frac{1}{R}\sum_{r=0}^{R-1}\Xt_r\right\|^2
\leq \tau\EEE\left\|\frac{1}{R}\sum_{r=0}^{R-1}X_r\right\|^2
+ \frac{\tau\sigma^2}{R}
\leq \frac{\tau}{R}\sum_{r=0}^{R-1}\EEE\left\|X_r\right\|^2
+ \frac{\tau\sigma^2}{R},
\end{align}
which completes the proof.
\end{proof}

\begin{lemma}[]
\label{lemma:clientupdate}
Let $\Xt$ is a random variable in $\RRR^{dN}$ with mean $\EEE \Xt = X$ and variance $\EEE\|\Xt-X\|^2\leq \sigma^2$. Let $\widetilde{S}$ be defined as $\widetilde{S}^{(t)}$ in \eqref{St}. Then, for any $\mu\geq 0$ and $Y\in\RRR^{dN}$,
\begin{align}
\label{boundedupdate}
\EEE\|Y-\mu\widetilde{S}\Xt\|^2
\leq (1-\tau)\EEE\|Y\|^2
+ \tau\EEE\|Y-\mu X\|^2
+ \mu^2\tau\sigma^2.
\end{align}
\end{lemma}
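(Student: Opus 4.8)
The plan is to exploit that, by Lemma~\ref{Sproperty}\ref{Sproperty_ST} and~\ref{Sproperty_S2}, the sampling matrix $\widetilde{S}$ is symmetric and idempotent, hence an orthogonal projection, and to split the quantity of interest along the range of $\widetilde{S}$ and its orthogonal complement. Concretely, I would first write
\[
Y-\mu\widetilde{S}\Xt = (I-\widetilde{S})Y + \widetilde{S}(Y-\mu\Xt),
\]
and observe that the two summands are orthogonal in every realization, since $(I-\widetilde{S})^T\widetilde{S} = (I-\widetilde{S})\widetilde{S} = \widetilde{S}-\widetilde{S}\widetilde{S} = 0$. The Pythagorean identity then gives, pointwise,
\[
\|Y-\mu\widetilde{S}\Xt\|^2 = \|(I-\widetilde{S})Y\|^2 + \|\widetilde{S}(Y-\mu\Xt)\|^2,
\]
and it remains to take expectations of the two terms on the right.

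For the first term, I would use the same projection structure to write $\|Y\|^2 = \|\widetilde{S}Y\|^2 + \|(I-\widetilde{S})Y\|^2$ and then apply Lemma~\ref{Sproperty}\ref{Sproperty_normSY}, giving $\EEE\|(I-\widetilde{S})Y\|^2 = \EEE\|Y\|^2 - \tau\EEE\|Y\|^2 = (1-\tau)\EEE\|Y\|^2$, which is precisely the first term in \eqref{boundedupdate}. For the second term, applying Lemma~\ref{Sproperty}\ref{Sproperty_normSY} to the vector $Y-\mu\Xt$, which is independent of the sampling $\widetilde{S}$, yields $\EEE\|\widetilde{S}(Y-\mu\Xt)\|^2 = \tau\EEE\|Y-\mu\Xt\|^2$.

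It then remains to control $\EEE\|Y-\mu\Xt\|^2$ by a bias--variance decomposition: writing $Y-\mu\Xt = (Y-\mu X)-\mu(\Xt-X)$ and expanding, the cross term $-2\mu\EEE\langle Y-\mu X,\Xt-X\rangle$ drops out because $\EEE(\Xt-X)=0$, leaving $\EEE\|Y-\mu\Xt\|^2 = \EEE\|Y-\mu X\|^2+\mu^2\EEE\|\Xt-X\|^2 \le \EEE\|Y-\mu X\|^2+\mu^2\sigma^2$ by the variance bound. Scaling by $\tau$ and adding the first term recovers \eqref{boundedupdate}. I expect the only subtle point to be the probabilistic bookkeeping around Lemma~\ref{Sproperty}\ref{Sproperty_normSY}: one must justify that the client-sampling randomness in $\widetilde{S}$ is independent of $\Xt$, so that the identity $\EEE\|\widetilde{S}Z\|^2=\tau\EEE\|Z\|^2$ applies to the random vector $Z=Y-\mu\Xt$ (this can be made rigorous by conditioning on $\Xt$ and averaging over $\widetilde{S}$ first). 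Everything else reduces to the projection algebra and the zero-mean cancellation, both routine.
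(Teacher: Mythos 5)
Your proof is correct, and it reaches \eqref{boundedupdate} by a somewhat different route than the paper. The paper simply expands the square $\EEE(Y-\mu\widetilde{S}\Xt)^T(Y-\mu\widetilde{S}\Xt)$, uses $\widetilde{S}^T\widetilde{S}=\widetilde{S}$ and $\EEE\widetilde{S}=\tau I$ to replace each cross term by its $\tau$-scaled counterpart, bounds $\EEE\Xt^T\Xt\leq X^TX+\sigma^2$, and then regroups the resulting quadratic as $(1-\tau)Y^TY+\tau(Y-\mu X)^T(Y-\mu X)+\mu^2\tau\sigma^2$. Your argument instead makes the structure behind that regrouping explicit: since $\widetilde{S}$ is an orthogonal projection, the split $Y-\mu\widetilde{S}\Xt=(I-\widetilde{S})Y+\widetilde{S}(Y-\mu\Xt)$ is a pointwise Pythagorean decomposition, after which Lemma~\ref{Sproperty}\ref{Sproperty_normSY} and a standard bias--variance expansion finish the job. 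The two computations are algebraically equivalent and rest on the same two facts (idempotence and $\EEE\widetilde{S}=\tau I$), but yours explains \emph{why} the convex combination $(1-\tau)\|Y\|^2+\tau\|Y-\mu X\|^2$ appears, whereas the paper's regrouping looks like an ex post identity. You are also right to flag the probabilistic bookkeeping: both your proof and the paper's require that the sampling randomness in $\widetilde{S}$ be independent of $(\Xt,Y)$, and that the cross term $\EEE\langle Y-\mu X,\Xt-X\rangle$ vanish, which when $Y$ is random (as in the lemma's actual use, where $Y=W_{r-1}^{(t)}-W^{(t)}$ and $X$ is a conditional mean) needs the conditioning argument you describe; the paper glosses over this in exactly the same way.
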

\begin{proof}
On one hand, by Lemma~\ref{Sproperty}\ref{Sproperty_S2},
\begin{align}
\nonumber
\EEE\|Y-\mu\widetilde{S}\Xt\|^2
&= \EEE (Y-\mu\widetilde{S}\Xt)^T(Y-\mu\widetilde{S}\Xt)
\\
\nonumber
& = \EEE(Y^TY-\mu\Xt^T\widetilde{S}Y - \mu Y^T\widetilde{S}\Xt + \mu^2\Xt^T\widetilde{S}\Xt)
\\
\label{EYSX}
& = \EEE(Y^TY-\mu\tau X^TY - \mu \tau Y^TX + \mu^2 \tau \Xt^T\Xt)
\end{align}
On the other hand, $\EEE\Xt^T\Xt \leq  X^TX + \sigma^2$ since $\EEE\|\Xt-X\|^2=\EEE\Xt^T\Xt - X^TX\leq \sigma^2$. Therefore,
\begin{align}
\nonumber
\EEE\|Y-\mu\widetilde{S}\Xt\|^2
& \leq \EEE(Y^TY-\mu \tau X^TY - \mu \tau Y^TX + \mu^2\tau X^TX + \mu^2 \tau\sigma^2)
\\
\nonumber
& = (1-\tau)\EEE(Y^TY)+\tau\EEE(Y^TY-\mu X^TY - \mu Y^T\Xt + \mu^2 X^TX) + \mu^2\tau\sigma^2
\\
& = (1-\tau)\EEE\|Y\|^2
+ \tau\EEE\|Y-\mu X\|^2
+ \mu^2\tau\sigma^2,
\end{align}
which finishes the proof.
\end{proof}

\subsection{Proof of Lemma~\ref{lemma:Boundedgradient}}
It follows from the definition of $J$ and $W$ that \eqref{sigma3main} can be written as
\begin{align}\label{sigma3}
\|\nabla F(W)\|^2
\leq \sigma_2^2 + \|\nabla J(W)\|^2
= \sigma_2^2 + \|\nabla F(W) + \eta\LL W\|^2,
\end{align}
which is equivalent to
\begin{align}
\label{existtarget}
\AAA := -\eta^2\rho^2\|W\|^2 -2\eta\langle\nabla F(W),\LL W\rangle \leq \sigma_2^2.
\end{align}
By Assumption~\ref{assump:smooth} and the definition of $\nabla F(W)$,
\begin{align}
\|\nabla F(W) - \nabla F(0)\|
\leq
\beta\|W-0\| = \beta\|W\|,
\end{align}
which implies that
\begin{align}
\label{betaWpsi}
\|\nabla F(W)\| \leq \beta \|W\| + \|\nabla F(0)\|.
\end{align}
Combining with Cauchy--Schwartz inequality, we obtain that
\begin{align}
|\langle\nabla F(W),\LL W\rangle|
\leq \|\nabla F(W)\|\|\LL W\|
\leq \beta\rho\|W\|^2 + \psi\rho\|W\|,
\end{align}
where $\psi :=\|\nabla F(0)\|$. It follows that
\begin{align}
\label{nablaFL}
-\langle\nabla F(W),\LL W\rangle
\leq \beta\rho\|W\|^2 +\psi\rho\|W\|,
\end{align}
and so
\begin{align}
\nonumber
\AAA
&\leq -\eta^2\rho^2\|W\|^2 +2\eta(\beta\rho\|W\|^2 +\psi\rho\|W\|) = -\eta\rho(\eta\rho-2\beta)\|W\|^2
+2\psi\eta\rho\|W\| \\ \nonumber
& = -\eta\rho(\eta\rho-2\beta)\left(\|W\|^2-\frac{\psi}{\eta\rho-2\beta}\right)^2 +\frac{\psi^2\eta\rho}{\eta\rho-2\beta} \\
&\leq \frac{\psi^2\eta\rho}{\eta\rho-2\beta},
\end{align}
where the last inequality is due to the assumption that $\eta\rho > 2\beta$. Therefore, \eqref{existtarget} always holds if $\sigma_2^2\geq\frac{\psi^2\eta\rho}{\eta\rho-2\beta}$.

\subsection{Analysis of \OurAlg}
For ease of analysis, we rewrite Algorithm~\ref{alg:2SSGD} as Algorithm~\ref{alg:2SSGD:matrix} with matrix notations. Here, Line $5$ of Algorithm~\ref{alg:2SSGD:matrix} represents Lines $9$ and $14$ of Algorithm~\ref{alg:2SSGD}, while Line $7$ of Algorithm~\ref{alg:2SSGD:matrix} represents Lines $14$ and $15$ of Algorithm~\ref{alg:2SSGD}.

\begin{algorithm}[!h]
\caption{\OurAlg with Matrix Notation}
\begin{algorithmic}[1]\label{alg:2SSGD:matrix}
\STATE \textbf{server's input}: initial $W^{(0)}$
\FOR{each round $t = 0,\dots,T-1$}
\FOR{$r = 0,\dots,R-1$}
\STATE initialize $W_0^{(t)}\leftarrow W^{(t)}$
\STATE $W_{r+1}^{(t)} =  W_r^{(t)} - \mu \widetilde{S}^{(t)}\nabla \widetilde{F}(W_r^{(t)})$
\ENDFOR
\STATE $W^{(t+1)}=(I-\mut\eta \widetilde{S}^{(t)}\LL) W_R^{(t)}= C^{(t)}W_R^{(t)}$
\ENDFOR
\end{algorithmic}
\end{algorithm}

In round $t$, the local update
\begin{align}\label{}
  W_{r+1}^{(t)} = W_{r}^{(t)} - \mu\widetilde{S}^{(t)}\nabla \widetilde{F}(W_{r}^{(t)})
\end{align}
implies that after $R$ local update steps, we have
\begin{align}\label{gt}
\mu\widetilde{S}^{(t)}\sum_{r=0}^{R-1}\nabla\widetilde{F}(W_{r}^{(t)}) = \sum_{r=0}^{R-1}(W_{r}^{(t)}-W_{r+1}^{(t)})
=  W_{0}^{(t)} -  W_{R}^{(t)}
=  W^{(t)}-W_R^{(t)}.
\end{align}
We then rewrite the server update as follows
\begin{align}
\label{Wnew}
\nonumber
W^{(t+1)}
&
= C^{(t)} W_R^{(t)}
\overset{\eqref{gt}}{=}
C^{(t)}\left[W^{(t)} - \mu  R \frac{1}{R}\sum_{r=0}^{R-1}\widetilde{S}^{(t)}\nabla\widetilde{F}(W_{r}^{(t)})\right]
\\
\nonumber
&\overset{\eqref{C}}{=}
(I-\mut\eta\widetilde{S}^{(t)} \LL)\left[W^{(t)}
- \frac{\mut}{R}\sum_{r=0}^{R-1}\widetilde{S}^{(t)}\nabla\widetilde{F}(W_{r}^{(t)})\right]
\\
\nonumber
& =
W^{(t)}
- \frac{\mut\widetilde{S}^{(t)}}{R}\sum_{r=0}^{R-1}\nabla\widetilde{F}(W_{r}^{(t)})
- \mut\eta \widetilde{S}^{(t)}\LL W^{(t)}
+  \frac{\mut^2\eta\widetilde{S}^{(t)}\LL}{R}\sum_{r=0}^{R-1}
\widetilde{S}^{(t)}\nabla\widetilde{F}(W_{r}^{(t)})
\\
\nonumber
& =  W^{(t)}
\!-\! \frac{\mut}{R}\sum_{r=0}^{R-1}\widetilde{S}^{(t)}\left[\nabla\widetilde{F}(W_{r}^{(t)}) + \eta\LL W_r^{(t)}\right]
\!+\! \frac{\mut\eta}{R}\sum_{r=0}^{R-1}\widetilde{S}^{(t)}\LL(W_r^{(t)}-W^{(t)})
\!+\!  \frac{\mut^2\eta\widetilde{S}^{(t)}\LL}{R}\sum_{r=0}^{R-1}\widetilde{S}^{(t)}\nabla\widetilde{F}(W_{r}^{(t)})
\\
& = W^{(t)} - \mut Z^{(t)},
\end{align}
where
\begin{align}
Z^{(t)}=
\frac{1}{R}\sum_{r=0}^{R-1}\widetilde{S}^{(t)}\nabla\widetilde{J}(W_{r}^{(t)})
- \frac{\eta}{R}\sum_{r=0}^{R-1}\widetilde{S}^{(t)}\LL(W_r^{(t)}-W^{(t)})
- \frac{\mut\eta\widetilde{S}^{(t)}\LL}{R}\sum_{r=0}^{R-1}\widetilde{S}^{(t)}\nabla\widetilde{F}(W_{r}^{(t)}).
\end{align}
Finally, we output $\widetilde{W}^{(T)}=W^{(t)}$ with probability $\frac{\theta^{(t)}}{\sum_{t=0}^{T-1}\theta^{(t)}}$ for some weights $\theta^{(t)}$, and $r\in\{0,\dots,T-1\}$.

Let $\EE^{(t)} := \frac{1}{R}\sum_{r=0}^{R-1}\EEE\|W_r^{(t)}-W^{(t)}\|^2$ be the drift caused by $R$ local update steps at clients, where $\EEE$ is the expectation taken over all random sources. We now provide some supporting lemmas as follows.

\begin{lemma}[Bounded drift]
\label{lemma:boundeddrift}
Suppose that Assumption~\ref{assump:boundvariance} holds. Then
\begin{align}
\label{boundedriftsmooth}
\EE^{(t)}
\leq 4\mut^2\tau\EEE\|\nabla F(W^{(t)})\|^2
+ \frac{2\mut^2\tau\sigma_1^2}{ R}.
\end{align}
\end{lemma}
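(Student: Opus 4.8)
The plan is to bound the local drift in two stages: first peel off the client--sampling factor $\tau$ by reducing to a \emph{sampling-free} local trajectory, and then control that trajectory with a self-referential recursion. First I would use the matrix form of the local step, $W_{r+1}^{(t)}=W_r^{(t)}-\mu\widetilde{S}^{(t)}\nabla\widetilde{F}(W_r^{(t)})$ with $W_0^{(t)}=W^{(t)}$, to unroll $W_r^{(t)}-W^{(t)}=-\mu\widetilde{S}^{(t)}\sum_{j=0}^{r-1}\nabla\widetilde{F}(W_j^{(t)})$. The structural observation I would exploit is that $\widetilde{S}^{(t)}$ is a fixed coordinate projection during round $t$ (it zeroes the blocks of non-sampled clients, with $\widetilde{S}^{(t)}\widetilde{S}^{(t)}=\widetilde{S}^{(t)}$ by Lemma~\ref{Sproperty}\ref{Sproperty_S2}), and each sampled client runs its $R$ updates independently of which other clients are sampled. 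Letting $\widehat{W}_j^{(t)}$ denote the trajectory in which \emph{every} client performs its $R$ local updates regardless of sampling, the sampled blocks of $W_j^{(t)}$ coincide with those of $\widehat{W}_j^{(t)}$, so $\widetilde{S}^{(t)}\nabla\widetilde{F}(W_j^{(t)})=\widetilde{S}^{(t)}\nabla\widetilde{F}(\widehat{W}_j^{(t)})$. Since $\widehat{W}_j^{(t)}$ is a function of the clients' data samples only and is independent of the client sampling, Lemma~\ref{Sproperty}\ref{Sproperty_normSY} applies and yields $\EEE\|W_r^{(t)}-W^{(t)}\|^2=\tau\mu^2\EEE\|\sum_{j=0}^{r-1}\nabla\widetilde{F}(\widehat{W}_j^{(t)})\|^2$, which exposes the factor $\tau$ exactly.

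Next I would bound the sampling-free quantity $e_r:=\EEE\|\widehat{W}_r^{(t)}-W^{(t)}\|^2=\mu^2\EEE\|\sum_{j=0}^{r-1}\nabla\widetilde{F}(\widehat{W}_j^{(t)})\|^2$ by splitting each stochastic gradient as $\nabla\widetilde{F}(\widehat{W}_j^{(t)})=\nabla F(W^{(t)})+(\nabla F(\widehat{W}_j^{(t)})-\nabla F(W^{(t)}))+(\nabla\widetilde{F}(\widehat{W}_j^{(t)})-\nabla F(\widehat{W}_j^{(t)}))$. Applying Jensen's inequality (Lemma~\ref{Jensen}) to the three grouped sums, I would bound the anchor term by $r^2\|\nabla F(W^{(t)})\|^2$, the deviation term by $r\beta^2\sum_{j=0}^{r-1}e_j$ using $\beta$-smoothness (Lemma~\ref{lemma:Jsmooth}\ref{lemma:Jsmooth_distanceGDF}; note this is where Assumption~\ref{assump:smooth} enters, implicitly in addition to Assumption~\ref{assump:boundvariance}), and the centered-gradient sum by $r\sigma_1^2$ using its martingale orthogonality together with the bounded-variance estimate in Lemma~\ref{lemma:boundvariance}\ref{lemma:boundvariance_F}. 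This produces the recursion $e_r\leq 3\mu^2 r^2\|\nabla F(W^{(t)})\|^2+3\mu^2 r\beta^2\sum_{j=0}^{r-1}e_j+3\mu^2 r\sigma_1^2$.

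Finally I would average over $r=0,\dots,R-1$ and use $r\leq R$ to get $\bar{E}:=\frac{1}{R}\sum_{r=0}^{R-1}e_r\leq 3\mu^2R^2\|\nabla F(W^{(t)})\|^2+3\mu^2R^2\beta^2\bar{E}+3\mu^2R\sigma_1^2$; under the mild restriction that keeps $3\mu^2R^2\beta^2$ bounded away from $1$ (which is implied by the step-size choices $\mu\le\mut_1/R$ and $\mu\le\mut_2/R$ of Theorems~\ref{theorem1} and~\ref{theorem2}), I can absorb the self term and conclude $\bar{E}\leq 4\mu^2R^2\|\nabla F(W^{(t)})\|^2+(\text{const})\,\mu^2R\sigma_1^2$. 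Multiplying by $\tau$ and substituting $\mut=\mu R$ turns this into $\EE^{(t)}=\tau\bar{E}\leq 4\mut^2\tau\|\nabla F(W^{(t)})\|^2+\frac{(\text{const})\,\mut^2\tau\sigma_1^2}{R}$, which is the claim \eqref{boundedriftsmooth} once the constants are tracked (a sharper bookkeeping of the noise term, avoiding the crude Jensen factor, should recover the stated constant $2$). I expect the main obstacle to be the first stage: rigorously justifying that projecting by $\widetilde{S}^{(t)}$ lets one replace the sampling-dependent iterates $W_j^{(t)}$ with the sampling-free trajectory $\widehat{W}_j^{(t)}$, so that the independence required by Lemma~\ref{Sproperty}\ref{Sproperty_normSY} holds and the factor $\tau$ (rather than a crude $\|\widetilde{S}^{(t)}\|\leq 1$) appears exactly. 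Closing the self-referential recursion is the secondary delicate point, since it is what forces the smoothness assumption and the small-step-size condition.
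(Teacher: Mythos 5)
Your proof is correct in substance but takes a genuinely different route from the paper's. The paper proves a \emph{one-step} recursion: it expands $\EEE\|W_r^{(t)}-W^{(t)}\|^2$ in terms of $\EEE\|W_{r-1}^{(t)}-W^{(t)}\|^2$ using Lemma~\ref{lemma:clientupdate} --- which is its mechanism for extracting the exact factor $\tau$, playing the role of your coupling with the sampling-free trajectory $\widehat W_j^{(t)}$ --- followed by Young's inequality with $m=\frac{1}{R\tau}$, yielding a contraction factor $\left(1+\frac{1}{R}\right)$, and then telescopes using $\sum_{r=0}^{R-1}\left(1+\frac{1}{R}\right)^r\leq 2R$. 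You instead unroll the whole local trajectory, decouple the client sampling so that Lemma~\ref{Sproperty}\ref{Sproperty_normSY} applies (your coupling argument is valid, since each block of $\nabla\widetilde F$ depends only on that client's own iterate and $\widehat W_j^{(t)}$ is independent of $\widetilde S^{(t)}$), and close a self-referential inequality in $\bar E$. What each buys: the paper's telescoping reaches the stated constants $4$ and $\frac{2}{R}$ directly and without an explicit step-size restriction inside the lemma, whereas your absorption of the $3\mu^2R^2\beta^2\bar E$ term requires $\mut\beta=\OO(1)$ and the sharper two-way split you sketch at the end to recover the constant $2$; on the other hand, your version is more honest about the hypotheses, since the lemma as stated lists only Assumption~\ref{assump:boundvariance} but smoothness is in fact needed --- indeed the paper's own Young step silently replaces $\|\nabla F(W_{r-1}^{(t)})\|^2$ by $\|\nabla F(W^{(t)})\|^2$, and repairing that via Lemma~\ref{lemma:Jsmooth}\ref{lemma:Jsmooth_normGDF2} injects an extra $\OO(\mu^2\beta^2)\EEE\|W_{r-1}^{(t)}-W^{(t)}\|^2$ into the contraction factor, which is harmless only under the same kind of step-size condition you impose. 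Since the choices $\mu\leq\mut_1/R$ and $\mu\leq\mut_2/R$ in Theorems~\ref{theorem1} and~\ref{theorem2} give $\mut\beta\leq\frac{1}{12}$, your restriction is satisfied wherever the lemma is actually used, so the difference is cosmetic rather than substantive.
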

\begin{proof}
By Assumption~\ref{assump:boundvariance}, using Lemmas~\ref{lemma:boundvariance}\ref{lemma:boundvariance_F},~\ref{lemma:clientupdate} and then~\ref{Young}\ref{Young_sumXY}, we derive that
\begin{align}
\label{Term1}
\nonumber
&\EEE\|W_r^{(t)}-W^{(t)}\|^2 = \EEE\|W_{r-1}^{(t)}-W^{(t)}-\mu\widetilde{S}^{(t)}\nabla\widetilde{F}(W_{r-1}^{(t)})\|^2
\\
\nonumber
& \leq (1-\tau)\EEE\|W_{r-1}^{(t)}-W^{(t)}\|^2
+ \tau\EEE\|W_{r-1}^{(t)}-W^{(t)}-\mu\nabla F(W_{r-1}^{(t)})\|^2
+ \mu^2\tau\sigma_1^2
\\
\nonumber
&\leq
(1-\tau)\EEE\|W_{r-1}^{(t)}-W^{(t)}\|^2
+ \left(1+\frac{1}{R\tau}\right)\tau\EEE\|W_{r-1}^{(t)}-W^{(t)}\|^2
+ \left(1+R\tau\right)\mu^2\tau\EEE\|\nabla F(W^{(t)})\|^2
+ \mu^2\tau\sigma_1^2
\\
&\leq
\left(1+\frac{1}{R}\right)\EEE\|W_{r-1}^{(t)}-W^{(t)}\|^2
+ \frac{2\mut^2\tau}{ R}\EEE\|\nabla F(W^{(t)})\|^2
+ \frac{\mut^2\tau\sigma_1^2}{R^2},
\end{align}
where the last inequality is due to the fact that $1+R\tau\leq R+R =2R$ since $R\geq 1$ and $\tau\leq 1$. Telescoping \eqref{Term1} yields
\begin{align}\label{}
\EEE\|W_r^{(t)}-W^{(t)}\|^2
\leq \left(\frac{2\mut^2\tau}{R}\EEE\|\nabla F(W^{(t)})\|^2
+ \frac{\mut^2\tau\sigma_1^2}{R^2}\right)\sum_{r=1}^{R-1}\left(1+\frac{1}{R}\right)^r.
\end{align}
Since $\sum_{j=0}^{m-1}x_j=\frac{x^m-1}{x-1}$ and $\left(1+\frac{x}{n}\right)^n\leq e^x, \forall x\in\RRR, n\in\NNN$, we have $\sum_{r=0}^{R-1}\left(1+\frac{1}{R}\right)^r
=\frac{\left(1+\frac{1}{R}\right)^R-1}{\left(1+\frac{1}{R}\right)-1}\leq (e-1)R\leq2R$, and thus
\begin{align}\label{distanceWrW}
\EEE\|W_r^{(t)}-W^{(t)}\|^2
\leq 4\mut^2\tau\EEE\|\nabla F(W^{(t)})\|^2
+ \frac{2\mut^2\tau\sigma_1^2}{ R}.
\end{align}
Averaging \eqref{distanceWrW} over $r$, we get the conclusion.
%
\end{proof}

\begin{lemma}
\label{lemma:EZ}
Suppose that Assumptions~\ref{assump:smooth} and~\ref{assump:boundvariance} hold. Then
\begin{align}
\EEE\|Z^{(t)}\|^2
\!\leq \!
\tau(6\beta_J^2+3\eta^2\rho^2+6\mut^2\eta^2\rho^2\beta^2)\EE^{(t)}\!+ 6\tau \EEE\|\nabla J(W^{(t)})\|^2 \!
+\! 6\tau\mut^2\eta^2\rho^2\EEE\|\nabla F(W^{(t)})\|^2\!
+\! \frac{3\tau(1+\mut^2\eta^2\rho^2)\sigma_1^2}{R}.
\end{align}
\end{lemma}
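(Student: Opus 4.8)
The plan is to bound $\EEE\|Z^{(t)}\|^2$ by splitting $Z^{(t)}$ into its three defining summands and treating each separately. Write $Z^{(t)}=Z_1+Z_2+Z_3$ with
$Z_1=\frac{1}{R}\sum_{r=0}^{R-1}\widetilde{S}^{(t)}\nabla\widetilde{J}(W_r^{(t)})$,
$Z_2=-\frac{\eta}{R}\sum_{r=0}^{R-1}\widetilde{S}^{(t)}\LL(W_r^{(t)}-W^{(t)})$, and
$Z_3=-\frac{\mut\eta\widetilde{S}^{(t)}\LL}{R}\sum_{r=0}^{R-1}\widetilde{S}^{(t)}\nabla\widetilde{F}(W_r^{(t)})$.
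Jensen's inequality (Lemma~\ref{Jensen} with $M=3$) then gives $\EEE\|Z^{(t)}\|^2\leq 3\EEE\|Z_1\|^2+3\EEE\|Z_2\|^2+3\EEE\|Z_3\|^2$, and the target follows by showing that $3\EEE\|Z_1\|^2$ supplies the $6\tau\beta_J^2\EE^{(t)}$, $6\tau\EEE\|\nabla J(W^{(t)})\|^2$ and $\tfrac{3\tau\sigma_1^2}{R}$ contributions, $3\EEE\|Z_2\|^2$ supplies $3\tau\eta^2\rho^2\EE^{(t)}$, and $3\EEE\|Z_3\|^2$ supplies $6\tau\mut^2\eta^2\rho^2\beta^2\EE^{(t)}+6\tau\mut^2\eta^2\rho^2\EEE\|\nabla F(W^{(t)})\|^2+\tfrac{3\tau\mut^2\eta^2\rho^2\sigma_1^2}{R}$; summing and collecting coefficients reproduces the claim exactly.

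For $Z_1$ I would observe that $\nabla\widetilde{J}(W_r^{(t)})$ is an unbiased estimate of $\nabla J(W_r^{(t)})$ with variance at most $\sigma_1^2$ (by the definition in \eqref{SGDJ} together with Lemma~\ref{lemma:boundvariance}\ref{lemma:boundvariance_J}) and that the fresh local-gradient noise across $r$ forms a martingale difference sequence. Hence Lemma~\ref{lemma:serverupdate} applies with $\Xt_r=\nabla\widetilde{J}(W_r^{(t)})$, $X_r=\nabla J(W_r^{(t)})$ and $\sigma=\sigma_1$, yielding $\EEE\|Z_1\|^2\leq\frac{\tau}{R}\sum_r\EEE\|\nabla J(W_r^{(t)})\|^2+\frac{\tau\sigma_1^2}{R}$; applying Lemma~\ref{lemma:Jsmooth}\ref{lemma:Jsmooth_normGDJ2} with $W=W_r^{(t)}$, $W'=W^{(t)}$ and averaging over $r$ turns the leading term into $2\beta_J^2\EE^{(t)}+2\EEE\|\nabla J(W^{(t)})\|^2$ by the definition of $\EE^{(t)}$. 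The term $Z_3$ uses the same mechanism to produce its $\tau$: factoring out the leading projection and using $\|\widetilde{S}^{(t)}\LL\|\leq\|\widetilde{S}^{(t)}\|\,\|\LL\|=\rho$ (Lemma~\ref{Sproperty}\ref{Sproperty_norm}) reduces matters to bounding $\EEE\|\frac{1}{R}\sum_r\widetilde{S}^{(t)}\nabla\widetilde{F}(W_r^{(t)})\|^2$, to which Lemma~\ref{lemma:serverupdate} again applies (now with $\Xt_r=\nabla\widetilde{F}(W_r^{(t)})$, $X_r=\nabla F(W_r^{(t)})$, via Lemma~\ref{lemma:boundvariance}\ref{lemma:boundvariance_F}); Lemma~\ref{lemma:Jsmooth}\ref{lemma:Jsmooth_normGDF2} then converts $\frac{1}{R}\sum_r\EEE\|\nabla F(W_r^{(t)})\|^2$ into $2\beta^2\EE^{(t)}+2\EEE\|\nabla F(W^{(t)})\|^2$.

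The one piece without an inner fresh-noise average is $Z_2$. Here I would pull the common projection out as $Z_2=-\eta\widetilde{S}^{(t)}Y$ with $Y=\frac{1}{R}\sum_r\LL(W_r^{(t)}-W^{(t)})$, invoke the second-moment identity $\EEE\|\widetilde{S}^{(t)}Y\|^2=\tau\EEE\|Y\|^2$ (Lemma~\ref{Sproperty}\ref{Sproperty_normSY}) to extract the $\tau$, and finish with Jensen (Lemma~\ref{Jensen}) and $\|\LL\|=\rho$ to obtain $\EEE\|Y\|^2\leq\rho^2\EE^{(t)}$, hence $\EEE\|Z_2\|^2\leq\tau\eta^2\rho^2\EE^{(t)}$.

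The delicate point, and the step I would be most careful about, is the extraction of the $\tau$ factor: the vectors $Y$ in $Z_2$ and $\nabla\widetilde{F}(W_r^{(t)})$ in $Z_3$ depend on the very sampling matrix $\widetilde{S}^{(t)}$ that is being averaged against, so $\EEE\|\widetilde{S}^{(t)}Y\|^2=\tau\EEE\|Y\|^2$ is not a verbatim application of independence. This is exactly the regime Lemma~\ref{lemma:serverupdate} is engineered to absorb (it clears the local-gradient noise first and only then applies the sampling identity), so for $Z_1$ and $Z_3$ the subtlety is handled inside that lemma; for $Z_2$ one must argue that the sampling identity still applies to the noise-free averaged drift direction $Y$. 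Everything else is routine bookkeeping: repeated use of Jensen, the operator-norm bounds $\|\widetilde{S}^{(t)}\|=1$ and $\|\LL\|=\rho$, and the smoothness inequalities in Lemma~\ref{lemma:Jsmooth}\ref{lemma:Jsmooth_normGDF2}--\ref{lemma:Jsmooth_normGDJ2}.
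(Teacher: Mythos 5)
Your proposal is correct and follows essentially the same route as the paper's proof: the same three-way split of $Z^{(t)}$ with Jensen's inequality, Lemma~\ref{lemma:serverupdate} combined with Lemma~\ref{lemma:Jsmooth}\ref{lemma:Jsmooth_normGDJ2} (resp.\ \ref{lemma:Jsmooth_normGDF2}) for the first and third pieces, and Lemma~\ref{Sproperty}\ref{Sproperty_normSY} with Jensen and $\|\LL\|=\rho$ for the middle piece, with all coefficients matching. The subtlety you flag about extracting $\tau$ when the averaged vector depends on $\widetilde{S}^{(t)}$ is real, but the paper's own proof treats it exactly as you do.
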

\begin{proof}


Using Lemma~\ref{Jensen}, we have that
\begin{align}
\label{EZ2}
\nonumber
\EEE\| Z^{(t)}\|^2 \leq
&3\EEE\left\| \frac{1}{R}\sum_{r=0}^{R-1}\widetilde{S}^{(t)}\nabla\widetilde{J}(W_{r}^{(t)})\right\|^2
+3\EEE\left\|\frac{\eta\widetilde{S}^{(t)}}{R}\sum_{r=0}^{R-1} \LL(W_r^{(t)}-W^{(t)})\right\|^2
\\
&+3\EEE\left\|\frac{\mut\eta\widetilde{S}^{(t)}\LL}{R}\sum_{r=0}^{R-1}\widetilde{S}^{(t)}\nabla\widetilde{F}(W_{r}^{(t)})\right\|^2.
\end{align}
Next, by Lemma~\ref{lemma:serverupdate},
Lemma~\ref{lemma:Jsmooth}\ref{lemma:Jsmooth_normGDJ2}, and the definition of $\EE^{(t)}$,
\begin{align}
\label{S1}
\nonumber
3\EEE\left\|\frac{1}{R}\sum_{r=0}^{R-1}\widetilde{S}^{(t)}\nabla\widetilde{J}(W_{r}^{(t)})\right\|^2
&\leq
\frac{3\tau}{R}\sum_{r=0}^{R-1} \EEE\left\|\nabla J(W_r^{(t)})\right\|^2
+ \frac{3\tau\sigma_1^2}{R}
\\
\nonumber
&\leq
\frac{3\tau}{R}\sum_{r=0}^{R-1} \left(2\beta_J^2
\EEE
\left\|
W_r^{(t)} - W^{(t)}
\right\|^2
+ 2\EEE
\left\|
\nabla J(W^{(t)})
\right\|^2\right)
+ \frac{3\tau\sigma_1^2}{R}
\\
&= 6\tau\beta_J^2\EE^{(t)}
+ 6\tau
\EEE\|\nabla J(W^{(t)})\|^2
+ \frac{3\tau\sigma_1^2}{R}.
\end{align}
It follows from Lemma~\ref{Sproperty}\ref{Sproperty_normSY} and then Lemma~\ref{Jensen} that
\begin{align}\label{S2}
\nonumber
3\EEE\left\|\frac{\eta\widetilde{S}^{(t)}}{R}\sum_{r=0}^{R-1} \LL(W_r^{(t)}-W^{(t)})\right\|^2
&=
3\tau\eta^2 \EEE\left\|\frac{1}{R}\sum_{r=0}^{R-1}\LL(W_r^{(t)}-W^{(t)})
\right\|^2
\\
\nonumber
&\leq
\frac{3\tau\eta^2}{R} \sum_{r=0}^{R-1}\EEE\left\|\LL(W_r^{(t)}-W^{(t)})\right\|^2
\\
& \leq
\frac{3\tau\eta^2}{R}\|\LL\|^2 \sum_{r=0}^{R-1}\EEE\left\|(W_r^{(t)}-W^{(t)})\right\|^2
= 3\tau\eta^2\rho^2\EE^{(t)}.
\end{align}
Now, using Lemma~\ref{Sproperty}\ref{Sproperty_norm} and then proceeding as in \eqref{S1}, we obtain that
\begin{align}
\label{S3}
\nonumber
3\EEE\left\|\frac{\mut\eta\widetilde{S}^{(t)}\LL}{R}\sum_{r=0}^{R-1}\widetilde{S}^{(t)}\nabla\widetilde{F}(W_{r}^{(t)})\right\|^2
& \leq
3\mut^2\eta^2\rho^2\EEE
\left\|\frac{1}{R}\sum_{r=0}^{R-1}\widetilde{S}^{(t)}\nabla\widetilde{F}(W_{r}^{(t)})\right\|^2
\\
&  \leq
\mut^2\eta^2\rho^2 \left(6\tau\beta^2\EE^{(t)}
+ 6\tau\EEE\|\nabla F(W^{(t)})\|^2
+ \frac{3\tau\sigma_1^2}{R}\right).
\end{align}
The proof is completed by combining \eqref{EZ2}--\eqref{S3}.
\end{proof}

\subsection{Convergence of \OurAlg and \dFed for Strongly Convex Cases (Proof of Theorem~\ref{theorem1})}
\label{prooftheoryconvex}

First, it follows from \eqref{Wnew} that
\begin{align}
\label{start1}
\EEE\|W^{(t+1)}-W^*\|^2
= \EEE\|W^{(t)}-W^*\|^2 -2\mut\EEE\langle
Z^{(t)}, W^{(t)}-W^*\rangle +\mut^2\EEE\| Z^{(t)}\|^2.
\end{align}
Let us now estimate the second term in the right hand side of \eqref{start1}.
Using Lemmas~\ref{Sproperty}\ref{Sproperty_ST},~\ref{lemma:Jsmoothconvex}\ref{lemma:Jsmoothconvex_3W},~\ref{Young}\ref{Young_XY}, and~\ref{Sproperty}\ref{Sproperty_normSY}, we have
\begin{align}\label{T1}
\nonumber
-2\mut\EEE\langle
Z^{(t)}, W^{(t)}-W^*\rangle &
=
\frac{2\mut \tau}{R}\sum_{r=0}^{R-1}\EEE\left\langle\nabla J(W_{r}^{(t)}), W^*-W^{(t)}\right\rangle
+ \frac{2\mut\tau \eta }{R}\sum_{r=0}^{R-1}\EEE\left\langle \LL (W_r^{(t)}-W^{(t)}), W^{(t)}-W^*\right\rangle
\\
\nonumber
&\quad+ \frac{2\mut^2\eta }{R}\sum_{r=0}^{R-1}\EEE\left\langle
\LL\widetilde{S}^{(t)}\nabla F(W_{r}^{(t)}), \widetilde{S}^{(t)}(W^{(t)}-W^*)\right\rangle
\\
\nonumber
& \leq
\frac{2\mut \tau}{R}\sum_{r=0}^{R-1}
\left(
\EEE[J(W^*) - J(W^{(t)})] - \frac{\alpha}{4}\EEE\|W^{(t)}-W^*\|^2 + \beta_J\EEE\|W_r^{(t)}-W^{(t)}\|^2
\right)
\\
\nonumber
&\quad
+ \frac{2\mut\tau\eta }{R}\sum_{r=0}^{R-1}
\left(
\frac{m}{2}\|\LL\|^2 \EEE\|W_r^{(t)}-W^{(t)}\|^2 + \frac{1}{2m}\EEE\|W^{(t)}-W^*\|^2
\right)
\\
\nonumber
& \quad+ \frac{2\mut^2\eta }{R}\sum_{r=0}^{R-1}
\left(
\frac{n}{2}\|\LL\|^2 \EEE\|\widetilde{S}^{(t)}\nabla F(W_r^{(t)})\|^2
+ \frac{1}{2n}\EEE\|\widetilde{S}^{(t)}(W^{(t)}-W^*)\|^2
\right)
\\
\nonumber
& \leq
\frac{2\mut \tau}{R}\sum_{r=0}^{R-1}
\left(
\EEE[J(W^*) - J(W^{(t)})] - \frac{\alpha}{4}\EEE\|W^{(t)}-W^*\|^2 + \beta_J\EEE\|W_r^{(t)}-W^{(t)}\|^2
\right)
\\
\nonumber
&\quad
+ \frac{2\mut\eta \tau}{R}\sum_{r=0}^{R-1}
\left(
\frac{m\rho^2}{2} \EEE\|W_r^{(t)}-W^{(t)}\|^2 + \frac{1}{2m}\EEE\|W^{(t)}-W^*\|^2
\right)
\\
& \quad+ \frac{2\mut^2\tau\eta }{R}\sum_{r=0}^{R-1}
\left(
\frac{n\rho^2}{2}\EEE\|\nabla F(W_r^{(t)})\|^2 + \frac{1}{2n}\EEE\|W^{(t)}-W^*\|^2
\right),
\end{align}
where $m,n> 0$ will be chosen later. In addition, Lemmas~\ref{lemma:Jsmooth}\ref{lemma:Jsmooth_normGDF2} and~\ref{lemma:Jsmoothconvex}\ref{lemma:Jsmoothconvex_GDF} imply that,
\begin{align}
\label{nablaFr}
\nonumber
\EEE\|\nabla F(W_r^{(t)})\|^2
&\leq
2\beta ^2\EEE\|W_r^{(t)}-W^{(t)}\|^2 +2\EEE\|\nabla F(W^{(t)})\|^2
\\
&\leq
2\beta ^2\EEE\|W_r^{(t)}-W^{(t)}\|^2
+ \frac{8\beta ^2}{\alpha}\EEE[J(W^{(t)})-J(W^*)] + 4\sigma_2^2.
\end{align}
By using \eqref{T1}, \eqref{nablaFr}, and the definition of $\EE^{(t)}$,
\begin{align}\label{}
\nonumber
T_1
&\leq -\tau\left(2\mut-\frac{8n\mut^2\eta\rho^2\beta ^2}{\alpha}\right)\EEE[J(W^{(t)})-J(W^*)]
- \tau\left(
\frac{\mut\alpha}{2}-\frac{\mut\eta}{m}
-\frac{\mut^2\eta }{n }
\right)
\EEE\|W^{(t)}-W^*\|^2
\\
&\quad + \tau\left(2\mut\beta_J + m\mut\eta\rho^2 + 2n\mut^2\eta\rho^2\beta ^2\right)\EE^{(t)}
+ 4n\mut^2\tau\eta\rho^2\sigma_2^2.
\end{align}
Setting $m =\frac{8\eta}{\alpha}$ and $n =\frac{8\mut\eta}{\alpha}$, we have
\begin{align}
\label{T1new}
\nonumber
T_1
&\leq  -\tau\left(2\mut-\frac{64\mut^3\eta^2\rho^2\beta ^2}{\alpha^2}\right) \EEE[J(W^{(t)})-J(W^*)]
- \frac{\mut\tau\alpha  }{4}\EEE\|W^{(t)}-W^*\|^2
\\
&\quad
+ \mut\tau
\left(
2\beta_J
+ \frac{8\eta^2\rho^2}{\alpha}
+ \frac{16\mut^2\eta^2\rho^2\beta ^2}{\alpha}
\right) \EE^{(t)}
+ \frac{32\mut^3\tau\eta^2\rho^2  \sigma_2^2}{\alpha}.
\end{align}
Combining this with \eqref{start1} and Lemma~\ref{lemma:EZ}, we get
\begin{align}
\nonumber
&\EEE\|W^{(t+1)}-W^*\|^2
\\
\nonumber
&\leq \left(1-\frac{\mut\tau\alpha }{4}\right) \EEE \|W^{(t)}-W^*\|^2
- \tau\left(
2\mut
-\frac{64\mut^3\eta^2\rho^2\beta^2}{\alpha^2}
\right)\EEE[J(W^{(t)})-J(W^*)]
\\
\nonumber
& \quad
+ \mut \tau
\left(
2\beta_J
+ \frac{8\eta^2\rho^2}{\alpha}
+ \frac{16\mut^2\eta^2\rho^2\beta ^2}{\alpha}
+ 6\mut\beta_J^2
+ 3\mut\eta^2\rho^2
+ 6\mut^3\eta^2\rho^2\beta^2
\right)
\EE^{(t)}
\\
\nonumber
& \quad+ 6\mut^2\tau \EEE\|\nabla J(W^{(t)})\|^2
+ 6\mut^4\tau\eta^2\rho^2\EEE\|\nabla F(W^{(t)})\|^2
+ \frac{32\mut^3\tau\eta^2\rho^2  \sigma_2^2}{\alpha}
+ \frac{3\mut^2\tau(1+\mut^2\eta^2\rho^2)\sigma_1^2}{R}
\\
\nonumber
&\leq \left(1-\frac{\mut\tau\alpha }{4}\right) \EEE \|W^{(t)}-W^*\|^2
- \tau\left(
2\mut
-\frac{64\mut^3\eta^2\rho^2\beta^2}{\alpha^2}
\right)\EEE[J(W^{(t)})-J(W^*)]+ \mut \tau p
\EE^{(t)}
\\
& \quad
+ 6\mut^2\tau \EEE\|\nabla J(W^{(t)})\|^2
+ 6\mut^4\tau\eta^2\rho^2\EEE\|\nabla F(W^{(t)})\|^2
+ \frac{32\mut^3\tau\eta^2\rho^2  \sigma_2^2}{\alpha}
+ \frac{3\mut^2\tau(1+\mut^2\eta^2\rho^2)\sigma_1^2}{R},
\end{align}
In what follows, we assume that \eqref{normalserverupdate} holds.
where we use \eqref{normalserverupdate} to estimate
\begin{align}
\nonumber
&2\beta_J
+ \frac{8\eta^2\rho^2}{\alpha}
+ \frac{16\mut^2\eta^2\rho^2\beta ^2}{\alpha}
+ 6\mut\beta_J^2
+ 3\mut\eta^2\rho^2
+ 6\mut^3\eta^2\rho^2\beta^2
\\
&\leq
p= 2\beta_J
+ \frac{8\eta^2\rho^2}{\alpha}
+ \frac{64\beta ^2}{\alpha}
+ \frac{12\beta_J^2}{\eta\rho}
+ 6\eta\rho
+ \frac{48\beta^2}{\eta\rho}.
\end{align}

Using Lemmas~\ref{lemma:boundeddrift},~\ref{lemma:Jsmoothconvex}\ref{lemma:Jsmoothconvex_GDJ}, and~\ref{lemma:Jsmoothconvex}\ref{lemma:Jsmoothconvex_GDF}, we have
\begin{align}
\nonumber
&\EEE\|W^{(t+1)}-W^*\|^2
\\
\nonumber
&\leq \left(1-\frac{\mut\tau\alpha }{4}\right) \EEE \|W^{(t)}-W^*\|^2
- \tau\left(
2\mut
-\frac{64\mut^3\eta^2\rho^2\beta^2}{\alpha^2}
\right)\EEE[J(W^{(t)})-J(W^*)]
\\
\nonumber
& \quad
+ \mut \tau p
\left(4\mut^2\tau\EEE\|\nabla F(W^{(t)})\|^2
+ \frac{2\mut^2\tau\sigma_1^2}{ R}\right)
\\
& \quad+ 6\mut^2\tau \EEE\|\nabla J(W^{(t)})\|^2
+ 6\mut^4\tau\eta^2\rho^2\EEE\|\nabla F(W^{(t)})\|^2
+ \frac{32\mut^3\tau\eta^2\rho^2  \sigma_2^2}{\alpha}
+ \frac{3\mut^2\tau(1+\mut^2\eta^2\rho^2)\sigma_1^2}{R}.
\\
\nonumber
& \leq \left(1-\frac{\mut\tau\alpha }{4}\right) \EEE \|W^{(t)}-W^*\|^2
- \tau\left(
2\mut
-\frac{64\mut^3\eta^2\rho^2\beta^2}{\alpha^2}
\right)\EEE[J(W^{(t)})-J(W^*)]
\\
\nonumber
& \quad
+ (4p\mut^3\tau^2+6\mut^4\tau\eta^2\rho^2)
\left(4\frac{\beta ^2}{\alpha}\EEE[J(W)-J(W^*)] + 2\sigma_2^2\right)
+ \frac{2p\mut^3\tau^2\sigma_1^2}{ R}
\\
& \quad+ 12\mut^2\tau \beta_J\EEE[J(W)-J(W^*)]
+ \frac{32\mut^3\tau\eta^2\rho^2  \sigma_2^2}{\alpha}
+ \frac{3\mut^2\tau(1+\mut^2\eta^2\rho^2)\sigma_1^2}{R}
\\
\nonumber
& \overset{\eqref{normalserverupdate}}{\leq}
\left(1-\frac{\mut\tau\alpha }{4}\right) \EEE \|W^{(t)}-W^*\|^2
- \tau\left[
2\mut - \mut^2\underbrace{\left(
\frac{128\eta\rho\beta^2}{\alpha^2}
+ 12\beta_J
+\frac{96\beta ^2}{\alpha}
+ \frac{32p\beta^2}{\alpha\eta\rho}
\right)}_{q}
\right]
\EEE[J(W^{(t)})-J(W^*)]
\\
&\qquad
+ \mut^3 \tau^2 \underbrace{\frac{p\left(8 R\sigma_2^2+ 2\sigma_1^2\right)}{R}}_{C_2}
+ \mut^2 \tau
\underbrace{
\frac{(64\eta\rho R+ 48)\sigma_2^2 + 15\sigma_1^2}{\alpha R}
}_{C_1}.
\end{align}

Let $\mu\leq \frac{\mut_1}{R}$. Then $\mut\leq \mut_1 = \min\left\{\frac{1}{q},\frac{2}{\eta\rho}\right\} \leq \frac{1}{q}$, which implies that $2\mut-\mut^2 q\geq \mut$, and so
\begin{align}\label{VarianceW2}
\EEE\|W^{(t+1)}-W^*\|^2
\leq \left(1-\frac{\mut\tau\alpha}{4}\right) \EEE\|W^{(t)}-W^*\|^2
- \mut \tau\EEE[J(W^{(t)})-J(W^*)] + \mut^3  \tau^2 C_2 + \mut^2  \tau C_1.
\end{align}


Recalling that $\Delta^{(t)} =\|W^{(t)}-W^*\|^2$, rearranging the terms, and multiplying both sides of \eqref{VarianceW2} with $\frac{\theta^{(t)}}{\mut \tau\Theta_T}$, where $\Theta_T=\sum_{t=0}^{T-1}\theta^{(t)}$, we obtain that
\begin{align}
\nonumber
\sum_{t=0}^{T-1}\frac{\theta^{(t)}\EEE [J(W^{(t)})]}{\Theta_T} - J(W^*)
&\leq
\sum_{t=0}^{T-1}\EEE
\left[
\left(1-\frac{\mut\tau\alpha}{4}\right)
\frac{\theta^{(t)}\Delta^{(t)}}{\mut \tau\Theta_T}
- \frac{\theta^{(t)}\Delta^{(t+1)}}{\mut \tau\Theta_T}
\right]
+ \mu^2  \tau C_2  + \mut C_1
\\
\label{telescope}
&= \sum_{t=0}^{T-1}\EEE
\left[\frac{\theta^{(t-1)}\Delta^{(t)}-\theta^{(t)}\Delta^{(t+1)}}{\mut \tau \Theta_T}\right] + \mu^2  \tau C_2  + \mut C_1
\\
\nonumber
& = \frac{1}{\mut \tau\Theta_T}\Delta^{(0)} - \frac{\theta^{(T-1)}}{\mut \tau\Theta_T}\EEE\Delta^{(T)} + \mut^2 \tau C_2 + \mut C_1
\\
\label{varianceJ}
& \leq \frac{1}{\mut \tau\Theta_T}\Delta^{(0)}  + \mut^2 \tau C_2  + \mut C_1.
\end{align}
Here, \eqref{telescope} follows from the fact that $\left(1-\frac{\mut\tau\alpha}{4}\right)\theta^{(t)}=\theta^{(t-1)}$ due to $\theta^{(t)}=\left(1-\frac{\mut\tau\alpha}{4}\right)^{-(t+1)}$. We then have
\begin{align}\label{ThetaT}
\nonumber
\Theta_T
&
= \sum_{t=0}^{T-1}\left(1-\frac{\mut\tau\alpha}{4}\right)^{-(t+1)}
\\
\nonumber
&
=\left(1-\frac{\mut\tau\alpha}{4}\right)^{-T}\sum_{t=0}^{T-1}\left(1-\frac{\mut\tau\alpha}{4}\right)^t
\\
\nonumber
&
=\left(1-\frac{\mut\tau\alpha}{4}\right)^{-T}
\frac{1-\left(1-\frac{\mut\tau\alpha}{4}\right)^T}{\frac{\mut\tau\alpha}{4}}.
\end{align}

Now, let $T \geq \frac{4N}{\mut_1\alpha S}$. Then $\left(1-\frac{\mut\tau\alpha}{4}\right)^T\leq \exp\left(-\frac{\mut\tau\alpha T}{4}\right)\leq\exp(-1)\leq\frac{3}{4}$, and thus
\begin{align}
\Theta_T \geq  \left(1-\frac{\mut\tau\alpha}{4}\right)^{-T} \frac{1}{\mut\tau\alpha} = \frac{\theta^{(T-1)}}{\mut\tau\alpha},
\end{align}
which yields $\frac{1}{\mut \tau\Theta_T}\leq \frac{\alpha}{\theta^{(T-1)}}
\leq\alpha e^{-\frac{\mut\tau\alpha T}{4}}$. Therefore, \eqref{varianceJ} becomes
\begin{align}
\label{varianceJ2}
\sum_{t=0}^{T-1}\frac{\theta^{(t)}\EEE [J(W^{(t)})]}{\Theta_T} - J(W^*)
& \leq \alpha\Delta^{(0)} e^{-\frac{\mut\tau\alpha T}{4}} + \mut^2 \tau C_2 + \mut C_1,
\end{align}
which together with the convexity of $J$ implies that
\begin{align}
\label{EvarianceJ}
\EEE\left[J(\widetilde{W}^{(T)})-J(W^*)\right] =\EEE\left[J\left(\sum_{t=0}^{T-1}\frac{\theta^{(t)}}{\Theta_T}W^{(t)}\right)\right]-J(W^*)
 \leq \alpha\Delta^{(0)} e^{-\frac{\mut\tau\alpha T}{4}}  + \mut^2 \tau C_2 + \mut C_1.
\end{align}
Following the same of approach in \cite{sai20ICML,arjevani20a,stich19,kulunchakov20}, we consider the following cases.
\begin{itemize}
  \item If $\mut_1 \geq \widehat{\mu}:=
  \max\left\{\frac{4}{\alpha \tau T},
  \frac{4}{\alpha \tau T}\log\left(\frac{\alpha^2 \tau \Delta^{(0)} T}{C_1}\right)\right\}$, then we choose $\mu=\widehat{\mu}$ and have
  \begin{align}\label{}
  \EEE\left[J(\widetilde{W}^{(T)})-J(W^*)\right]
 \leq \widetilde{\OO}\left(\frac{C_2}{\alpha^2 \tau T^2}\right)
 + \widetilde{\OO}\left(\frac{C_1}{\alpha \tau T}\right).
  \end{align}
  \item If $\frac{4}{\alpha \tau T}\leq\mut_1\leq \widehat{\mu}$,
      then we choose $\mu=\mut_1$ and have
  \begin{align}\label{}
  \EEE\left[J(\widetilde{W}^{(T)})-J(W^*)\right]
 \leq \OO\left(\alpha\Delta^{(0)} e^{-\frac{\mut_1\alpha \tau T}{4}}\right)
 + \widetilde{\OO}\left(\frac{C_2}{\alpha^2 \tau T^2}\right)
 + \widetilde{\OO}\left(\frac{C_1}{\alpha \tau T}\right).
  \end{align}
\end{itemize}
By combining the above two cases,
\begin{align}\label{}
\EEE\left[J(\widetilde{W}^{(T)})-J(W^*)\right]
\leq \widetilde{\OO}
\left(
\alpha\Delta^{(0)} e^{-\frac{\mut_1\alpha \tau T}{4}}
+ \frac{R\sigma_2^2+\sigma_1^2}{(\alpha T)^2RS}
+ \frac{R\sigma_2^2+\sigma_1^2}{\alpha TRS}
\right),
\end{align}
which implies \eqref{theorem1a}. The remaining conclusion directly follows from \eqref{theorem1a}.

\subsection{Convergence of \OurAlg and \dFed for Nonconvex Cases (Proof of Theorem~\ref{theorem2})}
By Lemma~\ref{lemma:Jsmooth}\ref{lemma:Jsmooth_distanceJ} and \eqref{Wnew},
\begin{align}
\label{start2}
\nonumber
\EEE \left[J(W^{(t+1)}) - J(W^{(t)})\right]
&\leq
\EEE\langle \nabla J(W^{(t)}),W^{(t+1)}-W^{(t)} \rangle
+ \frac{\beta_J}{2}\EEE\|W^{(t+1)}-W^{(t)}\|^2
\\
\nonumber
&= - \EEE
\left\langle
\nabla J(W^{(t)}),Z^{(t)}
\right\rangle
+ \frac{\mut^2\beta_J}{2}\EEE\|Z^{(t)}\|^2
\\
\nonumber
& =
\underbrace{-\EEE\left\langle
\nabla J(W^{(t)}),\frac{\mut \tau}{R}\sum_{r=0}^{R-1}\nabla J(W_{r}^{(t)})
\right\rangle}_{T_2}
+ \underbrace{\EEE\left\langle
\nabla J(W^{(t)}),\frac{\mut\eta \tau\LL}{R}\sum_{r=0}^{R-1}(W_r^{(t)}-W^{(t)})
\right\rangle}_{T_3}
\\
&\quad
+ \underbrace{\EEE\left\langle
\nabla J(W^{(t)}),\frac{\mut^2\eta \widetilde{S}^{(t)}\LL}{R}\sum_{r=0}^{R-1}\widetilde{S}^{(t)}\nabla F(W_{r}^{(t)})
\right\rangle}_{T_4}
+ \frac{\mut^2\beta_J}{2}\EEE\|Z^{(t)}\|^2.
\end{align}

Using the fact that $-xy \leq \frac{-2xy+y^2}{2} = \frac{-x^2 + (y-x)^2}{2}, \forall x,y \in\RRR$, then Lemma~\ref{Jensen} and Lemma~\ref{lemma:Jsmooth}\ref{lemma:Jsmooth_distanceGDJ}, we have
\begin{align}\label{}
\nonumber
T_2
&\leq
- \frac{\mut \tau}{2}\EEE \|\nabla J(W^{(t)})\|^2
+ \frac{\mut \tau}{2} \EEE\left\|\frac{1}{R}\sum_{r=0}^{R-1}\nabla J(W_r^{(t)})-\nabla J(W^{(t)})\right\|^2
\\
\nonumber
& \leq
- \frac{\mut \tau}{2}\EEE \|\nabla J(W^{(t)})\|^2
+ \frac{\mut \tau}{2R}\sum_{r=0}^{R-1} \EEE\left\|\nabla J(W_r^{(t)})-\nabla J(W^{(t)})\right\|^2
\\
\nonumber
& \leq
- \frac{\mut \tau}{2}\EEE \|\nabla J(W^{(t)})\|^2
+ \frac{\mut \tau\beta_J^2}{2R}\sum_{r=0}^{R-1} \EEE\|W_r^{(t)}-W^{(t)}\|^2
\\
& =
- \frac{\mut \tau}{2}\EEE \|\nabla J(W^{(t)})\|^2
+ \frac{\mut \tau\beta_J^2}{2}\EE^{(t)}.
\end{align}

For the terms $T_3$, by Lemmas~\ref{Young}\ref{Young_XY}, and~\ref{Jensen},
\begin{align}\label{T3}
\nonumber
T_3
&\leq \frac{z\mut \tau}{2}\EEE \|\nabla J(W^{(t)})\|^2
+ \frac{\mut \tau}{2z}\EEE \left\|\frac{\eta\LL}{R}\sum_{r=0}^{R-1}(W_r^{(t)}-W^{(t)})\right\|^2
\\
\nonumber
& \leq
\frac{z\mut \tau}{2}\EEE \|\nabla J(W^{(t)})\|^2
+ \frac{\mut \tau\eta^2}{2zR}\sum_{r=0}^{R-1}\|\LL\|^2\EEE\|W_r^{(t)}-W^{(t)}\|^2
\\
&= \frac{z\mut \tau}{2}\EEE \|\nabla J(W^{(t)})\|^2
+ \frac{\mut \tau\eta^2\rho^2}{2z}\EE^{(t)},
\end{align}
where $z >0$ will be chosen later.

For the terms $T_4$, using Lemmas~\ref{Sproperty}\ref{Sproperty_ST},~\ref{Young}\ref{Young_XY},~\ref{Jensen},~\ref{Sproperty}\ref{Sproperty_normSY},~\ref{lemma:Jsmooth}\ref{lemma:Jsmooth_normGDJ2},
we derive that
\begin{align}\label{T4first}
\nonumber
T_4
& =
\mut\EEE\left\langle
\widetilde{S}^{(t)}\nabla J(W^{(t)}),\frac{\mut\eta \LL}{R}\sum_{r=0}^{R-1}\widetilde{S}^{(t)}\nabla F(W_{r}^{(t)})
\right\rangle
\\
\nonumber
& \leq
\frac{s\mut }{2}\EEE \|\widetilde{S}^{(t)}\nabla J(W^{(t)})\|^2
+ \frac{\mut^3\eta^2}{2s}\EEE\left\|\frac{\LL}{R}\sum_{r=0}^{R-1}
\widetilde{S}^{(t)}\nabla F(W_r^{(t)})
\right\|^2
\\
\nonumber
& \leq
\frac{s\mut }{2}\EEE \|\widetilde{S}^{(t)}\nabla J(W^{(t)})\|^2
+ \frac{\mut^3\eta^2 }{2sR}\sum_{r=0}^{R-1}\|\LL\|^2\EEE
\left\|
\widetilde{S}^{(t)}\nabla F(W_r^{(t)})
\right\|^2
\\
\nonumber
&\leq
\frac{s\mut \tau}{2}\EEE \|\nabla J(W^{(t)})\|^2
+ \frac{\mut^3\tau\eta^2 \rho^2}{2sR}
\sum_{r=0}^{R-1}\EEE
\|\nabla F(W_r^{(t)})\|^2
\\
\nonumber
& \leq
\frac{s\mut \tau}{2}\EEE \|\nabla J(W^{(t)})\|^2
+ \frac{\mut^3\tau\eta^2 \rho^2}{2sR}
\sum_{r=0}^{R-1}
\left(2\beta^2\EEE\|W_r^{(t)}- W^{(t)}\|^2 +2\|\nabla F(W^{(t)})\|^2\right),
\\
& = \frac{s\mut \tau}{2}\EEE \|\nabla J(W^{(t)})\|^2
+ \frac{\mut^3\tau\eta^2\rho^2\beta^2}{s}\EE^{(t)}
+\frac{\mut^3\tau\eta^2 \rho^2}{s}\|\nabla F(W^{(t)})\|^2,
\end{align}
where $s >0$ will be chosen later.
By the definition of $\sigma_2$ in Lemma~\ref{lemma:Boundedgradient}, it holds that $\|\nabla F(W)\|^2 \leq \sigma_2^2 + \|\nabla J(W)\|^2$, which together with \eqref{normalserverupdate} and \eqref{T4first} yields
\begin{align}\label{T4}
\nonumber
T_4 & \leq
\frac{s\mut \tau}{2}\EEE\|\nabla J(W^{(t)})\|^2
+ \frac{\mut^3\tau\eta^2 \rho^2}{sR}
\sum_{r=0}^{R-1}
\left(
\beta ^2\EEE\|W_r^{(t)}- W^{(t)}\|^2
+ \EEE\|\nabla J(W^{(t)}) \|^2 + \sigma_2^2
\right)
\\
& \leq
\mut \tau\left(\frac{s}{2} + \frac{2\mut\eta\rho}{s}\right)
\EEE\|\nabla J(W^{(t)})\|^2
+ \frac{4\mut\tau\beta ^2}{s}\EE^{(t)}
+ \frac{2\mut^2\tau\eta\rho \sigma_2^2}{sR},
\end{align}

Choosing $z=s=\frac{1}{4}$, combining \eqref{start2}--\eqref{T4first} with Lemma~\ref{lemma:EZ}, and then using Lemma~\ref{lemma:boundeddrift}, we obtain that
\begin{align}\label{start3}
\nonumber
\EEE[J(W^{(t+1)}) - J(W^{(t)})]
&\leq -\tau\left(\frac{\mut}{4}-3\mut^2\beta_J\right)
\EEE\|\nabla J(W^{(t)})\|^2
+\left(4\mut^3\tau\eta^2 \rho^2
+ 3\mut^4\tau\eta^2\rho^2\beta_J
\right)
\|\nabla F(W^{(t)})\|^2
\\
\nonumber
&\quad
+ \mut\tau \left(\frac{\beta_J^2}{2}
+ 2\eta^2\rho^2
+ 4\mut^2\eta^2\rho^2\beta^2
+ 3\mut\beta_J^3+\frac{3}{2}\mut\eta^2\rho^2\beta_J+3\mut^3\eta^2\rho^2\beta_J\beta^2\right)\EE^{(t)}
\\
\nonumber
& \quad
+ \frac{3\mut^2\tau(1+\mut^2\eta^2\rho^2)\beta_J\sigma_1^2}{2R}
\\
\nonumber
&\leq
-\tau\left(\frac{\mut}{4}-3\mut^2\beta_J\right)
\EEE\|\nabla J(W^{(t)})\|^2
+\left(4\mut^3\tau\eta^2 \rho^2
+ 3\mut^4\tau\eta^2\rho^2\beta_J
\right)
\|\nabla F(W^{(t)})\|^2
\\
&\quad
+ \mut\tau u \left(4\mut^2\tau\EEE\|\nabla F(W^{(t)})\|^2
+ \frac{2\mut^2\tau\sigma_1^2}{ R}\right)
+ \frac{3\mut^2\tau(1+\mut^2\eta^2\rho^2)\beta_J\sigma_1^2}{2R},
\end{align}
where we use \eqref{normalserverupdate} to estimate
\begin{align}
\nonumber
&\frac{\beta_J^2}{2}
+ 2\eta^2\rho^2
+ 4\mut^2\eta^2\rho^2\beta^2
+ 3\mut\beta_J^3+\frac{3}{2}\mut\eta^2\rho^2\beta_J+3\mut^3\eta^2\rho^2\beta_J\beta^2 \\
&\leq u =\frac{\beta_J^2}{2} + 2\eta^2\rho^2 + 16\beta ^2
+ \frac{6\beta_J^3}{\eta\rho} + 3\eta\rho\beta_J + \frac{24\beta_J\beta ^2}{\eta\rho}.
\end{align}

By the definition of $\sigma_2$ in Lemma~\ref{lemma:Boundedgradient}, it holds that $\|\nabla F(W)\|^2 \leq \sigma_2^2 +\|\nabla J(W)\|^2$, which together with \eqref{start3} and \eqref{normalserverupdate} yields
\begin{align}
\nonumber
\EEE [J(W^{(t+1)}) - J(W^{(t)})]
&\leq -\tau\left(\frac{\mut}{4}-3\mut^2\beta_J\right)
\EEE\|\nabla J(W^{(t)})\|^2
+\left(4\mut^3\tau\eta^2 \rho^2
+ 3\mut^4\tau\eta^2\rho^2\beta_J
\right)
\left(\sigma_2^2 +\EEE\|\nabla J(W)\|^2\right)
\\
\nonumber
&\quad
+ \mut^3\tau^2 u \left[4\left(\sigma_2^2 +\EEE\|\nabla J(W)\|^2\right)
+ \frac{2\sigma_1^2}{ R}\right]
+ \frac{3\mut^2\tau(1+\mut^2\eta^2\rho^2)\beta_J\sigma_1^2}{2R}
\\
\nonumber
&\overset{ \tau\leq 1}{\leq}
-\tau\left(
\frac{\mut}{4}
-8\mut^2\tau\eta \rho
-3\mut^2\beta_J
- 3\mut^2\tau\beta_J
- \frac{8\mut^2 u}{\eta\rho}
\right)
\EEE\|\nabla J(W^{(t)})\|^2
\\
&\quad +\mut^3 \tau^2\underbrace{ \frac{u\left(4R\sigma_2^2+ 2\sigma_1^2\right)}{ R}}_{C_4}
+ \mut^2 \tau
\underbrace{
\frac{(8\eta\rho+ 12\beta_J)R\sigma_2^2+27\beta_J\sigma_1^2}{2R}
}_{C_3}
\end{align}


Now, let
\begin{align}\label{tildemu}
\mut\leq \mut_2 = \min\left\{\frac{2}{\eta\rho}, \frac{1}{v}\right\}=
\min\left\{\frac{2}{\eta\rho}, \frac{1}{8\left(8\eta\rho+3\beta_J+12\beta_J+\frac{8u}{\eta\rho}\right)}\right\}.
\end{align}
Then $-\tau\left(
\frac{\mut}{4}-8\mut^2\eta\rho- 3\mut^2\beta_J - 12\mut^2\beta_J - \frac{8\mut^2 u}{ \eta\rho}
\right)\leq - \frac{\mut \tau}{8}$, and so
\begin{align}\label{distanceJ2}
\EEE \left[J(W^{(t+1)}) - J(W^{(t)})\right]
\leq  -\frac{\mut \tau}{8}\EEE\|\nabla J(W^{(t)})\|^2
+ \mut^3 \tau^2C_4
+ \mut^2 \tau C_3.
\end{align}
By re-arranging the terms of \eqref{distanceJ2} and telescoping, we have
\begin{align}\label{}
\nonumber
\frac{1}{8T}\sum_{t=0}^{T-1}\EEE\|\nabla J(W^{(t)})\|^2
&\leq \frac{\EEE\left[J(W^{(0)})-J(W^{(T)})\right]}{\mu \tau T}
+ \mut^2 \tau C_4
+ \mut C_3
\\
&\leq \frac{\EEE\left[J(W^{(0)})-J(W^*)\right]}{\mu \tau T}
+ \mut^2 \tau C_4
+ \mut C_3.
\end{align}
Let $\Delta_J :=J(W^{(0)})-J(W^*)$. Following the same of approach in \cite{sai20ICML,arjevani20a,stich19,kulunchakov20}, we consider the following cases.
\begin{itemize}
  \item If $\mut_2^3\leq \frac{\Delta_J}{C_4 \tau^2 T}$ and $\mut_2^2\leq \frac{\Delta_J}{C_3 \tau T}$, then we choose $\mut=\mut_2$ to get
  \begin{align}\label{}
  \frac{1}{8T}\sum_{t=0}^{T-1}\EEE\|\nabla J(W^{(t)})\|^2
    \leq \frac{\Delta_J}{\mut_2 \tau T}
    + \frac{\Delta_J^{\frac{2}{3}} C_4^{\frac{1}{3}}}{\tau^{\frac{1}{3}}T^{\frac{2}{3}}}
    + \frac{\Delta_J^{\frac{1}{2}}C_3^{\frac{1}{2}}}{ \tau^{\frac{1}{2}}T^{\frac{1}{2}}}.
  \end{align}
  \item If $\mut_2^3\geq \frac{\Delta_J}{C_4 \tau^2 T}$ or $\mut_2^2\geq \frac{\Delta_J}{C_3 \tau T}$, then we choose $\mut=\min\left\{\left(\frac{\Delta_J}{C_4 \tau^2 T}\right)^{\frac{1}{3}},\left(\frac{\Delta_J}{C_5 \tau T}\right)^{\frac{1}{2}}\right\}$ to get
  \begin{align}\label{}
  \frac{1}{8T}\sum_{t=0}^{T-1}\EEE\|\nabla J(W^{(t)})\|^2
 \leq \frac{\Delta_J^{\frac{2}{3}} C_4^{\frac{1}{3}}}{\tau^{\frac{1}{3}}T^{\frac{2}{3}}}
    + \frac{\Delta_J^{\frac{1}{2}}C_3^{\frac{1}{2}}}{ \tau^{\frac{1}{2}}T^{\frac{1}{2}}}.
  \end{align}
\end{itemize}
Combining two cases, and with $t^*$ uniformly sampled from $\{0,\dots,T-1\}$, we have
\begin{align}\label{}
\nonumber
\frac{1}{T}\sum_{t=0}^{T-1}\EEE\|\nabla J(W^{(t)})\|^2
= \EEE\|\nabla J(W^{(t^*)})\|^2
&\leq \OO\left(\frac{\Delta_J}{\mut_2 \tau T}
    + \frac{\Delta_J^{\frac{2}{3}} C_4^{\frac{1}{3}}}{\tau^{\frac{1}{3}}T^{\frac{2}{3}}}
    + \frac{\Delta_J^{\frac{1}{2}}C_3^{\frac{1}{2}}}{ \tau^{\frac{1}{2}}T^{\frac{1}{2}}}\right)
\\
& = \OO\!\left(\frac{\Delta_J}{TS}
    \!+\! \frac{\Delta_J^{\frac{2}{3}} M^{\frac{2}{3}}}{T^{\frac{2}{3}}(RS)^{\frac{1}{3}}}
    \!+\! \frac{\Delta_J^{\frac{1}{2}}M^2}{\sqrt{TRS}}\right).
\end{align}
This proves \eqref{theorem2a}, which, in turn, implies \eqref{theorem2b}.

\subsection{Additional Experimental Settings and Results}


\subsubsection{Statistics of All Datasets}
 We use four real datasets for the experiments including Human Activity Recognition, Vehicle sensor, MNIST, and CIFAR-10. The detailed statistics of all datasets are summarized in Table.~\ref{T:statistics}. 
\begin{table}[h]
	\centering
	\setlength\tabcolsep{3.5 pt} 
	\caption{  Statistics of all datasets using in the experiment.} 
	\begin{tabular}{l|cccll}
		\multirow{2}{*}{Dataset} & \multirow{2}{*}{ $N$} 
		&\multicolumn{1}{c}{\multirow{2}{*}{\begin{tabular}[c]{@{}c@{}}Total\\samples\end{tabular}}}
		&\multicolumn{1}{c}{\multirow{2}{*}{\begin{tabular}[c]{@{}c@{}}Num labels \\ / client\end{tabular}}}
		&\multicolumn{2}{l}{Samples / client}   \\ 
		& &                                &                              & \multicolumn{1}{c}{Mean} & \multicolumn{1}{c}{Std}\\ \hline
		\begin{tabular}[c]{@{}c@{}} Human\\ Activity\end{tabular}           & 30                             &     10,299        &      6            &              \multicolumn{1}{c}{343}  &   \multicolumn{1}{c}{35.1   }          \\ \hline
		\begin{tabular}[c]{@{}c@{}} Vehicle\\ Sensor\end{tabular}          & 23                             &             48,303      &    2        &         \multicolumn{1}{c}{2,100 }&  \multicolumn{1}{c}{380.5    }                   \\ \hline
		MNIST                     & 100                            &      61,866          &     2          &          \multicolumn{1}{c}{619  }       & \multicolumn{1}{c}{343.8}          \\ \hline
		CIFAR-10                  & 20                             &       54,572         &         3     &                \multicolumn{1}{c}{2729 }&\multicolumn{1}{c}{ 851.4}  \\ \hline           
	\end{tabular}
	\label{T:statistics}
\end{table}
\subsubsection{Learning Tasks at Local Clients}
\begin{itemize}
	\item \textbf{Strongly convex setting}: We use a multinomial logistic regression model (MLR) with a cross-entropy loss function and a $L_2$-regularization term for all strongly convex experiments on Human Activity, Vehicle Sensor, and MNIST datasets. The loss function at each client is defined as follow:
	\begin{align}
		F_k (w)  = \frac{-1}{D_k}{\sum_{j=1}^{D_k}\sum_{c=1}^{C}1_{\{y_j = c\}} \log\frac{\exp(\innProd{a_j, w_{c}})}{\sum_{i=1}^{C}\exp(\innProd{a_i, w_{i}})}}  +\frac{\alpha}{2}\sum_{c=1}^{C}\norm{w_c}_2^2. \nonumber		
	\end{align}
	
	\item \textbf{Nonconvex setting}: We use a simple DNN with one hidden layer, a ReLU activation function, and a softmax layer at the end of the network for Human Activity and Vehicle Sensor datasets. The size of hidden layer is 100 for Human Activity and 20 for Vehicle Sensor. In the case of MNIST, we use DNN with 2 hidden layers and both layers have the same size of 100.  For CIFAR-10, we follow the CNN structure of \cite{mcmahan17}. 
\end{itemize}
\begin{table}[t!]
	\centering
	\caption{ Performance comparison in mult-task setting without down-sampling data (All tasks participate, $R = 5$, $S=N$, mini-batch size $B = 20$, $\eta = 10^{-2}$, $T = 200$).}
	\begin{tabular}{l|l|ll}
		\multicolumn{1}{l|}{\multirow{2}{*}{Dataset} } & \multirow{2}{*}{Algorithm}  &  \multicolumn{2}{c}{Test accuracy}  \\ 
		&                                             & \multicolumn{1}{c}{Convex}        & \multicolumn{1}{c}{Nonconvex }   \\ \hline
		\multirow{4}{*}{\begin{tabular}[c]{@{}c@{}} Human\\ Activity\end{tabular}}       
		& \OurAlg              &  $\textbf{99.10}      \pm 0.18$  &  $\textbf{99.21}      \pm 0.15$                   \\
		& MOCHA         &          $ 98.79      \pm 0.04 $              \\
		& Local              &          $ 98.29    \pm 0.01$     &          $ 98.34    \pm 0.03$                \\
		& Global              		&  $93.79           		\pm 0.27$          &          $ 94.58    \pm 0.16$             \\ \hline
		\multirow{4}{*}{\begin{tabular}[c]{@{}c@{}} Vehicle\\ Sensor\end{tabular}  }       
		& \OurAlg          &        $ \textbf{91.16}   \pm 0.02$                &        $ \textbf{95.43}   \pm 0.09$           \\
		& MOCHA         &    $     90.94      \pm 0.05$        &           \\
		& Local           &          $ 88.16    \pm 0.05 $          &          $ 92.10    \pm 0.06 $             \\
		& Global          &  $80.21  \pm 0.12 $            &  $83.00  \pm 0.11 $                \\ \hline
		\multirow{4}{*}{\begin{tabular}[c]{@{}c@{}} MNIST\end{tabular}  }       
		& \OurAlg          &        $ \textbf{98.07}   \pm 0.02$            &        $ \textbf{98.61}   \pm 0.02$     \\
		& MOCHA         &     $   97.99  \pm 0.02 $                &    \\       
		& Local           &          $   97.95  \pm  0.01$       &    $97.99        \pm 0.02$             \\
		& Global          &  $92.04  \pm 0.02 $            &   $96.19  \pm 0.09		 $         \\ \hline
	\end{tabular}
	\label{T:compare_task1}	
\end{table}

\begin{table}[t!]
	\centering
	\caption{ Performance comparison of centralized setting without down-sampling data ($R = 5$, $ S = 0.1 N$, $B = 20$, $T = 200$).}
	\begin{tabular}{l|c|ll}
		\multirow{3}{*}{Dataset} & \multirow{3}{*}{Algorithm} & \multicolumn{2}{c}{Test Accurancy}                                                                                        \\
		&                            & \multicolumn{1}{c}{Convex} & \multicolumn{1}{c}{Non Convex} \\ \hline
		\multirow{4}{*}{CIFAR-10 }      
		& \OurAlg      &                &     $\textbf{79.40}    \pm 0.25 $                       \\
		& pFedMe    &             &    $    78.70 \pm 0.15 $                            \\
		& Per-FedAvg        &           &           $67.61     \pm 0.03 $                                   \\ 
		& FedAvg        &           		&  $36.32          		\pm 5.57$                     \\ \hline
		\multirow{4}{*}{MNIST}      
		& \OurAlg                  & $ \textbf{97.82}      \pm 0.02$                & $ 98.44     \pm 0.02$        \\
		& MOCHA      &       $ 97.80 \pm 0.02  $               \\	
		& pFedMe             &          $ 95.38    \pm 0.09$            &$\textbf{99.04} \pm 0.02$      \\
		& Per-FedAvg               &       $ 91.77     \pm 0.23  $ &       $ 97.59     \pm 0.30  $                \\
		& FedAvg            		&  $90.14         		\pm 0.61$     	&  $90.74         		\pm 1.62$                  \\ \hline
		\multirow{4}{*}{\begin{tabular}[c]{@{}c@{}} Vehicle\\ Sensor\end{tabular}  }       
		& \OurAlg             &        $ \textbf{89.84}   \pm 0.06$     &        $ \textbf{94.18}   \pm 0.08$          \\
		& MOCHA      &    $  89.73  \pm 0.89 $              	 \\
		& pFedMe         &            $85.87 \pm 0.02$                 &            $92.23 \pm 0.17$   		\\
		& Per-FedAvg           &         $82.21 \pm 0.01$    &         $87.50 \pm 1.21$               \\
		& FedAvg          &  $81.54  \pm 0.03 $    &  $85.61  \pm 0.07 $           \\ \hline
		\multirow{4}{*}{\begin{tabular}[c]{@{}c@{}} Human\\ Activity\end{tabular}}       
		& \OurAlg               &  $\textbf{97.75}      \pm 0.21$   &  $ \textbf{97.85}      \pm 0.39$      	 \\
		& MOCHA      &       $ 97.69    \pm 0.03 $               	\\
		& pFedMe             &          $ 97.52      \pm 0.09$         &          $ 97.60      \pm 0.09$             		\\
		& Per-FedAvg              &       $ 96.04      \pm 0.36  $    &       $ 96.21      \pm 0.33  $               \\
		& FedAvg             		&  $95.58           		\pm 0.05$  	&  $94.84           		\pm 0.07$           \\ \hline
	\end{tabular}
	\label{T:compare_performance2}	
\end{table}
\subsubsection{Performance of FedU in Federated Multi-Task Learning without down-sampling data}

The result in Table.~\ref{T:compare_task1}  shows that \OurAlg still achieves the highest performance, however,  the performance gaps between \OurAlg, MOCHA, and Local model are less appreciable. When the local data at a client is large enough, the Local model at one client can learn individually without contributions from others. Therefore, both \OurAlg and MOCHA will show advantages compared to the Local model in federated settings when there are various clients having a small number of data. 

\subsubsection{Comparison with Personalized Federated Learning Algorithms without down-sampling data}
Similar to the down-sampling data setting,  \OurAlg almost maintains the top performance in all scenarios showing in Table.~\ref{T:compare_performance2}. Only in the nonconvex case on MNIST, pFedMe performs slightly better than \OurAlg.

\subsubsection{Effect of non-i.i.d levels}
{To show the effect of different degrees of non-i.i.d  on \OurAlg, we did experiments on the MNIST dataset in Fig.~\ref{F:Mnist_effect} for illustration purposes. We use the Dirichlet distribution to generate MNIST non-i.i.d dataset with 100 clients following setting in \cite{hsuMeasuringEffectsNonIdentical2019,reddiADAPTIVEFEDERATEDOPTIMIZATION2021,wangTacklingObjectiveInconsistency2020}. Specifically, client's data is partitioned  by using Dirichlet distribution Dir$_N(\alpha)$, where $N=100$ is total number of clients and $\alpha$ is concentration parameter. In this setting, $\alpha  > 0$  is to control the identicalness among clients. When $\alpha \rightarrow \infty$, all clients have identical distributions, corresponding to the i.i.d setting. By contrast, when $\alpha \rightarrow 0$, each client holds examples from only one class chosen randomly \cite{hsuMeasuringEffectsNonIdentical2019}. In our experiment, we consider three different values for $\alpha \in  \{0.01, 0.1, 10.0\}$ to generate populations covering a spectrum of identicalness.
}
{In Fig.~\ref{F:Mnist_effect}, we consider the network of 100 clients and increase the level of non-i.i.d from left to right. When the level of non-i.i.d increases, personalized algorithms are more stable than traditional federated learning algorithms like FedAvg or FedProx. Importantly, our proposed algorithm \OurAlg performs well in all settings compared to other algorithms and is not much affected by the high degree of non-i.i.d.
}
\begin{figure*}[!t]
	\centering
	\begin{subfigure}{1\textwidth} 
		\centering
		{\includegraphics[scale=0.39]{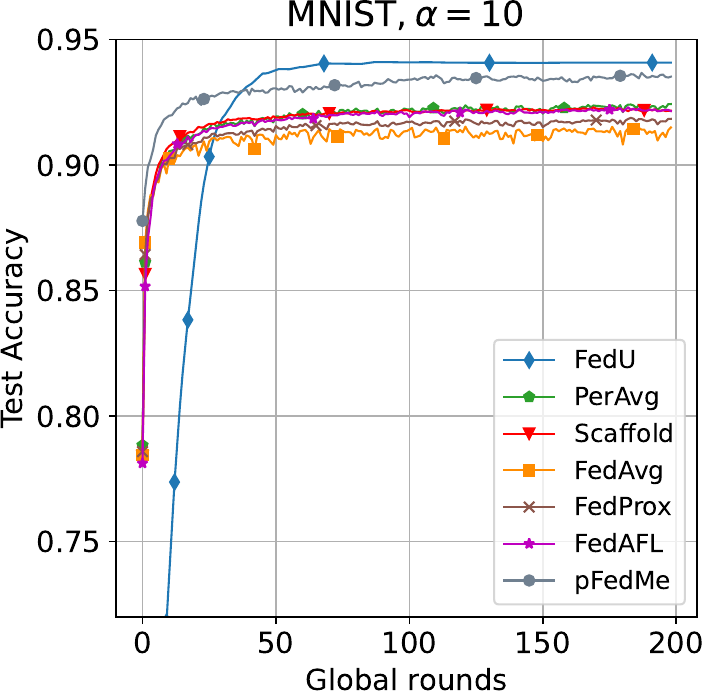}}\quad
		{\includegraphics[scale=0.39]{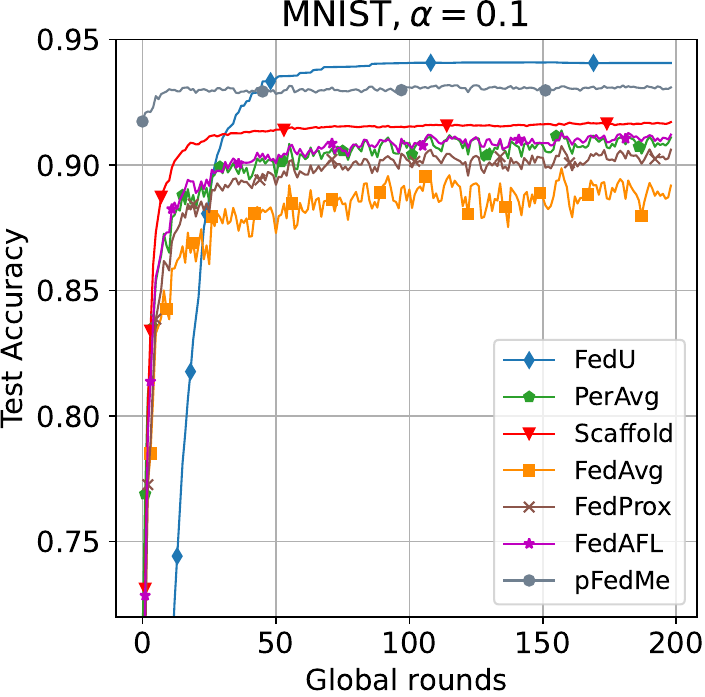}}\quad
		{\includegraphics[scale=0.39]{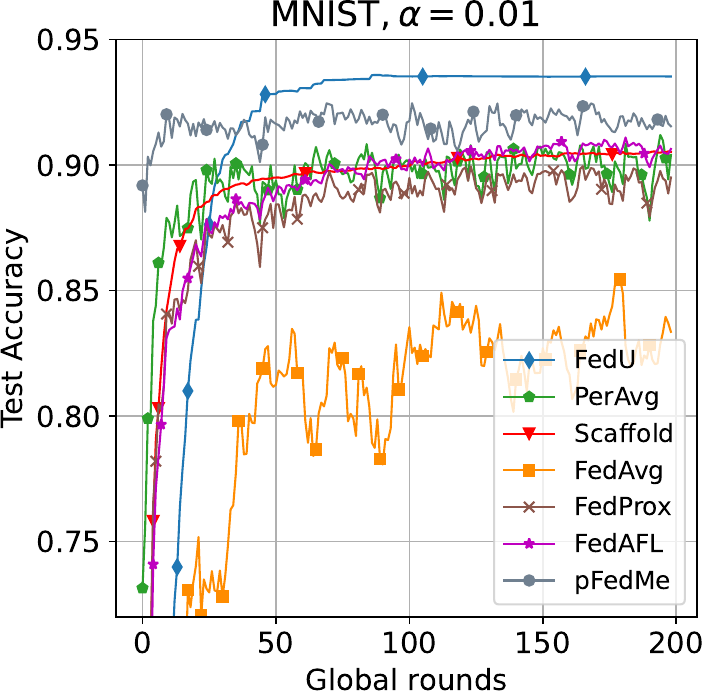}}
		{\includegraphics[scale=0.4]{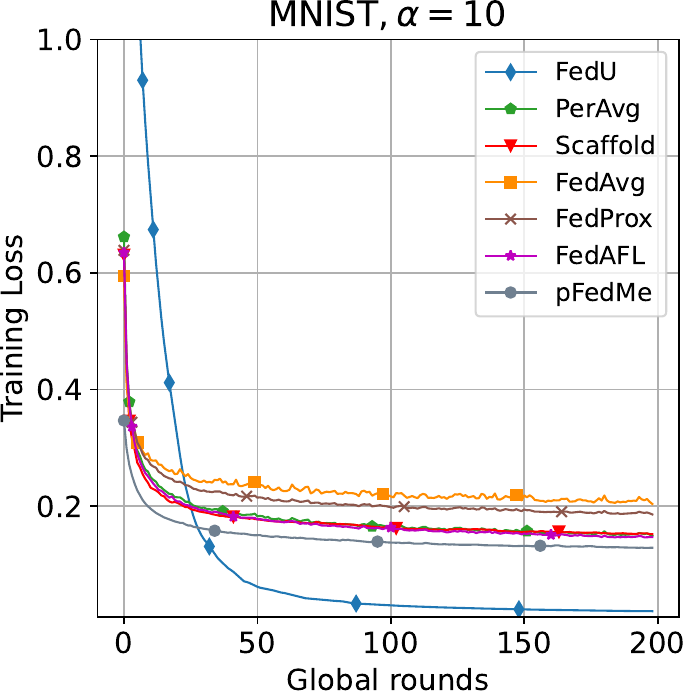}}\quad
		{\includegraphics[scale=0.4]{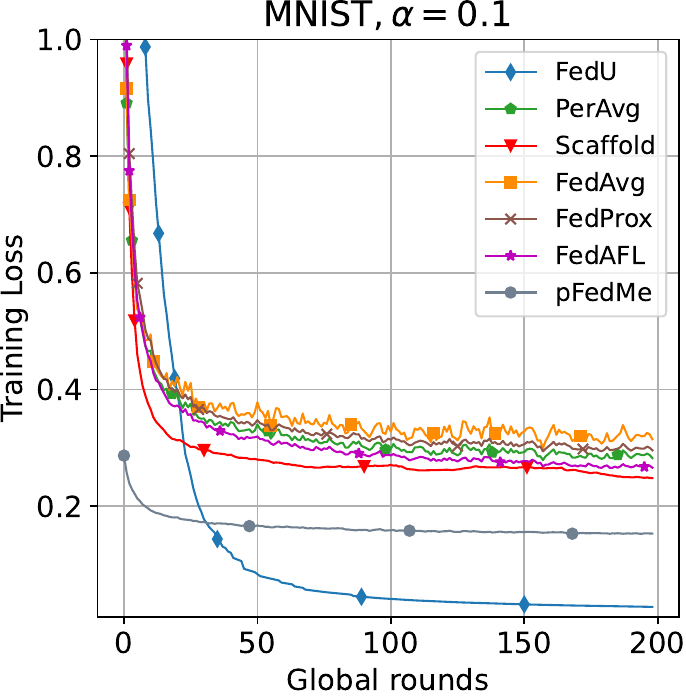}}\quad
		{\includegraphics[scale=0.4]{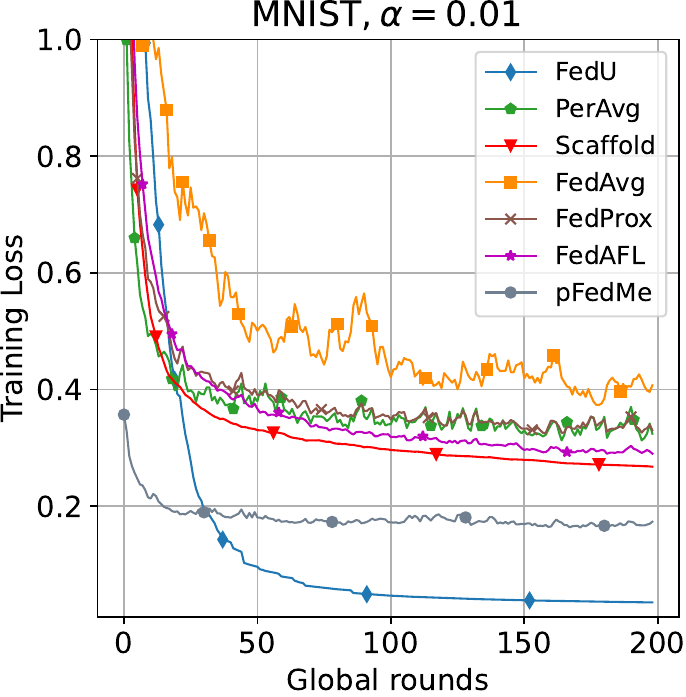}}
	\end{subfigure}
	\caption{{Effect of different non-i.i.d degrees on Federated Learning algorithms. $\alpha$ is the concentration parameter to control the level of non-i.i.d.}} \label{F:Mnist_effect}
\end{figure*}
%

		
	\end{document}